\documentclass[aos]{imsart}
\RequirePackage[OT1]{fontenc}
\RequirePackage{amsthm,amsmath,mathrsfs,multirow,morefloats,booktabs,subfigure,pdfpages,color,algorithm,array,hhline,makecell,epstopdf,algorithm}
\RequirePackage[numbers]{natbib}
\RequirePackage[colorlinks,citecolor=blue,urlcolor=blue]{hyperref}
\RequirePackage{algorithm,algpseudocode}
\algrenewcommand\algorithmicrequire{\textbf{Input:}}
\algrenewcommand\algorithmicensure{\textbf{Output:}}
\usepackage{amsmath,amssymb}
\usepackage{dsfont}

% settings
%\pubyear{2005}
%\volume{0}
%\issue{0}
%\firstpage{1}
%\lastpage{8}
\startlocaldefs
\numberwithin{equation}{section}
\theoremstyle{plain}
\newtheorem{thm}{Theorem}[section]
\newtheorem{defin}[thm]{Definition}
\newtheorem{lem}[thm]{Lemma}

\newtheorem{rem}[thm]{Remark}
\newtheorem{cor}[thm]{Corollary}

\endlocaldefs
\allowdisplaybreaks[4]
\usepackage{titlesec}
\titleformat*{\section}{\bfseries\Large\rmfamily}
\titleformat*{\subsection}{\bfseries\large\rmfamily}
\titleformat*{\subsubsection}{\bfseries\normalsize\rmfamily}
\begin{document}
\begin{frontmatter}
\title{\Large
A useful criterion on studying consistent estimation in community detection}
%\thankstext{T1}{Footnote to the title with the ``thankstext'' command.}
\begin{aug}
%%%%%%%%%%%%%%%%%%%%%%%%%%%%%%%%%%%%%%%%%%%%%%
%%Only one address is permitted per author. %%
%%Only division, organization and e-mail is %%
%%included in the address.                  %%
%%Additional information can be included in %%
%%the Acknowledgments section if necessary. %%
%%%%%%%%%%%%%%%%%%%%%%%%%%%%%%%%%%%%%%%%%%%%%%
\author[A]{\fnms{Huan} \snm{Qing}\ead[label=e1]{qinghuan@cumt.edu.cn}}
%\thankstext{t1}{Support}
\runauthor{H.Qing}
%\affiliation{China University of Mining and Technology\thanksmark{m1}
%	\and
%	Nankai University\thanksmark{m2}
%}
%\author[B]{\fnms{???} \snm{???}\ead[label=e2,mark]{???@???}}
%\and
%\author[B]{\fnms{???} \snm{???}\ead[label=e3,mark]{???@???}}
%%%%%%%%%%%%%%%%%%%%%%%%%%%%%%%%%%%%%%%%%%%%%%
%% Addresses                                %%
%%%%%%%%%%%%%%%%%%%%%%%%%%%%%%%%%%%%%%%%%%%%%%
\address[A]{
%	H.Qing\\
	School of Mathematics,
	China University of Mining and Technology,
%	Xuzhou, 221116\\
%	P.R. China\\
	\printead{e1}}
\end{aug}
\begin{abstract}
%Abstract is about 200 characters long.
%Paragraph 1. Data and problem met
%Paragraph 2. Our contribution\\
\begin{center}
\textbf{Abstract}
\end{center}
In network analysis, developing a unified theoretical framework that can compare methods under different models is an interesting problem. This paper proposes a partial solution to this problem. We summarize the idea of using separation condition for a standard network and sharp threshold of Erd\"os-R\'enyi random graph to study consistent estimation, compare theoretical error rates and requirements on network sparsity of spectral methods under models that can degenerate to stochastic block model as a four-step criterion SCSTC. Using SCSTC, we find some inconsistent phenomena on separation condition and sharp threshold in community detection. Especially, we find  original theoretical results of the SPACL algorithm introduced to estimate network memberships under the mixed membership stochastic blockmodel were sub-optimal. To find the formation mechanism of inconsistencies, we re-establish theoretical convergence rates of this algorithm by applying recent techniques on row-wise eigenvector deviation. The results are further extended to the degree corrected mixed membership model. By comparison, our results enjoy smaller error rates, lesser dependence on the number of communities, weaker requirements on network sparsity, and so forth. Furthermore, separation condition and sharp threshold obtained from our theoretical results match classical results, which shows the usefulness of this criterion on studying consistent estimation.
\end{abstract}
%\begin{keyword}[class=MSC]
%\kwd[Primary ]{60K35}
%\kwd{60K35}
%\kwd[; secondary ]{60K35}
%\end{keyword}
\begin{keyword}
\kwd{Community detection, consistency,  mixed membership network, separation condition, sharp threshold}
\end{keyword}
\end{frontmatter}
%\tableofcontents
\section{Introduction}\label{sec1}
Estimating mixed memberships of network whose node may belong to multiple communities has received a lot of attention \cite{MMSB,ball2011efficient,wang2011community,gopalan2013efficient,airoldi2013multi,anandkumar2014a,kaufmann2017a,panov2017consistent,OCCAM,MixedSCORE,GeoNMF,MaoSVM,mao2020estimating}. To capture the structure of network with mixed memberships, \cite{MMSB} proposes the popular mixed membership stochastic blockmodel (MMSB), which is an extension of the famous stochastic blockmodels \cite{SBM} for non-overlapping networks. It is well known that the degree corrected stochastic blockmodel (DCSBM) \cite{DCSBM} is an extension of SBM by considering degree heterogeneity of nodes to fit the real world networks with various nodes degrees, similarly, \cite{MixedSCORE} proposes a model named degree corrected mixed membership (DCMM) model as is an extension of MMSB by considering degree heterogeneity of nodes. There are alternative models based on MMSB such as the OCCAM model of \cite{OCCAM} and the stochastic blockmodel with overlap (SBMO) proposed of \cite{kaufmann2017a} which can also model networks with mixed memberships. As discussed in Section \ref{intDCMM}, OCCAM equals DCMM while SBMO is a special case of DCMM. For these models, many researchers focus on designing algorithms with provable consistent theoretical guarantees. \cite{lei2015consistency} studies the consistences of two spectral clustering algorithms under SBM and DCSBM. \cite{mao2020estimating} designs an algorithm SPACL based on the finding that there exists simplex structure in the eigen-decomposition of the population adjacency matrix and studies SPACL's theoretical properties under MMSB. To fit DCMM, \cite{MixedSCORE} designs Mixed-SCORE algorithm based on the finding that there exists a simplex structure in the entry-wise ratio matrix obtained from the eigen-decomposition of the population adjacency matrix, where the entry-wise ratio idea comes from \cite{SCORE} which designs the SCORE algorithm with theoretical guarantee under DCSBM. \cite{MaoSVM} finds the cone structure inherent in the normalization of eigenvectors of the population adjacency matrix under DCMM as well as OCCAM, and develops an algorithm to hunt corners in the cone structure.

In this paper, we focus on the consistency of spectral method in community detection. The study of consistency is developed by obtaining theoretical upper bound of error rate for a spectral method through analyzing the properties of the population adjacency matrix under statistical model. To compare consistencies of theoretical results under different models, it is meaningful to study that whether the separation condition of a balanced network and sharp threshold of the Erd\"os-R\'enyi (ER) random graph $G(n,p)$ \citep{erdos2011on} obtained from upper bounds of theoretical error rates for different methods under different models are consistent or not. Meanwhile, separation condition and sharp threshold can also be seen as alternative unified theoretical frameworks to compare all methods and model parameters mentioned in the concluding remarks of \cite{lei2015consistency}. Furthermore, when $\mathrm{Method}_{a}$ and $\mathrm{Method}_{b}$ are designed under the framework of a same model, theoretical results about error rates developed for $\mathrm{Method}_{a}$ should be consistent with those developed for $\mathrm{Method}_{b}$ at least under mild conditions. Based on the three ideas, now we are ready to describe some phenomenons of inconsistency in community detection area. We find that the separation conditions of a balanced network obtained from the error rates developed in \cite{MixedSCORE, mao2020estimating, MaoSVM} under DCMM or MMSB are not consistent with that obtained from main results of \cite{lei2015consistency} under SBM, sharp threshold obtained from main results of \cite{mao2020estimating, MaoSVM} do not match classical results. Meanwhile, though both \cite{MixedSCORE} and \cite{MaoSVM} study the consistencies of their spectral algorithms under DCMM, their theoretical upper bounds of error rates do not match even under mild conditions. A summary of these inconsistencies are provided in Tables \ref{SCST} and \ref{aSCST}.  Furthermore, after delicate analysis, we find that the requirement on network sparsity of \cite{mao2020estimating, MaoSVM} are stronger than that of \cite{MixedSCORE, lei2015consistency}, and \cite{lei2019unified} also finds that \cite{mao2020estimating}'s requirement of network sparsity is sub-optimal.
\begin{table}
%\footnotesize
\scriptsize
\centering
\caption{Comparison of separation condition and sharp threshold. Details of this table are given in Section \ref{MainSCSTC}. The classical result on separation condition given in Corollary 1 of \cite{mcsherry2001spectral} is $\sqrt{\frac{\mathrm{log}(n)}{n}}$. The classical result on sharp threshold is $\frac{\mathrm{log}(n)}{n}$ given in \citep{erdos2011on}, Theorem 4.6 \citep{blum2020foundations} and the first bullet in Section 2.5 \citep{abbe2017community}. In this paper, $n$ is the number of nodes in a network, $A$ is the adjacency matrix, $\Omega$ is the expectation of $A$ under some models, $A_{\mathrm{re}}$ is a regularization of $A$, $\rho$ is the sparsity parameter such that $\rho\geq \mathrm{max}_{i,j}\Omega(i,j)$ and it controls the overall sparsity of a network, $\|\cdot\|$ denotes spectral norm, and $\xi>1$.}
\label{SCST}
%\resizebox{\columnwidth}{!}{
\begin{tabular}{cccccccccc}
\toprule
&model&separation condition&sharp threshold\\
\midrule
Ours using $\|A_{\mathrm{re}}-\Omega\|\leq C\sqrt{\rho n}$ &MMSB\&DCMM&$\sqrt{\frac{\mathrm{log}(n)}{n}}$&$\frac{\mathrm{log}(n)}{n}$\\
Ours using $\|A-\Omega\|\leq C\sqrt{\rho n\mathrm{log}(n)}$ &MMSB\&DCMM&$\sqrt{\frac{\mathrm{log}(n)}{n}}$&$\frac{\mathrm{log}(n)}{n}$\\
\hline
\cite{MixedSCORE} using $\|A_{\mathrm{re}}-\Omega\|\leq C\sqrt{\rho n}$ (original)&DCMM&$\sqrt{\frac{\mathrm{log}(n)}{n}}$&$\frac{\mathrm{log}(n)}{n}$\\
\cite{MixedSCORE} using $\|A-\Omega\|\leq C\sqrt{\rho n\mathrm{log}(n)}$&DCMM&$\sqrt{\frac{\mathrm{log}(n)}{n}}$&$\frac{\mathrm{log}(n)}{n}$\\
\hline
\cite{MaoSVM,mao2020estimating} using $\|A_{\mathrm{re}}-\Omega\|\leq C\sqrt{\rho n}$ (original)&MMSB\&DCMM&$\frac{\mathrm{log}^{\xi}(n)}{\sqrt{n}}$&$\frac{\mathrm{log}^{2\xi}(n)}{n}$\\
\cite{MaoSVM,mao2020estimating} using $\|A-\Omega\|\leq C\sqrt{\rho n\mathrm{log}(n)}$ &MMSB\&DCMM&$\frac{\mathrm{log}^{\xi+0.5}(n)}{\sqrt{n}}$&$\frac{\mathrm{log}^{2\xi+1}(n)}{n}$\\
\hline
\cite{lei2015consistency} using $\|A_{\mathrm{re}}-\Omega\|\leq C\sqrt{\rho n}$ (original)&SBM\&DCSBM&$\sqrt{\frac{1}{n}}$&$\frac{1}{n}$\\
\cite{lei2015consistency} using $\|A-\Omega\|\leq C\sqrt{\rho n\mathrm{log}(n)\mathrm{log}(n)}$&SBM\&DCSBM&$\sqrt{\frac{\mathrm{log}(n)}{n}}$&$\frac{\mathrm{log}(n)}{n}$\\
\bottomrule
\end{tabular}%}
\end{table}

\begin{table}
%\footnotesize
\scriptsize
\centering
\caption{Comparison of alternative separation condition.}
\label{aSCST}
%\resizebox{\columnwidth}{!}{
\begin{tabular}{cccccccccc}
\toprule
&model&alternative separation condition\\
\midrule
Ours using $\|A_{\mathrm{re}}-\Omega\|\leq C\sqrt{\rho n}$ &MMSB\&DCMM&$1$\\
Ours using $\|A-\Omega\|\leq C\sqrt{\rho n\mathrm{log}(n)}$ &MMSB\&DCMM&$1$\\
\hline
\cite{MixedSCORE} using $\|A_{\mathrm{re}}-\Omega\|\leq C\sqrt{\rho n}$ (original)&DCMM&$1$\\
\cite{MixedSCORE} using $\|A-\Omega\|\leq C\sqrt{\rho n\mathrm{log}(n)}$&DCMM&$1$\\
\hline
\cite{MaoSVM,mao2020estimating} using $\|A_{\mathrm{re}}-\Omega\|\leq C\sqrt{\rho n}$ (original)&MMSB\&DCMM&$\mathrm{log}^{\xi-0.5}(n)$\\
\cite{MaoSVM,mao2020estimating} using $\|A-\Omega\|\leq C\sqrt{\rho n\mathrm{log}(n)}$ &MMSB\&DCMM&$\mathrm{log}^{\xi}(n)$\\
\hline
\cite{lei2015consistency} using $\|A_{\mathrm{re}}-\Omega\|\leq C\sqrt{\rho n}$ (original)&SBM\&DCSBM&$\sqrt{\frac{1}{\mathrm{log}(n)}}$\\
\cite{lei2015consistency} using $\|A-\Omega\|\leq C\sqrt{\rho n\mathrm{log}(n)\mathrm{log}(n)}$&SBM\&DCSBM&$1$\\
\bottomrule
\end{tabular}%}
\end{table}
For readers' convenience to have a better understanding of Tables \ref{SCST} and \ref{aSCST}, the definitions of separation condition, sharp threshold and alternative separation condition are given here. Consider a network with $K$ communities and $n$ nodes where sizes of each communities are in the same order, nodes have close degrees and $K$ is small. Such network is called standard network (or balanced network) in this paper. In a standard network, nodes connect with probability $p_{\mathrm{in}}$ within clusters and $p_{\mathrm{out}}$ across clusters. When $K\geq 2$, the lower bound requirement on $\frac{|p_{\mathrm{in}}-p_{\mathrm{out}}|}{\sqrt{p_{\mathrm{in}}}}$ for consistent estimation of spectral methods is called  separation condition; when $K=1$ such that $p=p_{\mathrm{in}}=p_{\mathrm{out}}$, the network degenerates to Erd\"os-R\'enyi (ER) random graph $G(n,p)$. The lower bound requirement on $p$ for generating a connected ER random graph is sharp threshold. Let
$p_{\mathrm{in}}=\alpha_{\mathrm{in}}\frac{\mathrm{log}(n)}{n}, p_{\mathrm{out}}=\alpha_{\mathrm{out}}\frac{\mathrm{log}(n)}{n}$. The alternative separation condition is defined as the lower bound requirement on $\frac{|\alpha_{\mathrm{in}}-\alpha_{\mathrm{out}}|}{\sqrt{\alpha_{\mathrm{in}}}}$ for consistent estimation when $K\geq 2$.

The separation condition of a standard network under SBM has been studied in \cite{mcsherry2001spectral,lei2015consistency,mao2020estimating,qingDiSC,qingDiDCMM} for their spectral methods. Especially, \cite{mcsherry2001spectral} finds that for large enough constant $c$, spectral methods can exactly recover communities with high probability as $n\rightarrow\infty$ if $\frac{p_{\mathrm{in}}-p_{\mathrm{out}}}{\sqrt{p_{\mathrm{in}}}}\geq c\sqrt{\frac{\mathrm{log}(n)}{n}}$ (i.e., $\frac{p_{\mathrm{in}}-p_{\mathrm{out}}}{\sqrt{p_{\mathrm{in}}}}\gg \sqrt{\frac{\mathrm{log}(n)}{n}}$) when $K=2$ for the case $p_{\mathrm{in}}>p_{\mathrm{out}}$, and this condition is the same as requiring that $\frac{\alpha_{\mathrm{in}}-\alpha_{\mathrm{out}}}{\sqrt{\alpha_{\mathrm{in}}}}\geq c$ (i.e., $\frac{\alpha_{\mathrm{in}}-\alpha_{\mathrm{out}}}{\sqrt{\alpha_{\mathrm{in}}}}\gg 1$). The sharp threshold of ER random graph $G(n,p)$ has been studied in \cite{erdos2011on,blum2020foundations,abbe2017community,qingDiSC,qingDiDCMM}.  Especially, \cite{erdos2011on} finds that the ER random graph is connected with high probability if $p\geq \frac{\mathrm{log}(n)}{n}$. Instead of showing or designing algorithms that can exactly recover labels with high probability  when $\alpha_{\mathrm{in}}$ and $\alpha_{\mathrm{out}}$ are close to the above limits, we find that separation condition  $\frac{|p_{\mathrm{in}}-p_{\mathrm{out}}|}{\sqrt{p_{\mathrm{in}}}}\gg \sqrt{\frac{\mathrm{log}(n)}{n}}$ (or the alternative separation condition $\frac{|\alpha_{\mathrm{in}}-\alpha_{\mathrm{out}}|}{\sqrt{\alpha_{\mathrm{in}}}}\gg1$) and sharp threshold $p\geq \frac{\mathrm{log}(n)}{n}$ are useful to compare different spectral methods under various models. In this paper, we summarize the idea of using separation condition and sharp threshold to study the consistencies, compare the error rates and requirements on network sparsity of different spectral methods under different models as a four step criterion which we call separation condition and sharp threshold criterion (SCSTC for short). With an application of this criterion, this paper provides an attempt to answer the following questions: how the above inconsistency phenomenons occur, and how to obtain consistency results with weaker requirements on network sparsity of \cite{mao2020estimating} and \cite{MaoSVM}. To answer the two questions, we use the recent techniques on row-wise eigenvector deviation developed in \cite{chen2020spectral} and \cite{cape2019the} to obtain consistent theoretical results directly related with model parameters for the SPACL algorithm of \cite{mao2020estimating} and the SVM-cone-DCMMSB algorithm of \cite{MaoSVM}. The two questions are then answered by delicate analysis with an application of SCSTC to theoretical upper bounds of error rates in this paper and some previous spectral methods. The main contributions in this paper are as follows:
\begin{itemize}
   \item[(i)] We summarize the idea of using separation condition of a standard network and sharp threshold of the ER random graph $G(n,p)$ to study consistent estimations of different spectral methods designed via eigen-decomposition or singular value decomposition of the adjacency matrix or its variants under different models that can degenerate to SBM under mild conditions as a four step criterion SCSTC. The separation condition is used to study the consistency of theoretical upper bound for spectral method, and the sharp threshold can be used to study the network sparsity. Theoretical results of upper bounds for different spectral methods can be compared by SCSTC. Using this criterion, a few inconsistent phenomenons of some previous works are found.
   \item [(ii)] Under MMSB and DCMM, we study the consistencies of the SPACL algorithm proposed in \cite{mao2020estimating} and its extended version using recent techniques on row-wise eigenvector deviation developed in  \cite{chen2020spectral,cape2019the}. Compared with the original results of \cite{mao2020estimating, MaoSVM}, our main theoretical results enjoy smaller error rates by lesser dependence on $K$ and $\mathrm{log}(n)$. Meanwhile, our main theoretical results have weaker requirements on the network sparsity and the lower bound of the smallest nonzero singular value of the population adjacency matrix. For detail, see Tables \ref{CompareMMSB} and \ref{CompareDCMM}.
   \item [(iii)] Our results for DCMM are consistent with those for MMSB when DCMM degenerates to MMSB under mild conditions. Using SCSTC, under mild conditions, our main theoretical results under DCMM are consistent with that of \cite{MixedSCORE}. This answers the question that the phenomenon that main results of \cite{mao2020estimating} and \cite{MaoSVM} do not match those of \cite{MixedSCORE} occurs due to the fact \cite{mao2020estimating} and \cite{MaoSVM}'s theoretical results of error rates are sub-optimal. We also find that our theoretical results (as well as that of \cite{MixedSCORE}) under both MMSB and DCMM match classical results on separation condition and sharp threshold. Using the bound of $\|A-\Omega\|$ instead of $\|A_{\mathrm{re}}-\Omega\|$ to establish upper bound of error rate under SBM in \cite{lei2015consistency}, the separation condition of a standard network obtained from \cite{lei2015consistency}'s error rate matches classical results, this answer the question that why separation condition obtained from error rate of \cite{MixedSCORE} does not match that obtained from error rate of \cite{lei2015consistency}. Using $\|A_{\mathrm{re}}-\Omega\|$ or $\|A-\Omega\|$ influences the row-wise eigenvector deviations in Theorem 3.1 of \cite{mao2020estimating} and Theorem I.3 of \cite{MaoSVM}, therefore whether using $\|A_{\mathrm{re}}-\Omega\|$ or $\|A-\Omega\|$ influences the separation conditions and sharp thresholds of \cite{MaoSVM,mao2020estimating}. For comparison, our bound on row-wise eigenvector deviation is obtained by using techniques developed in \cite{chen2020spectral,cape2019the} and that of \cite{MixedSCORE} is obtained by applying the modified Theorem 2.1 of \cite{abbe2020entrywise}, therefore whether using $\|A_{\mathrm{re}}-\Omega\|$ or $\|A-\Omega\|$ has no influences on separation conditions and sharp thresholds of ours and that of \cite{MixedSCORE}. For detail, see Tables \ref{SCST} and \ref{aSCST}.
\end{itemize}

The article is organized as follows. In Section \ref{intMMSB}, we give formal introduction to the mixed membership stochastic blockmodel and review the algorithm SPACL considered in this paper. The theoretical results of consistency for mixed membership stochastic blockmodel are presented and compared to related works in Section \ref{ConsistencyMMSB}. After delicate analysis, the separation condition and sharp threshold criterion is presented in Section \ref{MainSCSTC}. Based on an application of this criterion, improvement consistent estimation results for the extended version of SPACL under the degree corrected mixed membership model are provided in Section \ref{intDCMM}. Conclusion is given in Section \ref{Conclusion}.

%\subsection{Notations}
\textbf{\textit{Notations.}}
We take the following general notations in this paper. Write $[m]:=\{1,2,\ldots,m\}$ for any positive integer $m$. For a vector $x$ and fixed $q>0$, $\|x\|_{q}$ denotes its $l_{q}$-norm. We drop the subscript if $q=2$ occasionally. For a matrix $M$, $M'$ denotes the transpose of the matrix $M$, $\|M\|$ denotes the spectral norm, $\|M\|_{F}$ denotes the Frobenius norm, $\|M\|_{2\rightarrow\infty}$ denotes the maximum $l_{2}$-norm of all the rows of $M$, and $\|M\|_{\infty}:=\mathrm{max}_{i}\sum_{j}|M(i,j)|$ denotes the maximum absolute row sum of $M$. Let $\mathrm{rank}(M)$ denote the rank of matrix $M$. Let $\sigma_{i}(M)$ be the $i$-th largest singular value of matrix $M$, $\lambda_{i}(M)$ denote the $i$-th largest eigenvalue of the matrix $M$ ordered by the magnitude, and $\kappa(M)$ denote the condition number of $M$. $M(i,:)$ and $M(:,j)$ denote the $i$-th row and the $j$-th column of matrix $M$, respectively. $M(S_{r},:)$ and $M(:,S_{c})$ denote the rows and columns in the index sets $S_{r}$ and $S_{c}$ of matrix $M$, respectively. For any matrix $M$, we simply use $Y=\mathrm{max}(0, M)$ to represent $Y_{ij}=\mathrm{max}(0, M_{ij})$ for any $i,j$. For any matrix $M\in\mathbb{R}^{m\times m}$, let $\mathrm{diag}(M)$ be the $m\times m$ diagonal matrix whose $i$-th diagonal entry is $M(i,i)$. $\mathbf{1}$ and $\mathbf{0}$ are  column vectors with all entries being ones and zeros, respectively.  $e_{i}$ is a column vector whose $i$-th entry is 1 while other entries are zero. In this paper, $C$ is a positive constant which may vary occasionally. $f(n)=O(g(n))$ means there exists a constant $c>0$ such that $|f(n)|\leq c|g(n)|$ holds for all sufficiently large $n$. $x\succeq y$ means there exists  a constant $c>0$ such that $|x|\geq c|y|$. $f(n)=o(g(n))$ indicates that $\frac{f(n)}{g(n)}\rightarrow0$ as $n\rightarrow\infty$.
\section{Mixed membership stochastic blockmodel}\label{intMMSB}
Let $A\in \{0,1\}^{n\times n}$ be a symmetric adjacency matrix such that $A(i,j)=1$ if there is an edge between node $i$ to node $j$, and $A(i,j)=0$ otherwise. Mixed membership stochastic blockmodel (MMSB) \cite{MMSB} for generating $A$ is as follows.
\begin{align}\label{MMSBmodel}
\Omega:=\rho \Pi \tilde{P}\Pi'~~~~~~~~~A(i,j)\sim\mathrm{Bernoulli}(\Omega(i,j))~~~~i,j\in[n],
\end{align}
where $\Pi\in\mathbb{R}^{n\times K}$ is called the membership matrix with $\Pi(i,k)\geq 0$ and $\sum_{k=1}^{K}\Pi(i,k)=1$ for $i\in[n]$ and $k\in[K]$, $\tilde{P}\in \mathbb{R}^{K\times K}$ is an nonnegative symmetric matrix with $\mathrm{max}_{k,l\in[K]}\tilde{P}(k,l)=1$ for model identifiability under MMSB, $\rho$ is called the sparsity parameter which controls the sparsity of the network, and $\Omega\in \mathbb{R}^{n\times n}$ is called the population adjacency matrix since $\mathbb{E}[A]=\Omega$. As mentioned in \cite{MixedSCORE,mao2020estimating}, $\sigma_{K}(\tilde{P})$ is a measure of the separation between communities, and we call it separation parameter in this paper. $\rho$ and $\sigma_{K}(\tilde{P})$ are two important model parameters directly related with the separation condition and sharp criterion, and they will be considered throughout this paper.
\begin{defin}
Call model (\ref{MMSBmodel}) the mixed membership stochastic blockmodel (MMSB), and denote it by $MMSB_{n}(K,\tilde{P},\Pi,\rho)$.
\end{defin}
Call node $i$ `pure' if $\Pi(i,:)$ is degenerate (i.e., one entry is 1, all others $K-1$ entries are 0) and `mixed' otherwise. By Theorems 2.1 and 2.2 \cite{mao2020estimating}, the following conditions are sufficient for the identifiability of MMSB, when $\rho \tilde{P}(k,l)\in[0,1]$ for all $k,l\in[K]$,
\begin{itemize}
  \item (I1) $\mathrm{rank}(\tilde{P})=K$.
  \item (I2) There is at least one pure node for each of the $K$ communities.
\end{itemize}
Unless specified, we treat conditions (I1) and (I2) as default from now on.

For $k\in[K]$, let $\mathcal{I}^{(k)}$ be the set of pure nodes in community $k$ such that $\mathcal{I}^{(k)}=\{i\in[n]: \Pi(i,k)=1\}$. For $k\in[K]$, select one node from $\mathcal{I}^{(k)}$ to construct the index set $\mathcal{I}$, i.e., $\mathcal{I}$ is the indices of nodes corresponding to $K$ pure nodes, one from each community. W.L.O.G., let $\Pi(\mathcal{I},:)=I_{K}$ where $I_{K}$ is the $K\times K$ identity matrix. Recall that $\mathrm{rank}(\Omega)=K$. Let $\Omega=U\Lambda U'$ be the compact eigen-decomposition of $\Omega$ such that $U\in\mathbb{R}^{n\times K}, \Lambda\in \mathbb{R}^{K\times K}$, and $U'U=I_{K}$. Lemma 2.1 \cite{mao2020estimating} gives that $U=\Pi U(\mathcal{I},:)$ and such form is called Ideal Simplex (IS for short)\cite{MixedSCORE, mao2020estimating} since all rows of $U$ form a $K$-simplex in $\mathbb{R}^{K}$ and the $K$ rows of $U(\mathcal{I},:)$ are the vertices of the $K$-simplex. Given $\Omega$ and $K$, as long as we know $U(\mathcal{I},:)$, we can exactly recover $\Pi$ by $\Pi=UU^{-1}(\mathcal{I},:)$ since $U(\mathcal{I},:)\in\mathbb{R}^{K\times K}$ is a full rank matrix.  As mentioned in \cite{MixedSCORE,mao2020estimating}, for such IS, the successive projection (SP) algorithm \cite{gillis2015semidefinite} (i.e., Algorithm \ref{alg:SP}) can be applied to $U$ with $K$ communities to exactly find the corner matrix $U(\mathcal{I},:)$.  For convenience, set $Z=UU^{-1}(\mathcal{I},:)$. Since $\Pi=Z$, we have $\Pi(i,:)=\frac{Z(i,:)}{\|Z(i,:)\|_{1}}$ for $i\in[n]$.

Based on the above analysis, we are now ready to give the ideal SPACL algorithm. Input $\Omega, K$. Output: $\Pi$.
\begin{itemize}
  \item Let $\Omega=U\Lambda U'$ be the top-$K$ eigen decomposition of $\Omega$ such that $U\in\mathbb{R}^{n\times K}, \Lambda\in\mathbb{R}^{K\times K},U'U=I$.
  \item Run SP algorithm on the rows of $U$ assuming that there are $K$ communities to obtain $\mathcal{I}$.
  \item Set $Z=UU^{-1}(\mathcal{I},:)$.
  \item Recover $\Pi$ by setting $\Pi(i,:)=\frac{Z(i,:)}{\|Z(i,:)\|_{1}}$ for $i\in[n]$.
\end{itemize}
With given $U$ and $K$, since SP algorithm returns $U(\mathcal{I},:)$, we see that the ideal SPACL exactly (for detail, see Appendix \ref{VHAandExactly}) returns $\Pi$.

Now, we review the SPACL algorithm of \cite{mao2020estimating}. Set $\tilde{A}=\hat{U}\hat{\Lambda}\hat{U}'$ be the top $K$ eigen-decomposition of $A$ such that $\hat{U}\in \mathbb{R}^{n\times K}, \hat{\Lambda}\in \mathbb{R}^{K\times K},\hat{U}'\hat{U}=I_{K}$, and $\hat{\Lambda}$ contains the top $K$ eigenvalues of $A$. For the real case, use $\hat{Z},\hat{\Pi}$ given in Algorithm \ref{alg:SPACL} to estimate $Z, \Pi$, respectively. Algorithm \ref{alg:SPACL} is the SPACL algorithm \cite{mao2020estimating} where we only care about the estimation of the membership matrix $\Pi$, and omit the estimation of $P$ and $\rho$. Meanwhile, Algorithm \ref{alg:SPACL} is a directly extension of the ideal SPACL algorithm from oracle case to real case, and we omit the prune step in the original SPACL algorithm of \cite{mao2020estimating}.
\begin{algorithm}
\caption{SPACL \cite{mao2020estimating}}
\label{alg:SPACL}
\begin{algorithmic}[1]
\Require The adjacency matrix $A\in \mathbb{R}^{n\times n}$ and the number of communities $K$.
\Ensure The estimated $n\times K$ membership matrix $\hat{\Pi}$.
\State Obtain $\tilde{A}=\hat{U}\hat{\Lambda}\hat{U}'$, the top $K$ eigen-decomposition of $A$.
\State Apply SP algorithm (i.e., Algorithm \ref{alg:SP}) on the rows of $\hat{U}$ assuming there are $K$ communities to obtain $\mathcal{\hat{I}}$, the index set returned by SP algorithm.
\State Set $\hat{Z}=\hat{U}\hat{U}^{-1}(\hat{\mathcal{I}},:)$. Then set $\hat{Z}=\mathrm{max}(0, \hat{Z})$.
\State Estimate $\Pi(i,:)$ by $\hat{\Pi}(i,:)=\hat{Z}(i,:)/\|\hat{Z}(i,:)\|_{1}, i\in[n]$.
\end{algorithmic}
\end{algorithm}
\section{Consistency under MMSB}\label{ConsistencyMMSB}
Our main result under MMSB provides an upper bound on estimation error of each node's membership in terms of several model parameters. Throughout this paper, $K$ is a known positive integer. Assume that
\begin{itemize}
  \item [(A1)]$\rho n\geq\mathrm{log}(n)$.
\end{itemize}
Assumption (A1) provides a requirement on the lower bound of the sparsity parameter $\rho$ such that it should be at least $\mathrm{log}(n)/n$. Then we have the following lemma.
\begin{lem}\label{BoundAOmega}
Under $MMSB_{n}(K,\tilde{P},\Pi,\rho)$, when Assumption (A1) holds, with probability at least $1-o(n^{-\alpha})$ for any $\alpha>0$, we have
\begin{align*}
\|A-\Omega\|\leq \frac{\alpha+1+\sqrt{(\alpha+1)(\alpha+19)}}{3}\sqrt{\rho n\mathrm{log}(n)}.
\end{align*}
\end{lem}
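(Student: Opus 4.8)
The plan is to derive this from a Bernstein-type matrix concentration inequality applied to $A-\Omega$ written as a sum of independent, mean-zero, symmetric matrices, with Assumption (A1) entering only to tame the Bernstein denominator and to reduce the final tail bound to a quadratic inequality in the leading constant.

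First I would decompose
\[
A-\Omega \;=\; \sum_{1\le i\le j\le n} W_{ij},\qquad W_{ij}=\big(A(i,j)-\Omega(i,j)\big)\big(e_ie_j'+e_je_i'\big)\ \ (i<j),\quad W_{ii}=\big(A(i,i)-\Omega(i,i)\big)e_ie_i'.
\]
Under $MMSB_{n}(K,\tilde{P},\Pi,\rho)$ the entries $\{A(i,j):i\le j\}$ are independent $\mathrm{Bernoulli}(\Omega(i,j))$, so the $W_{ij}$ are independent, symmetric and mean zero. Each is bounded, $\|W_{ij}\|\le |A(i,j)-\Omega(i,j)|\le 1$ since $A(i,j)\in\{0,1\}$ and $\Omega(i,j)\in[0,1]$, so the uniform-bound parameter $L$ may be taken to be $1$. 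For the variance proxy, a direct computation gives $\mathbb{E}[W_{ij}^2]=\Omega(i,j)(1-\Omega(i,j))(e_ie_i'+e_je_j')$ for $i<j$ (and the analogous diagonal term), so $\sum_{i\le j}\mathbb{E}[W_{ij}^2]$ is the diagonal matrix with $k$-th entry $\sum_{j=1}^n\Omega(k,j)(1-\Omega(k,j))\le \sum_{j=1}^n\Omega(k,j)\le \rho n$, where I use $\Omega(k,j)=\rho(\Pi\tilde{P}\Pi')(k,j)\le\rho$ (the rows of $\Pi$ sum to $1$ and $\tilde{P}$ has entries in $[0,1]$). Hence the variance parameter $\nu$ is at most $\rho n$.

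Next I would apply matrix Bernstein, which yields for every $t>0$
\[
\mathbb{P}\big(\|A-\Omega\|\ge t\big)\;\le\;2n\exp\!\left(\frac{-t^2/2}{\rho n+t/3}\right).
\]
Setting $t=C\sqrt{\rho n\,\mathrm{log}(n)}$ and invoking Assumption (A1) in the form $\sqrt{\rho n\,\mathrm{log}(n)}\le\rho n$, the denominator is at most $\rho n(1+C/3)=\rho n(C+3)/3$, so the exponent is at most $-\frac{3C^2}{2(C+3)}\mathrm{log}(n)$ and $\mathbb{P}(\|A-\Omega\|\ge t)\le 2n^{\,1-3C^2/(2(C+3))}$. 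Finally I would choose $C$ so that $3C^2/(2(C+3))\ge \alpha+1$, i.e. $3C^2-2(\alpha+1)C-6(\alpha+1)\ge 0$; the positive root of this quadratic is exactly $C=\frac{\alpha+1+\sqrt{(\alpha+1)(\alpha+19)}}{3}$, the constant in the statement, and with this $C$ the right-hand side is $O(n^{-\alpha})$, which is upgraded to $o(n^{-\alpha})$ using the strict slack in $\Omega(k,j)(1-\Omega(k,j))\le\Omega(k,j)$ and in $\sqrt{\rho n\,\mathrm{log}(n)}\le\rho n$ (or by taking $C$ infinitesimally larger).

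There is no genuinely hard step: the result is a textbook Bernstein estimate. The only points demanding a little care are (i) decomposing $A-\Omega$ into independent symmetric pieces of rank at most two so that a matrix concentration bound is applicable, (ii) the variance-proxy computation together with the bound $\Omega(i,j)\le\rho$, and (iii) the algebra showing that the choice $t=C\sqrt{\rho n\,\mathrm{log}(n)}$ combined with (A1) turns the tail bound into the quadratic $3C^2-2(\alpha+1)C-6(\alpha+1)\ge 0$ whose positive root is precisely the stated constant.
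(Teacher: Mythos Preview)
Your proposal is correct and follows essentially the same route as the paper: decompose $A-\Omega$ as a sum of independent mean-zero self-adjoint matrices $W(i,j)(e_ie_j'+e_je_i')$, bound the uniform norm by $1$ and the variance parameter by $\rho n$ via $\Omega(i,j)\le\rho$, then apply matrix Bernstein (the paper cites Tropp's Theorem~1.4) and solve for the constant. The only cosmetic differences are that the paper presents the algebra by verifying the identity
\[
\frac{18}{(\sqrt{\alpha+1}+\sqrt{\alpha+19})^2}+\frac{2\sqrt{\alpha+1}}{\sqrt{\alpha+1}+\sqrt{\alpha+19}}=1
\]
rather than solving your quadratic $3C^2-2(\alpha+1)C-6(\alpha+1)=0$ (these are equivalent), and the paper omits the diagonal terms and uses the prefactor $n$ rather than $2n$, neither of which affects the conclusion.
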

In Lemma \ref{BoundAOmega}, instead of simply using a constant $C_{\alpha}$ to denote $\frac{\alpha+1+\sqrt{(\alpha+1)(\alpha+19)}}{3}$, we keep the explicit form here.
\begin{rem}
When Assumption (A1) holds, the upper bound of $\|A-\Omega\|$ in Lemma \ref{BoundAOmega} is consistent with Corollary 6.5 in \cite{cai2015robust} since $\mathrm{Var}(A(i,j))\leq \rho$ under $MMSB_{n}(K,P,\Pi,\rho)$.
\end{rem}
Lemma \ref{BoundAOmega} is obtained via Theorem 1.4 (Bernstein inequality) in \cite{tropp2012user}. For comparison, \cite{mao2020estimating} applies Theorem 5.2 \cite{lei2015consistency} to bound $\|A-\Omega\|$ (see, for example, Eq (14) of \cite{mao2020estimating}) and obtains a bound as $C\sqrt{\rho n}$ for some $C>0$. However, $C\sqrt{\rho n}$ is the bound between a regularization of $A$ and $\Omega$ as stated in the proof of Theorem 5.2 \cite{lei2015consistency}, where such regularization of $A$ is obtained from $A$ with some constraints in Lemmas 4.1 and 4.2 of the supplement material \cite{lei2015consistency}. Meanwhile, Theorem 2 \cite{zhou2019analysis} also gives that the bound between a regularization of $A$ and $\Omega$ is $C\sqrt{\rho n}$ where such regularization of $A$ should also satisfy few constraints on $A$, see Theorem 2 \cite{zhou2019analysis} for detail. Instead of bounding the difference between a regularization of $A$ and $\Omega$, we are interested in bounding $\|A-\Omega\|$ by Bernstein inequality which has no constraints on $A$. For convenience, use $A_{\mathrm{re}}$ to denote the regularization of $A$ in this paper. Hence, $\|A_{\mathrm{re}}-\Omega\|\leq C\sqrt{\rho n}$with high probability, and this bound is model independent as shown by Theorem 5.2 \cite{lei2015consistency} and Theorem 2 \cite{zhou2019analysis} as long as $\rho\geq\mathrm{max}_{i,j}\Omega(i,j)$ (here, let $\Omega=\mathbb{E}[A]$ without considering models, a $\rho$ satisfying $\rho\geq\mathrm{max}_{i,j}\Omega(i,j)$ is also the sparsity parameter which controls the overall sparsity of a network).  Note that $A_{\mathrm{re}}$ is not $\tilde{A}$ where $\tilde{A}=\hat{U}\Lambda\hat{U}'$ is obtained by the top $K$ eigen-decomposition of $A$, while $A_{\mathrm{re}}$ is obtained by adding constrains on degrees of $A$, see Theorem 2 \cite{zhou2019analysis} for detail.

In \cite{MixedSCORE, MaoSVM, mao2020estimating}, main theoretical results for  their proposed membership estimating methods hinge on a row-wise deviation bound for the eigenvectors of the adjacency matrix whether under MMSB or DCMM. Different from the theoretical technique applied in Theorem 3.1 \cite{mao2020estimating} which provides sup-optimal dependencies on $\mathrm{log}(n)$ and $K$, and needs sub-optimal requirements on the sparsity parameter $\rho$ and the lower bound of $\sigma_{K}(\Omega)$, to obtain the row-wise deviation bound for the singular eigenvector of $\Omega$, we use Theorem 4.3.1 \cite{chen2020spectral} and Theorem 4.2 \cite{cape2019the}.
\begin{lem}\label{rowwiseerror}
(Row-wise eigenspace error) Under $MMSB_{n}(K,\tilde{P},\Pi,\rho)$, when Assumption (A1) holds, suppose $\sigma_{K}(\Omega)\geq C\sqrt{\rho n\mathrm{log}(n)}$, with probability at least $1-o(n^{-\alpha})$,
\begin{itemize}
  \item when we apply Theorem 4.2.1 of \cite{chen2020spectral}, we have
\begin{align*}
\|\hat{U}\hat{U}'-UU'\|_{2\rightarrow\infty}=O(\frac{\sqrt{K}(\kappa(\Omega)\sqrt{\frac{n}{K\lambda_{K}(\Pi'\Pi)}}+\sqrt{\mathrm{log}(n)})}{\sigma_{K}(\tilde{P})\sqrt{\rho}\lambda_{K}(\Pi'\Pi)}),
\end{align*}
  \item when we apply Theorem 4.2 of \cite{cape2019the}, we have
\begin{align*}
\|\hat{U}\hat{U}'-UU'\|_{2\rightarrow\infty}=O(\frac{\sqrt{n\mathrm{log}(n)}}{\sigma_{K}(\tilde{P})\sqrt{\rho}\lambda^{1.5}_{K}(\Pi'\Pi)}).
\end{align*}
\end{itemize}
\end{lem}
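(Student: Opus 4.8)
The plan is to derive Lemma \ref{rowwiseerror} as a consequence of two general-purpose row-wise eigenvector perturbation bounds applied to the pair $(A,\Omega)$, combined with the explicit simplex structure of $U$ under $MMSB_n(K,\tilde P,\Pi,\rho)$. The two inputs are Theorem 4.2.1 of \cite{chen2020spectral} and Theorem 4.2 of \cite{cape2019the}; both give a bound of the form $\|\hat U\hat U'-UU'\|_{2\to\infty}\lesssim (\text{noise terms})/\sigma_K(\Omega)$ after the requisite regularity conditions are checked. So the first task is to verify those conditions: (i) the spectral-norm concentration $\|A-\Omega\|\leq C\sqrt{\rho n\log n}$, which is exactly Lemma \ref{BoundAOmega} under Assumption (A1); (ii) a spectral gap / eigenvalue separation, supplied by the assumption $\sigma_K(\Omega)\geq C\sqrt{\rho n\log n}$ together with $\mathrm{rank}(\Omega)=K$ (so the $(K+1)$-st eigenvalue is zero and the gap is $\sigma_K(\Omega)$ itself); and (iii) an incoherence/row-norm bound on $U$, namely control of $\|U\|_{2\to\infty}$ and of $\|\Omega\|_{2\to\infty}$ in terms of the model parameters.

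The second task is to translate the abstract output of each theorem into the stated model-dependent form. The key algebraic facts are: $\Omega=\rho\Pi\tilde P\Pi'$, hence $\sigma_K(\Omega)\asymp \rho\,\sigma_K(\tilde P)\,\lambda_K(\Pi'\Pi)$ (using $\sigma_K(\Pi)^2=\lambda_K(\Pi'\Pi)$ and that $\sigma_1(\tilde P)\asymp 1$ by the identifiability normalization $\max_{k,l}\tilde P(k,l)=1$); $\|\Omega\|=\sigma_1(\Omega)\asymp \rho\,\sigma_1(\tilde P)\lambda_1(\Pi'\Pi)\lesssim \rho n$; and $\kappa(\Omega)=\sigma_1(\Omega)/\sigma_K(\Omega)$. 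For the incoherence term one uses $U=\Pi U(\mathcal I,:)$ (Lemma 2.1 of \cite{mao2020estimating}), so $\|U\|_{2\to\infty}\leq \|U(\mathcal I,:)\|$, and $\|U(\mathcal I,:)\|^2 = 1/\sigma_K^2$ of the relevant block, which one bounds by $O(1/\lambda_K(\Pi'\Pi))$; similarly $\|\Omega\|_{2\to\infty}$ is controlled via $\max_i\|\Pi(i,:)\|\le 1$ and the structure of $\tilde P$. Substituting these into the $\chi_{\mathrm{op}}$-type bound of \cite{chen2020spectral} produces the first displayed expression with the $\kappa(\Omega)\sqrt{n/(K\lambda_K(\Pi'\Pi))}+\sqrt{\log n}$ numerator and the $\sigma_K(\tilde P)\sqrt{\rho}\lambda_K(\Pi'\Pi)$ denominator (the extra $\sqrt K$ and the $\lambda_K(\Pi'\Pi)$ versus $\lambda_K^{1.5}$ discrepancy between the two bullets comes from whether the theorem is stated in terms of $\|U\|_{2\to\infty}$ and $\kappa$ or directly absorbs a $\sqrt{n}$-type bound), while the corresponding substitution into Theorem 4.2 of \cite{cape2019the} gives the cleaner $\sqrt{n\log n}/(\sigma_K(\tilde P)\sqrt\rho\,\lambda_K^{1.5}(\Pi'\Pi))$ form.

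The main obstacle I anticipate is bookkeeping rather than a conceptual hurdle: matching the precise constants and the $K$- and $\kappa$-dependence of the generic theorems to the normalization conventions used here, in particular making sure that the quantities appearing in \cite{chen2020spectral}'s hypotheses ($\|A-\Omega\|$ versus the per-entry variance proxy $\rho$, the ``$\mu$-incoherence'' constant, the eigen-gap condition, and any $\log n$ factors hidden in their high-probability statement) are all cleanly bounded by what Assumption (A1) and $\sigma_K(\Omega)\geq C\sqrt{\rho n\log n}$ give. One must also be careful that the condition $\sigma_K(\Omega)\geq C\sqrt{\rho n\log n}$ is exactly what is needed to make the noise-to-signal ratio $\|A-\Omega\|/\sigma_K(\Omega)$ small (of order $1$ up to constants), which is the regime in which both perturbation theorems are valid; this is why the same lower bound on $\sigma_K(\Omega)$ appears as a hypothesis. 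Once the hypotheses are verified, the two displays follow by direct substitution, and the $1-o(n^{-\alpha})$ probability is inherited from Lemma \ref{BoundAOmega} (the perturbation theorems are deterministic conditional on the good event $\{\|A-\Omega\|\leq C\sqrt{\rho n\log n}\}$).
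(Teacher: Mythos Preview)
Your overall strategy matches the paper's: apply the two black-box row-wise perturbation results to $(A,\Omega)$, verify the eigengap condition via $\sigma_K(\Omega)\ge C\sqrt{\rho n\log n}$, plug in the incoherence bound $\|U\|_{2\to\infty}\le \lambda_K(\Pi'\Pi)^{-1/2}$, and then replace $\sigma_K(\Omega)$ in the denominator by $\rho\,\sigma_K(\tilde P)\,\lambda_K(\Pi'\Pi)$ using Lemma~II.4 of \cite{mao2020estimating}. That is exactly what the paper does.

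Two points in your write-up are inaccurate and would need fixing before the argument goes through. First, neither Theorem~4.2.1 of \cite{chen2020spectral} nor Theorem~4.2 of \cite{cape2019the} directly bounds $\|\hat U\hat U'-UU'\|_{2\to\infty}$; both control $\|\hat U\,\mathrm{sgn}(H)-U\|_{2\to\infty}$ for the orthogonal aligner $\mathrm{sgn}(H)$ with $H=\hat U'U$. The paper inserts a short deterministic step showing $\|\hat U\hat U'-UU'\|_{2\to\infty}\le 2\|\hat U\,\mathrm{sgn}(H)-U\|_{2\to\infty}$ (via a triangle-inequality splitting using orthonormality of the columns), which you omit. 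Second, your final sentence mischaracterizes the inputs: Theorem~4.2.1 of \cite{chen2020spectral} is \emph{not} a deterministic consequence of the spectral-norm event $\{\|A-\Omega\|\le C\sqrt{\rho n\log n}\}$---it is itself a high-probability statement whose hypotheses are the per-entry mean-zero and variance bounds (here $\mathrm{Var}(A(i,j))\le\rho$) together with the incoherence/eigengap conditions. And Theorem~4.2 of \cite{cape2019the} is deterministic but requires control of $\|A-\Omega\|_\infty$ (the maximum absolute row sum), not the spectral norm; the paper establishes $\|A-\Omega\|_\infty=O(\sqrt{\rho n\log n})$ with probability $1-o(n^{-\alpha})$ by a separate row-wise Bernstein argument, which is not the same event as Lemma~\ref{BoundAOmega}. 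Once you correct these two items the rest of your plan is sound and coincides with the paper's proof.
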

For convenience, set $\varpi=\|\hat{U}\hat{U}'-UU'\|_{2\rightarrow\infty}$, and let $\varpi_{1}, \varpi_{2}$ denote the upper bound in Lemma \ref{rowwiseerror} when applying Theorem 4.2.1 of \cite{chen2020spectral} and  Theorem 4.2 of \cite{cape2019the}, respectively.  Note that
When $\lambda_{K}(\Pi'\Pi)=O(\frac{n}{K})$, we have $\varpi_{1}=\varpi_{2}=O(\frac{K^{1.5}}{\sigma_{K}(\tilde{P})}\frac{1}{\sqrt{n}}\sqrt{\frac{\mathrm{log}(n)}{\rho n}})$, therefore we simply let $\varpi_{2}$ be the bound since its form is slightly simpler than $\varpi_{1}$.

Compared with Theorem 3.1 of \cite{mao2020estimating}, since we apply Theorem 4.2.1 of \cite{chen2020spectral} and  Theorem 4.2 of \cite{cape2019the} to obtain the bound of row-wise eigenspace error under MMSB, our bounds do not rely on $\mathrm{min}(K^{2},\kappa^{2}(\Omega))$ while Theorem 3.1 \cite{mao2020estimating} does. Meanwhile, our bound in Lemma \ref{rowwiseerror} is sharper with lesser dependence on $K$ and $\mathrm{log}(n)$, has weaker requirements on the lower bounds of $\sigma_{K}(\Omega), \lambda_{K}(\Pi'\Pi)$ and the sparsity parameter $\rho$. The details are given below:
\begin{itemize}
  \item We'd emphasize that the bound of Theorem 3.1 of \cite{mao2020estimating} should be $\|\hat{U}\hat{U}'-UU'\|_{2\rightarrow\infty}=O(\frac{\psi(\Omega)\sqrt{Kn}\mathrm{log}^{\xi}(n)}{\sigma_{K}(\tilde{P})\sqrt{\rho}\lambda^{1.5}_{K}(\Pi'\Pi)})$ instead of $\|\hat{U}\hat{U}'-UU'\|_{2\rightarrow\infty}=O(\frac{\psi(\Omega)\sqrt{Kn}}{\sigma_{K}(\tilde{P})\sqrt{\rho}\lambda^{1.5}_{K}(\Pi'\Pi)})$ for $\xi>1$ where the function $\psi$ is defined in Eq (7) of \cite{mao2020estimating}, and this is also pointed out by Table 2 of \cite{lei2019unified}. The reason is: in the proof part of Theorem 3.1 \cite{mao2020estimating}, from their step (iii) to step (iv), they should keep the term $\mathrm{log}^{\xi}(n)$ since this term is much larger than 1. And we can also find that bound in Theorem 3.1 \cite{mao2020estimating} should multiply $\mathrm{log}^{\xi}(n)$ from Theorem VI.1 \cite{mao2020estimating} directly. For comparison, this bound $O(\frac{\psi(\Omega)\sqrt{Kn}\mathrm{log}^{\xi}(n)}{\sigma_{K}(\tilde{P})\sqrt{\rho}\lambda^{1.5}_{K}(\Pi'\Pi)})$ is $K^{0.5}\mathrm{log}^{\xi-0.5}(n)$ times than our bound in Lemma \ref{rowwiseerror}. Meanwhile, by the proof of the bound in Theorem 3.1 of \cite{mao2020estimating}, we see that the bound depends on the upper bound of $\|A-\Omega\|$, and \cite{mao2020estimating} applies Theorem 5.2 of \cite{lei2015consistency} such that $\|A_{\mathrm{re}}-\Omega\|\leq C\sqrt{\rho n}$ with high probability. Since $C\sqrt{\rho n}$ is the upper bound of the difference between a regularization of $A$ and $\Omega$. Therefore, if we are only interested in bounding $\|A-\Omega\|$ instead of $\|A_{\mathrm{re}}-\Omega\|$, the upper bound of Theorem 3.1 \cite{mao2020estimating} should be $O(\frac{\psi(\Omega)\sqrt{Kn}\mathrm{log}^{\xi+0.5}(n)}{\sigma_{K}(\tilde{P})\lambda^{1.5}_{K}(\Pi'\Pi)})$, which is at least $K^{0.5}\mathrm{log}^{\xi}(n)$ times than our bound in Lemma \ref{rowwiseerror}. Furthermore, the upper bound of the row-wise eigenspace error in Lemma \ref{rowwiseerror} does not rely on the upper bound of $\|A-\Omega\|$ as long as $\sigma_{K}(\Omega)\geq C\sqrt{\rho n\mathrm{log}(n)}$ holds. Therefore, whether using $\|A_{\mathrm{re}}-\Omega\|\leq C\sqrt{\rho n}$ or $\|A-\Omega\|\leq C\sqrt{\rho n\mathrm{log}(n)}$ does not change the bound in Lemma \ref{rowwiseerror}.
  \item  Our Lemma \ref{rowwiseerror} requires $\sigma_{K}(\Omega)\geq C\sqrt{\rho n\mathrm{log}(n)}$, while Theorem 3.1 \cite{mao2020estimating} requires $\sigma_{K}(\Omega)\geq 4\sqrt{\rho n}\mathrm{log}^{\xi}(n)$ by their Assumption 3.1. Therefore, our Lemma \ref{rowwiseerror} has weaker requirement on the lower bound of $\sigma_{K}(\Omega)$ than that of Theorem 3.1 \cite{mao2020estimating}. Meanwhile, Theorem 3.1 \cite{mao2020estimating} requires $\lambda_{K}(\Pi'\Pi)\geq \frac{1}{\rho}$ while our Lemma \ref{rowwiseerror} has no lower bound requirement on $\lambda_{K}(\Pi'\Pi)$ as long as it is positive.
  \item Since $\|\Omega\|=\|\rho \Pi \tilde{P}\Pi'\|\leq C\rho n$ by basic algebra, the lower bound requirement on $\sigma_{K}(\Omega)$ in Assumption 3.1 of \cite{mao2020estimating} gives that $4\sqrt{\rho n}\mathrm{\log}^{\xi}(n)\leq \sigma_{K}(\Omega)\leq \|\Omega\|\leq C\rho n$, which suggests that Theorem 3.1 \cite{mao2020estimating} requires $\rho n\geq C\mathrm{log}^{2\xi}(n)$, and this also matches with the requirement on $\rho n$ in Theorem VI.1 of \cite{mao2020estimating} (and this is also pointed out by Table 1 of \cite{lei2019unified}). For comparison, our requirement on sparsity given in Assumption (A1) is $\rho n\geq \mathrm{log}(n)$, which is weaker than  $\rho n\geq C\mathrm{log}^{2\xi}(n)$. Similarly, in our Lemma \ref{rowwiseerror}, the requirement $\sigma_{K}(\Omega)\geq C\sqrt{\rho n\mathrm{log}(n)}$ gives $C\sqrt{\rho n\mathrm{log}(n)}\leq \sigma_{K}(\Omega)\leq \|\Omega\|\leq C\rho n$, thus we have $\mathrm{log}(n)\leq C\rho n$ which is consistent with our Assumption (A1).
\end{itemize}
If we further assume that $K=O(1), \lambda_{K}(\Pi'\Pi)=O(\frac{n}{K})$ and $\sigma_{K}(\tilde{P})=O(1)$, the row-wise eigenspace error is of order $\frac{1}{\sqrt{n}}\sqrt{\frac{\mathrm{log}(n)}{\rho n}}$, which is consistent with the row-wise eigenvector deviation of \cite{lei2019unified}'s result shown in their Table 2. Next theorem gives theoretical bounds on the estimations of memberships under MMSB.
\begin{thm}\label{Main}
Under $MMSB_{n}(K,\tilde{P},\Pi,\rho)$, suppose conditions in Lemma \ref{rowwiseerror} hold, there exists a permutation matrix $\mathcal{P}\in\mathbb{R}^{K\times K}$ such that with probability at least $1-o(n^{-\alpha})$, we have
\begin{align*}
\mathrm{max}_{i\in[n]}\|e'_{i}(\hat{\Pi}-\Pi\mathcal{P})\|_{1}=O(\varpi K\kappa(\Pi'\Pi)\sqrt{\lambda_{1}(\Pi'\Pi)}).
\end{align*}
\end{thm}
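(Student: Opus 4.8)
The plan is to propagate the row-wise eigenspace bound $\varpi$ of Lemma~\ref{rowwiseerror} through the two remaining operations of Algorithm~\ref{alg:SPACL}, namely the SP vertex hunting on the rows of $\hat U$ and the map $\hat U\mapsto\hat Z=\hat U\hat U^{-1}(\hat{\mathcal I},:)\mapsto\hat\Pi$, keeping track only of the conditioning of the corner matrix $U(\mathcal I,:)$. First I would record the oracle geometry. Set $V:=U(\mathcal I,:)\in\mathbb R^{K\times K}$, so that $U=\Pi V$ and, as in Section~\ref{intMMSB}, $Z=UV^{-1}=\Pi$. From $U'U=I_K$ we get $V'(\Pi'\Pi)V=I_K$, hence $VV'=(\Pi'\Pi)^{-1}$, which gives $\sigma_K(V)=\lambda_1^{-1/2}(\Pi'\Pi)$, $\sigma_1(V)=\lambda_K^{-1/2}(\Pi'\Pi)$, $\kappa(V)=\sqrt{\kappa(\Pi'\Pi)}$, and, since every row of $U$ is a convex combination of rows of $V$, $\|U\|_{2\rightarrow\infty}\le\|V\|_{2\rightarrow\infty}\le\|V\|=\lambda_K^{-1/2}(\Pi'\Pi)$.

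Next I would control $\hat U$ both in operator norm and row-wise. By Lemma~\ref{BoundAOmega} and Weyl's inequality, $\sigma_{K+1}(A)\le\|A-\Omega\|\le C\sqrt{\rho n\log(n)}$, so together with $\sigma_K(\Omega)\ge C\sqrt{\rho n\log(n)}$ (the constant taken large) the Davis--Kahan $\sin\Theta$ theorem yields $\|\hat U\hat U'-UU'\|\le C\|A-\Omega\|/\sigma_K(\Omega)<1$. Let $O$ be the orthogonal polar factor of $U'\hat U$ (the Procrustes/sign matrix). Splitting $\hat U-UO=(\hat U\hat U'-UU')\hat U+U(U'\hat U-O)$, the first term has $\|\cdot\|_{2\rightarrow\infty}$ equal to $\varpi$ (as $\|\hat U\|=1$), and the second is at most $\|U\|_{2\rightarrow\infty}\|U'\hat U-O\|\le\|U\|_{2\rightarrow\infty}\|\hat U\hat U'-UU'\|^{2}$, which is negligible relative to $\varpi$ under $\sigma_K(\Omega)\ge C\sqrt{\rho n\log(n)}$; hence $\|\hat U-UO\|_{2\rightarrow\infty}\le C\varpi$. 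Now the rows of $UO$ form the ideal simplex with corner matrix $VO$ (rows indexed by $\mathcal I$), and $\hat U$ is this object perturbed row-wise by at most $C\varpi$; since $\sigma_K(VO)=\lambda_1^{-1/2}(\Pi'\Pi)$, the robustness guarantee for SP established in Appendix~\ref{VHAandExactly} (cf.~\cite{gillis2015semidefinite}) supplies a permutation matrix $\mathcal P$ with $\|\hat U(\hat{\mathcal I},:)-\mathcal P VO\|_{2\rightarrow\infty}\le C\varpi\,\kappa(V)$, provided $\varpi$ lies below the SP robustness threshold, which holds under the hypotheses of Lemma~\ref{rowwiseerror}. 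Writing $\hat V:=\hat U(\hat{\mathcal I},:)$, Weyl's inequality then gives $\sigma_K(\hat V)\ge\sigma_K(V)/2$, so $\hat V$ is invertible with $\|\hat V^{-1}\|\le 2\lambda_1^{1/2}(\Pi'\Pi)$.

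Finally I would combine. Since $Z\mathcal P'=UV^{-1}\mathcal P'=UO(\mathcal PVO)^{-1}$, write $\hat Z-Z\mathcal P'=(\hat U-UO)\hat V^{-1}+UO\big(\hat V^{-1}-(\mathcal PVO)^{-1}\big)$, so that, row-wise,
\[
\|\hat Z-Z\mathcal P'\|_{2\rightarrow\infty}\le\|\hat U-UO\|_{2\rightarrow\infty}\|\hat V^{-1}\|+\|U\|_{2\rightarrow\infty}\,\|\hat V^{-1}\|\,\|(\mathcal PVO)^{-1}\|\,\|\hat V-\mathcal PVO\|.
\]
Plugging in $\|\hat U-UO\|_{2\rightarrow\infty}\le C\varpi$, $\|\hat V^{-1}\|\le 2\lambda_1^{1/2}(\Pi'\Pi)$, $\|(\mathcal PVO)^{-1}\|=\lambda_1^{1/2}(\Pi'\Pi)$, $\|U\|_{2\rightarrow\infty}\le\lambda_K^{-1/2}(\Pi'\Pi)$, $\|\hat V-\mathcal PVO\|\le\sqrt K\,\|\hat V-\mathcal PVO\|_{2\rightarrow\infty}\le C\sqrt K\,\varpi\,\kappa(V)$, and $\kappa(V)^{2}=\kappa(\Pi'\Pi)$, the right-hand side is $O(\varpi\sqrt K\,\kappa(\Pi'\Pi)\lambda_1^{1/2}(\Pi'\Pi))$. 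Passing from $\hat Z$ to $\hat\Pi$ costs only constants: the entrywise truncation $\hat Z\mapsto\max(0,\hat Z)$ cannot increase the distance to the nonnegative target $\Pi\mathcal P'$, so the same row-wise bound holds for $\max(0,\hat Z)-\Pi\mathcal P'$; and since each row of $\Pi\mathcal P'$ has unit $\ell_1$-norm while the row-wise $\ell_1$ deviation of $\max(0,\hat Z)$ is $o(1)$, a standard normalization estimate gives $\|e_i'(\hat\Pi-\Pi\mathcal P')\|_1\le 4\|e_i'(\max(0,\hat Z)-\Pi\mathcal P')\|_1$ for every $i$. Converting the $2\rightarrow\infty$ (row-wise $\ell_2$) bound to a row-wise $\ell_1$ bound with a further $\sqrt K$ and renaming $\mathcal P':=\mathcal P$ yields $\max_{i\in[n]}\|e_i'(\hat\Pi-\Pi\mathcal P)\|_1=O(\varpi K\,\kappa(\Pi'\Pi)\sqrt{\lambda_1(\Pi'\Pi)})$, as claimed.

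The main obstacle is the accounting in the last two steps: getting exactly $\kappa(\Pi'\Pi)$, rather than a higher power of the condition number, requires the SP robustness bound to be linear in $\kappa(U(\mathcal I,:))$ and the crude estimate $\|U\|_{2\rightarrow\infty}\le\lambda_K^{-1/2}(\Pi'\Pi)$ not to leak additional factors when multiplied against the two copies of $\|V^{-1}\|$. One must also verify that all the smallness requirements invoked along the way — invertibility and $\sigma_K$-stability of $\hat V$, the SP perturbation being below its robustness threshold, and negligibility of the second-order Procrustes term — are genuinely implied by $\sigma_K(\Omega)\ge C\sqrt{\rho n\log(n)}$ once $C$ is taken large enough, which is precisely where the hypotheses of Lemma~\ref{rowwiseerror} enter.
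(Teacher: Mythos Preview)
Your overall route is exactly the one the paper (via \cite{mao2020estimating}) follows: bound $\|\hat U-UO\|_{2\to\infty}$ by $\varpi$, apply SP robustness to locate the corners, push the perturbation through $\hat Z=\hat U\hat V^{-1}$, and normalize. The normalization step and the reduction to a row-wise $\ell_2$ bound on $\hat Z-Z\mathcal P'$ are handled correctly.

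There is, however, one concrete gap, and it is precisely the point you flag as an ``obstacle'' at the end. Theorem~\ref{gillis2015siamSP} (Appendix~\ref{VHAandExactly}) gives the SP output error as $O(\epsilon\,\kappa^{2}(S_{sp}))$, not $O(\epsilon\,\kappa(S_{sp}))$. With $S_{sp}=(VO)'$ this yields
\[
\|\hat V-\mathcal P V O\|_{2\to\infty}=O(\varpi\,\kappa^{2}(V))=O(\varpi\,\kappa(\Pi'\Pi)),
\]
not $O(\varpi\,\kappa(V))$. Plugging this corrected factor into your second term, together with the crude $\|U\|_{2\to\infty}\le\lambda_K^{-1/2}(\Pi'\Pi)$, produces
\[
\|U\|_{2\to\infty}\,\|\hat V^{-1}\|\,\|(\mathcal P VO)^{-1}\|\,\|\hat V-\mathcal P VO\|
=O\bigl(\varpi\sqrt{K}\,\kappa(\Pi'\Pi)^{3/2}\lambda_1^{1/2}(\Pi'\Pi)\bigr),
\]
which overshoots the target by $\sqrt{\kappa(\Pi'\Pi)}$.

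The fix is not a sharper SP theorem but a sharper treatment of the second term: do not split off $\|U\|_{2\to\infty}$. Observe that $UO(\mathcal P VO)^{-1}=UV^{-1}\mathcal P'=Z\mathcal P'=\Pi\mathcal P'$, so
\[
e_i'UO\bigl(\hat V^{-1}-(\mathcal P VO)^{-1}\bigr)
= e_i'\Pi\mathcal P'\,(\mathcal P VO-\hat V)\,\hat V^{-1},
\]
and since $\|e_i'\Pi\|_F\le\|e_i'\Pi\|_1=1$, this row has $\ell_2$ norm at most
\[
\|\hat V-\mathcal P VO\|\cdot\|\hat V^{-1}\|
=O\bigl(\sqrt{K}\,\varpi\,\kappa(\Pi'\Pi)\bigr)\cdot O\bigl(\lambda_1^{1/2}(\Pi'\Pi)\bigr),
\]
which is exactly the paper's $O(\varpi\sqrt{K}\,\kappa(\Pi'\Pi)\lambda_1^{1/2}(\Pi'\Pi))$. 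In other words, the identity $UV^{-1}=\Pi$ absorbs one factor of $\|V^{-1}\|$ and the $\|U\|_{2\to\infty}$ simultaneously; this is the step your write-up is missing, and it is what makes the final power of $\kappa(\Pi'\Pi)$ come out as $1$ rather than $3/2$.
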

\begin{rem}
(Comparison to Theorem 3.2 \cite{mao2020estimating}) Consider a special case by setting $\kappa(\Pi'\Pi)=O(1)$, i.e., $\lambda_{K}(\Pi'\Pi)=O(\frac{n}{K})$ and $\lambda_{1}(\Pi'\Pi)=O(\frac{n}{K})$. We focus on comparing the dependencies on $K$ in bounds of our Theorem \ref{Main} and Theorem 3.2 \cite{mao2020estimating}. Under this case, the bound of our Theorem \ref{Main} is proportional to $K^{2}$ by basic algebra; since $\mathrm{min}(K^{2},\kappa^{2}(\Omega))=\mathrm{min}(K^{2},O(1))=O(1)$ and the bound in Theorem 3.2 \cite{mao2020estimating} should multiply $\sqrt{K}$ because (in \cite{mao2020estimating}'s language) $\|\hat{V}^{-1}_{p}\|_{F}\leq\frac{\sqrt{K}}{\sigma_{K}(\hat{V}_{p})}$ instead of $\|\hat{V}^{-1}_{p}\|_{F}=\frac{1}{\lambda_{K}(\hat{V}_{p})}$ in Eq (45) \cite{mao2020estimating}, the power of $K$ is 2 by checking the bound of Theorem 3.2 \cite{mao2020estimating}. Meanwhile, note that our bound in Theorem \ref{MainDCMM} is $l_{1}$ bound while bound in Theorem 3.2 \cite{mao2020estimating} is $l_{2}$ bound, when we translate the $l_{2}$ bound of Theorem 3.2 \cite{mao2020estimating} into $l_{1}$ bound, the power of $K$ is 2.5 for Theorem 3.2 \cite{mao2020estimating}. Hence, our bound in Theorem \ref{Main} has less dependence on $K$ than that of Theorem 3.2 \cite{mao2020estimating}, and this is also consistent with the first bullet given after Lemma \ref{rowwiseerror}.
\end{rem}

\begin{table}
%\footnotesize
\scriptsize
\centering
\caption{Comparison of error rates between our Theorem \ref{Main} and Theorem 3.2 \cite{mao2020estimating} under $MMSB_{n}(K,\tilde{P},\Pi,\rho)$. The dependence on $K$ is obtained when $\kappa(\Pi'\Pi)=O(1)$. For comparison, we have adjusted the $l_{2}$ error rates of Theorem 3.2 \cite{mao2020estimating} into $l_{1}$ error rates. Note that as analyzed in the first bullet given after Lemma \ref{rowwiseerror}, whether using $\|A-\Omega\|\leq C\sqrt{\rho n\mathrm{log}(n)}$ or $\|A_{\mathrm{re}}-\Omega\|\leq C\sqrt{\rho n}$ does not change our $\varpi$, and has no influence on bound in Theorem \ref{Main}. For \cite{mao2020estimating}: using $\|A_{\mathrm{re}}-\Omega\|\sqrt{\rho n}$, the power of $\mathrm{log}(n)$ in their Theorem 3.2 is $\xi$; using $\|A-\Omega\|\sqrt{\rho n\mathrm{log}(n)}$, the power of $\mathrm{log}(n)$ in their Theorem 3.2 is $\xi+0.5$.}
\label{CompareMMSB}
%\resizebox{\columnwidth}{!}{
\begin{tabular}{c|ccc|cc|cccc}
\toprule
&$\rho n$&$\sigma_{K}(\Omega)$&$\lambda_{K}(\Pi'\Pi)$&Dependence on $K$&Dependence on $\mathrm{log}(n)$\\
\midrule
Ours&$\geq\mathrm{log}(n)$&$\succeq \sqrt{\rho n\mathrm{log}(n)}$&$>0$&$K^{2}$&$\mathrm{log}^{0.5}(n)$\\
\hline
\cite{mao2020estimating}&$\geq\mathrm{log}^{2\xi}(n)$&$\succeq \sqrt{\rho n}\mathrm{log}^{\xi}(n)$&$\geq1/\rho$&$K^{2.5}$&$\mathrm{log}^{\xi}(n)$\\
\bottomrule
\end{tabular}%}
\end{table}
Table \ref{CompareMMSB} summaries the necessary conditions and dependence on model parameters of rates in Theorem \ref{Main} and Theorem 3.2 \cite{mao2020estimating} for comparison. The following corollary is obtained by adding conditions on model parameters similar as Corollary 3.1 in \cite{mao2020estimating}.
\begin{cor}\label{AddConditions}
Under $MMSB_{n}(K,\tilde{P},\Pi,\rho)$, when conditions of Lemma \ref{rowwiseerror} hold, suppose $\lambda_{K}(\Pi'\Pi)=O(\frac{n}{K})$ and $K=O(1)$, with probability at least $1-o(n^{-\alpha})$, we have
\begin{align*}
\mathrm{max}_{i\in[n]}\|e'_{i}(\hat{\Pi}-\Pi\mathcal{P})\|_{1}=O(\frac{1}{\sigma_{K}(\tilde{P})}\sqrt{\frac{\mathrm{log}(n)}{\rho n}}).
\end{align*}
\end{cor}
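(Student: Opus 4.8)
The plan is to obtain Corollary~\ref{AddConditions} directly from Theorem~\ref{Main} by substituting the two extra structural hypotheses and simplifying the resulting expression. Theorem~\ref{Main} already gives, with probability $1-o(n^{-\alpha})$,
\begin{align*}
\max_{i\in[n]}\|e'_{i}(\hat{\Pi}-\Pi\mathcal{P})\|_{1}=O\big(\varpi\, K\,\kappa(\Pi'\Pi)\sqrt{\lambda_{1}(\Pi'\Pi)}\big),
\end{align*}
where $\varpi$ is the row-wise eigenspace error bounded in Lemma~\ref{rowwiseerror}. Since the conditions of Lemma~\ref{rowwiseerror} are assumed, Assumption~(A1) and $\sigma_{K}(\Omega)\geq C\sqrt{\rho n\mathrm{log}(n)}$ hold, so the high-probability event and all the intermediate bounds carry over verbatim; it remains only to control the three factors $\varpi$, $\kappa(\Pi'\Pi)$ and $\sqrt{\lambda_{1}(\Pi'\Pi)}$ under $\lambda_{K}(\Pi'\Pi)=O(n/K)$ and $K=O(1)$.

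First I would bound $\lambda_{1}(\Pi'\Pi)$ from above. Because $\Pi(i,k)\in[0,1]$ and $\sum_{k}\Pi(i,k)=1$, we have $\Pi(i,k)^{2}\leq\Pi(i,k)$, hence $\|\Pi\|_{F}^{2}=\sum_{i,k}\Pi(i,k)^{2}\leq\sum_{i,k}\Pi(i,k)=n$, so $\lambda_{1}(\Pi'\Pi)\leq\mathrm{trace}(\Pi'\Pi)=\|\Pi\|_{F}^{2}\leq n$ and $\sqrt{\lambda_{1}(\Pi'\Pi)}\leq\sqrt{n}$. Reading the hypothesis $\lambda_{K}(\Pi'\Pi)=O(n/K)$ as the two-sided order statement $\lambda_{K}(\Pi'\Pi)\asymp n/K$ standard in this literature (the upper bound alone being automatic from $K\lambda_{K}(\Pi'\Pi)\leq\mathrm{trace}(\Pi'\Pi)\leq n$), it follows that $\kappa(\Pi'\Pi)=\lambda_{1}(\Pi'\Pi)/\lambda_{K}(\Pi'\Pi)=O(K)=O(1)$. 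Substituting $\lambda_{K}(\Pi'\Pi)\asymp n/K$ into Lemma~\ref{rowwiseerror}, and noting that $\varpi_{1}$ and $\varpi_{2}$ then coincide as recorded right after that lemma, gives $\varpi=O\big(\frac{K^{1.5}}{\sigma_{K}(\tilde{P})}\,\frac{1}{\sqrt{n}}\sqrt{\frac{\mathrm{log}(n)}{\rho n}}\big)$.

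Plugging the three estimates into the bound of Theorem~\ref{Main} yields
\begin{align*}
\varpi\, K\,\kappa(\Pi'\Pi)\sqrt{\lambda_{1}(\Pi'\Pi)}=O\Big(\frac{K^{1.5}}{\sigma_{K}(\tilde{P})}\,\frac{1}{\sqrt{n}}\sqrt{\frac{\mathrm{log}(n)}{\rho n}}\cdot K\cdot\sqrt{n}\Big)=O\Big(\frac{K^{2.5}}{\sigma_{K}(\tilde{P})}\sqrt{\frac{\mathrm{log}(n)}{\rho n}}\Big),
\end{align*}
and since $K=O(1)$ the factor $K^{2.5}$ is absorbed into the implicit constant, leaving $O\big(\frac{1}{\sigma_{K}(\tilde{P})}\sqrt{\frac{\mathrm{log}(n)}{\rho n}}\big)$ on the same $1-o(n^{-\alpha})$ event, which is the claim. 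I do not expect any real obstacle: the argument is pure bookkeeping of the powers of $K$ and $\mathrm{log}(n)$. The only points that call for a little care are keeping $\sigma_{K}(\tilde{P})$ as a live model parameter rather than crudely bounding it away, and being explicit that the hypothesis on $\lambda_{K}(\Pi'\Pi)$ has to be used as a lower bound as well, since what is really being tamed is the factor $\lambda_{K}^{-1.5}(\Pi'\Pi)$ hidden inside $\varpi$.
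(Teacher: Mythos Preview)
Your proposal is correct and follows essentially the same route as the paper: both start from the bound of Theorem~\ref{Main}, simplify $K\kappa(\Pi'\Pi)\sqrt{\lambda_{1}(\Pi'\Pi)}$ to $O(\sqrt{n})$ under the extra hypotheses, and then plug in the simplified $\varpi=O\big(\frac{1}{\sigma_{K}(\tilde{P})}\frac{1}{\sqrt{n}}\sqrt{\frac{\mathrm{log}(n)}{\rho n}}\big)$ recorded right after Lemma~\ref{rowwiseerror}. Your write-up is in fact more explicit than the paper's---you justify $\lambda_{1}(\Pi'\Pi)\leq n$ via the trace and you flag that the hypothesis $\lambda_{K}(\Pi'\Pi)=O(n/K)$ must be read two-sided to control $\kappa(\Pi'\Pi)$ and the $\lambda_{K}^{-1.5}$ inside $\varpi$---whereas the paper's proof simply asserts the two reductions in one line each.
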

\begin{rem}
Consider a special case in Corollary \ref{AddConditions} by setting $\sigma_{K}(\tilde{P})$ as a constant, we see that the error bound $O(\sqrt{\frac{\mathrm{log}(n)}{\rho n}})$ in Corollary \ref{AddConditions} is directly related with Assumption (A1), and for consistent estimation, $\rho$ should grow faster than $\frac{\mathrm{log}(n)}{n}$.
\end{rem}
\begin{rem}
Under the setting of Corollary \ref{AddConditions}, the requirement $\sigma_{K}(\Omega)\geq C\sqrt{\rho n\mathrm{log}(n)}$ in Lemma \ref{rowwiseerror} holds naturally. By Lemma II.4 \cite{mao2020estimating}, we know that $\sigma_{K}(\Omega)\geq \rho \sigma_{K}(\tilde{P})\lambda_{K}(\Pi'\Pi)=C\rho n\sigma_{K}(\tilde{P})$. To make the requirement $\sigma_{K}(\Omega)\geq C\sqrt{\rho n\mathrm{log}(n)}$ always hold, we just need $C\rho n\sigma_{K}(\tilde{P})\geq C\sqrt{\rho n\mathrm{log}(n)}$, which gives that $\sigma_{K}(\tilde{P})\geq C\sqrt{\frac{\mathrm{log}(n)}{\rho n}}$, and it just matches with the requirement of consistent estimation of memberships in Corollary \ref{AddConditions}.
\end{rem}
\begin{rem}\label{ComMaoSeparation}
(Comparison to Theorem 3.2 \cite{mao2020estimating}) When $K=O(1)$ and $\lambda_{K}(\Pi'\Pi)=O(\frac{n}{K})$, by the first bullet in the analysis given after Lemma \ref{rowwiseerror}, the row-wise eigenspace error of Theorem 3.1 \cite{,mao2020estimating} is $O(\frac{\mathrm{log}^{\xi}(n)}{\sigma_{K}(\tilde{P})\sqrt{\rho}n})$, and it gives that their error bound on estimation membership given in their Eq (3) is $O(\frac{\mathrm{log}^{\xi}(n)}{\sigma_{K}(\tilde{P})\sqrt{\rho n}})$, which is $\mathrm{log}^{\xi-0.5}(n)$ times of the bound in our Lemma \ref{AddConditions}.
\end{rem}
\begin{rem}
(Comparison to Theorem 2.2 \cite{MixedSCORE}) Replacing the $\Theta$ in \cite{MixedSCORE} by $\Theta=\sqrt{\rho}I$, their DCMM model  degenerates to MMSB. Then their conditions in Theorem 2.2 are our Assumption (A1) and $\lambda_{K}(\Pi'\Pi)=O(\frac{n}{K})$ for MMSB. When $K=O(1)$, the error bound in Theorem 2.2 in \cite{MixedSCORE} is
$O(\frac{1}{\sigma_{K}(\tilde{P})}\sqrt{\frac{\mathrm{log}(n)}{\rho n}})$, which is consistent with ours.
\end{rem}
\section{Separation condition and sharp threshold criterion}\label{MainSCSTC}
After obtaining the Corollary \ref{AddConditions} under MMSB, now we are ready to give our criterion after introducing separation condition of a standard network and sharp threshold of ER random graph $G(n,p)$ in this section.

\textbf{\textit{Separation condition.}} Consider a standard network by setting $\tilde{P}=\omega I_{K}+(1-\omega)\mathbf{1}\mathbf{1}'$ for $\omega\in(0,1]$ (we have $\sigma_{K}(\tilde{P})=\omega$) under the settings of Corollary \ref{AddConditions}. Note that we have $\Omega=\Pi \rho \tilde{P}\Pi'=\Pi'P\Pi'$, where $P=\rho \tilde{P}$ and $P$ is the probability matrix. For convenience, set $p_{\mathrm{in}}=\rho, p_{\mathrm{out}}=\rho(1-\omega)$ (note that we have $p_{\mathrm{in}}>p_{\mathrm{out}}$ when $\omega\in(0,1]$.). (a) Under such $\tilde{P}$ and settings in Corollary \ref{AddConditions}, since the error rate is $O(\frac{1}{\omega}\sqrt{\frac{\mathrm{log}(n)}{\rho n}})$, to obtain consistency estimation, $\omega$ should grow faster than $\sqrt{\frac{\mathrm{log}(n)}{\rho n}}$. Therefore, the separation condition  $\frac{|p_{\mathrm{in}}-p_{\mathrm{out}}|}{\sqrt{p_{\mathrm{in}}}}=\omega\sqrt{\rho}$ (also known as relative edge probability gap) should grow faster than $\sqrt{\frac{\mathrm{log}(n)}{n}}$ which is consistent with Corollary 1 of \cite{mcsherry2001spectral} and Eq (17) of \cite{joseph2016impact}. (b) Undoubtedly, this separation condition is consistent with that of \cite{MixedSCORE},  since Theorem 2.2 \citep{MixedSCORE} shares the same error rate $O(\frac{1}{\sigma_{K}(\tilde{P})}\sqrt{\frac{\mathrm{log}(n)}{\rho n}})$ for this standard network. (c) Furthermore, by Remark \ref{ComMaoSeparation}, using $\|A_{\mathrm{re}}-\Omega\|\leq C\sqrt{\rho n}$, we know that \cite{mao2020estimating}'s Eq (3) is $O(\frac{\mathrm{log}^{\xi}(n)}{\sigma_{K}(\tilde{P})\sqrt{\rho n}})$, follow similar analysis, we see that the separation condition for \cite{mao2020estimating} is $\frac{\mathrm{log}^{\xi}(n)}{\sqrt{n}}$, which is sub-optimal compared with ours. Using $\|A-\Omega\|\leq C\sqrt{\rho n\mathrm{log}(n)}$, \cite{mao2020estimating}'s Eq (3) is $O(\frac{\mathrm{log}^{\xi+0.5}(n)}{\sigma_{K}(\tilde{P})\sqrt{\rho n}})$, follow similar analysis, we see that the separation condition for \cite{mao2020estimating} now is $\frac{\mathrm{log}^{\xi+0.5}(n)}{\sqrt{n}}$. (d) For comparison, the error bound of Corollary 3.2 \cite{lei2015consistency} built under SBM for community detection is $O(\frac{1}{\sigma^{2}_{K}(\tilde{P})\rho n})$ when $\kappa(\Pi'\Pi)=O(1)$ and $K=O(1)$. Then follow similar analysis, we see that the separation condition for \cite{lei2015consistency} should grow faster than $\frac{1}{\sqrt{n}}$. However, as we analyzed in the first bullet given after lemma \ref{rowwiseerror}, \cite{lei2015consistency} applies $\|A_{re}-\Omega\|\leq C\sqrt{\rho n}$ to build their consistency results. Instead, we apply $\|A-\Omega\|\leq C\sqrt{\rho n\mathrm{log}(n)}$ to built \cite{lei2015consistency}'s theoretical results, the error bound of Corollary 3.2 \cite{lei2015consistency} is $O(\frac{\mathrm{log}(n)}{\sigma^{2}_{K}(\tilde{P})\rho n})$, which returns same separation condition as ours Lemma \ref{AddConditions} and \cite{MixedSCORE}'s Theorem 2.2 now.  As analyzed in the first bullet given after Lemma \ref{rowwiseerror}, whether using $\|A-\Omega\|\leq C\sqrt{\rho n\mathrm{log}(n)}$ or $\|A_{\mathrm{re}}-\Omega\|\leq C\sqrt{\rho n}$ does not change our error rates. By carefully analyzing the proof of 2.1 of \cite{MixedSCORE}, we see that whether using $\|A-\Omega\|\leq C\sqrt{\rho n\mathrm{log}(n)}$ or $\|A_{\mathrm{re}}-\Omega\|\leq C\sqrt{\rho n}$ also does not change their row-wise large deviation, hence it does not influence their upper bound of error rate for their Mixed-SCORE.

Similar as \cite{abbe2016exact}, set $p_{\mathrm{in}}=\alpha_{\mathrm{in}}\frac{\mathrm{log}(n)}{n}, p_{\mathrm{out}}=\alpha_{\mathrm{out}}\frac{\mathrm{log}(n)}{n}$ (note that $\alpha_{\mathrm{in}}>\alpha_{\mathrm{out}}$ when $\omega\in(0,1]$.), we can obtain an alternative version of separation condition $\frac{\alpha_{\mathrm{in}}-\alpha_{\mathrm{out}}}{\sqrt{\alpha_{\mathrm{in}}}}$ such that if $\frac{\alpha_{\mathrm{in}}-\alpha_{\mathrm{out}}}{\sqrt{\alpha_{\mathrm{in}}}}\gg1$,  recovering the memberships for with high probability is possible, and vice verse. In this paper, we call $\frac{|\alpha_{\mathrm{in}}-\alpha_{\mathrm{out}}|}{\sqrt{\alpha_{\mathrm{in}}}}$ as alternative separation condition. Now we provide the details. Since $p_{\mathrm{in}}=\alpha_{\mathrm{in}}\frac{\mathrm{log}(n)}{n}=\rho$ and $p_{\mathrm{out}}=\alpha_{\mathrm{out}}\frac{\mathrm{log}(n)}{n}=\rho(1-\omega)$, we have $p_{\mathrm{in}}-p_{\mathrm{out}}=(\alpha_{\mathrm{in}}-\alpha_{\mathrm{out}})\frac{\mathrm{log}(n)}{n}=\rho\omega$ and $\rho=\alpha_{\mathrm{in}}\frac{\mathrm{log}(n)}{n}$. (a') Under such $P$ and settings in our Corollary \ref{AddConditions}, since the error rate is $O(\frac{1}{\omega}\sqrt{\frac{\mathrm{log}(n)}{\rho n}})$, for consistent estimation, $\omega$ should grow faster than $\sqrt{\frac{\mathrm{log}(n)}{\rho n}}$. Hence, $\rho\omega$ should grow faster than $\sqrt{\frac{\rho\mathrm{log}(n)}{n}}$. Since $\rho\omega=(\alpha_{\mathrm{in}}-\alpha_{\mathrm{out}})\frac{\mathrm{log}(n)}{n}$, we have $(\alpha_{\mathrm{in}}-\alpha_{\mathrm{out}})\frac{\mathrm{log}(n)}{n}\gg\sqrt{\frac{\rho\mathrm{log}(n)}{n}}$, and it gives  $\frac{\alpha_{\mathrm{in}}-\alpha_{\mathrm{out}}}{\sqrt{\alpha_{\mathrm{in}}}}\gg 1$ since $\rho=\alpha_{\mathrm{in}}\frac{\mathrm{log}(n)}{n}$. (b') Since Theorem 2.2 \citep{MixedSCORE} shares the same error rate $O(\frac{1}{\sigma_{K}(\tilde{P})}\sqrt{\frac{\mathrm{log}(n)}{\rho n}})$, \cite{MixedSCORE} enjoys the same alternative separation condition. (c') Using $\|A_{\mathrm{re}}-\Omega\|\leq C\sqrt{\rho n}$, \cite{mao2020estimating}'s Eq (3) is $O(\frac{\mathrm{log}^{\xi}(n)}{\sigma_{K}(\tilde{P})\sqrt{\rho n}})$, follow similar analysis, the alternative separation condition for \cite{mao2020estimating} is $\frac{\alpha_{\mathrm{in}}-\alpha_{\mathrm{out}}}{\sqrt{\alpha_{\mathrm{in}}}}\gg \mathrm{log}^{\xi-0.5}(n)$. Using $\|A-\Omega\|\leq C\sqrt{\rho n\mathrm{log}(n)}$, \cite{mao2020estimating}'s Eq (3) is $O(\frac{\mathrm{log}^{\xi+0.5}(n)}{\sigma_{K}(\tilde{P})\sqrt{\rho n}})$, then the  alternative separation condition for \cite{mao2020estimating} now is $\frac{\alpha_{\mathrm{in}}-\alpha_{\mathrm{out}}}{\sqrt{\alpha_{\mathrm{in}}}}\gg \mathrm{log}^{\xi}(n)$. (d') Using $\|A_{re}-\Omega\|\leq C\sqrt{\rho n}$, error bound of Corollary 3.2 \cite{lei2015consistency} built under SBM is $O(\frac{1}{\sigma^{2}_{K}(\tilde{P})\rho n})$ when $\kappa(\Pi'\Pi)=O(1)$ and $K=O(1)$. Follow similar analysis, the alternative separation condition for \cite{lei2015consistency} is $\frac{\alpha_{\mathrm{in}}-\alpha_{\mathrm{out}}}{\sqrt{\alpha_{\mathrm{in}}}}\gg \sqrt{\frac{1}{\mathrm{log}(n)}}$. Using $\|A-\Omega\|\leq C\sqrt{\rho n\mathrm{log}(n)}$ , the error bound of Corollary 3.2 \cite{lei2015consistency} is $O(\frac{\mathrm{log}(n)}{\sigma^{2}_{K}(\tilde{P})\rho n})$, which returns same alternative separation condition as ours Lemma \ref{AddConditions} and \cite{MixedSCORE}'s Theorem 2.2 now.
\begin{rem}
A large body of literature in statistics and computer science \cite{abbe2015community,abbe2016exact,hajek2016achieving,agarwal2017multisection,bandeira2018random} has focused on detecting communities of network with 2 equal size clusters under SBM, and finds that recovering the communities is possible when
$\sqrt{\alpha_{\mathrm{in}}}-\sqrt{\alpha_{\mathrm{out}}}>\sqrt{2}$. This threshold can be achieved by semidefinite relaxations \cite{abbe2016exact,hajek2016achieving,agarwal2017multisection,bandeira2018random} and spectral methods with local refinements \cite{abbe2015community,gao2017achieving} . For our alternative separation condition $\frac{\alpha_{\mathrm{in}}-\alpha_{\mathrm{out}}}{\sqrt{\alpha_{\mathrm{in}}}}\gg1$, though it has more rougher form than that of $\sqrt{\alpha_{\mathrm{in}}}-\sqrt{\alpha_{\mathrm{out}}}>\sqrt{2}$, it is useful in studying optimality of estimation and comparing the error rates of different spectral methods under different models, as shown in Table \ref{aSCST}.
\end{rem}
\textbf{\textit{Sharp threshold.}} Consider the  Erd\"os-R\'enyi (ER) random graph $G(n,p)$ \citep{erdos2011on}. To construct the ER random graph $G(n,p)$, set $K=1$ and $\Pi$ is an $n\times 1$ vector with all entries being ones. Since $K=1$ and the maximum entry of $\tilde{P}$ is assumed to be 1, we have $\tilde{P}=1$ in $G(n,p)$ and hence $\sigma_{K}(\tilde{P})=1$. Then we have $\Omega=\Pi \rho \tilde{P}\Pi'=\Pi\rho\Pi'=\Pi p\Pi'$, i.e, $p=\rho$. Since the error rate is $O(\frac{1}{\sigma_{K}(\tilde{P})}\sqrt{\frac{\mathrm{log}(n)}{\rho n}})=O(\sqrt{\frac{\mathrm{log}(n)}{pn}})$, for consistent estimation, we see that $p$ should grow faster than $\frac{\mathrm{log}(n)}{n}$, which is just the sharp threshold in \citep{erdos2011on}, Theorem 4.6 \citep{blum2020foundations}, strongly consistent of \cite{zhao2012consistency}, and the first bullet in Section 2.5 \citep{abbe2017community} (call the lower bound requirement of $p$ for ER random graph to enjoy consistent estimation as sharp threshold). Since the sharp threshold is obtained when $K=1$ which means a connected ER random graph $G(n,p)$, and this is also consistent with the connectivity in Table 2 of \cite{abbe2016exact}. Meanwhile, since our Assumption (A1) requires $\rho n\geq\mathrm{log}(n)$, it gives that $p$ should grow faster than $\frac{\mathrm{log}(n)}{n}$ since $p=\rho$ under $G(n,p)$, which is consistent with the sharp threshold. Since \cite{MixedSCORE}'s Theorem 2.2 enjoys same error rate as ours under the settings in Corollary \ref{AddConditions}, \cite{MixedSCORE} also reaches the sharp threshold as $\frac{\mathrm{log}(n)}{n}$.  Furthermore, Remark \ref{ComMaoSeparation} says that bound for error rate in Eq (3)  \cite{mao2020estimating} should be $O(\frac{\mathrm{log}^{\xi}(n)}{\sigma_{K}(\tilde{P})\sqrt{\rho n}})$ when using $\|A_{\mathrm{re}}-\Omega\|\leq C\sqrt{\rho n}$, follow similar analysis, we see that the sharp threshold for \cite{mao2020estimating} is $\frac{\mathrm{log}^{2\xi}(n)}{n}$, which is sub-optimal compared with ours. When using $\|A-\Omega\|\leq C\sqrt{\rho n\mathrm{log}(n)}$, the sharp threshold for \cite{mao2020estimating} is $\frac{\mathrm{log}^{2\xi+1}(n)}{n}$. Similarly, the error bound of Corollary 3.2 \cite{lei2015consistency} is $O(\frac{1}{\sigma^{2}_{K}(\tilde{P})\rho n})\equiv O(\frac{1}{pn})$ under ER $G(n,p)$ since $p=\rho, \sigma_{K}(\tilde{P})=1$ and $K=1$. Hence, the sharp threshold obtained from the theoretical upper bound for error rates of \cite{lei2015consistency} is $\frac{1}{n}$, which does not match classical result. Instead, we apply $\|A-\Omega\|\leq C\sqrt{\rho n\mathrm{log}(n)}$ with high probability to build \cite{lei2015consistency}'s theoretical results, the error bound of Corollary 3.2 \cite{lei2015consistency} is $O(\frac{\mathrm{log}(n)}{p n})$, which returns the classical sharp threshold $\frac{\mathrm{log}(n)}{n}$ now.

Table \ref{SCST} summaries the comparisons of separation condition and sharp threshold. Table \ref{aSCST} records the respective alternative separation condition. The delicate analysis given above supports our statement that the separation condition of a standard network and sharp threshold of ER random graph $G(n,p)$ can be seen as unified criterions to compare theoretical results of spectral methods under different models. To conclude the above analysis, here we summarize the main steps to apply the separation condition and shrap threshold criterion (SCSTC for short) to check the consistency of theoretical results or compare results of spectral methods under different models, where spectral methods means methods developed based on the application of the eigenvectors or singular vectors of the adjacency matrix or its variants for community detection. The four-stage SCSTC is given below:
\begin{itemize}
  \item [{$step_{1}$}] Check whether the theoretical upper bound of error rate contains $\sigma_{K}(\tilde{P})$, where the separation parameter $\sigma_{K}(\tilde{P})$ always appears when considering the lower bound of $\sigma_{K}(\Omega)$. If it contains $\sigma_{K}(\tilde{P})$, move to the next step. Otherwise, it suggests possible improvements for the consistency by considering $\sigma_{K}(\tilde{P})$ in the proofs.
  \item [{$step_{2}$}] Let the number of communities as $O(1)$ and the network degenerate to standard network whose numbers of nodes in each community are in the same order and can been seen as $O(\frac{n}{K})$. Let the model degenerate to SBM and then obtain the newly theoretical upper bound of error rate. Note that if the model does consider degree heterogeneity, the sparsity parameter $\rho$ should be considered in the theoretical upper bound of error rate in $step_{1}$. If the model considers degree heterogeneity, when it degenerates to SBM, $\rho$ appears at this step. Meanwhile, if $\rho$ is not contained in the error rate of $step_{1}$ when the model does not consider degree heterogeneity, it suggests possible improvements by considering $\rho$.
  \item [{$step_{3}$}] Let $\tilde{P}=\omega I_{K}+(1-\omega)\mathbf{1}\mathbf{1}'$ for $0<\omega<1$ (note that $\sigma_{K}(\tilde{P})=\omega$), set $P=\rho \tilde{P}$ as the probability matrix when the model degenerates to SBM. Next compute the lower bound requirement of $\omega$ for consistency estimation through analyzing the newly bound obtained in the last step (note that, we have $p_{\mathrm{in}}=\rho, p_{\mathrm{out}}=\rho(1-\omega)$  and $p_{\mathrm{in}}-p_{\mathrm{out}}=\rho\omega$ under the above settings of SCSTC). Compute the separation condition $\frac{|p_{\mathrm{in}}-p_{\mathrm{out}}|}{\sqrt{p_{\mathrm{in}}}}=\omega\sqrt{\rho}$ using the lower bound requirement for $\omega$. The sharp threshold for ER random graph $G(n,p)$ is obtained from the lower bound requirement on $\rho$ for consistency estimation under the setting that $K=1, \sigma_{K}(\tilde{P})=1$ and $p=\rho$.
  \item [{$step_{4}$}] Compare the separation condition and sharp threshold obtained in the last step with the classical results in Corollary 1 of \cite{mcsherry2001spectral} and the first bullet in Section 2.5 \citep{abbe2017community} (or our results given in Table \ref{SCST}), respectively. If the sharp threshold $\gg\frac{\mathrm{log}(n)}{n}$ or separation condition $\gg\sqrt{\frac{\mathrm{log}(n)}{n}}$, then this leaves improvements on the network sparsity or theoretical upper bound of error rate. If the sharp threshold is $\frac{\mathrm{log}(n)}{n}$ and the separation condition is $\sqrt{\frac{\mathrm{log}(n)}{n}}$, the optimality of theoretical results on both error rates and requirement of network sparsity is guaranteed. Finally, if the sharp threshold $\ll\frac{\mathrm{log}(n)}{n}$ or separation condition $\ll\sqrt{\frac{\mathrm{log}(n)}{n}}$, this suggests that the theoretical result is obtained based on $\|A_{\mathrm{re}}-\Omega\|$ instead of $\|A-\Omega\|$.
\end{itemize}
Below remarks gives some explanations on the four steps of SCSTC.
\begin{rem}
\begin{itemize}
  \item In $step_{1}$, we give a few examples. When applying SCSTC to the main results of \cite{RSC,DISIM,OCCAM}, we stop at $step_{1}$ as analyzed in Remark \ref{FailSC}, suggesting possible improvements by considering $\sigma_{K}(\tilde{P})$ for these works. Meanwhile, for theoretical result without considering $\sigma_{K}(\tilde{P})$, we can also move to $step_{2}$ to obtain the newly theoretical upper bound of error rate which is related with $\rho$ and $n$. Discussions on theoretical upper bounds of error rates of \cite{SCORE,DSCORE} given in Remark \ref{FailSC} are examples of this case.
  \item In $step_{2}$, letting $K=O(1)$ and the network be balanced can always simplify the theoretical upper bound of error rate, as shown by our Corollaries \ref{AddConditions} and \ref{AddConditionsDCMM}. Here,  we provide some examples about how to make a model degenerate to SBM. For $MMSB_{n}(K,\tilde{P},\Pi,\rho)$ in this paper, when all nodes are pure, MMSB degenerates to SBM; for the $DCMM_{n}(K,\tilde{P},\Pi,\Theta)$ model introduced in Section \ref{intDCMM} or DCSBM considered in \cite{RSC,SCORE, lei2015consistency}, setting $\Theta=\sqrt{\rho}I$ makes DCMM and DCSBM degenerates to SBM when all nodes are pure; similar for the ScBM and DCScBM considered in \cite{DISIM,DSCORE,zhou2019analysis,qingDiSC}, the OCCAM model of \cite{OCCAM}, the stochastic blockmodel with overlap proposed in \cite{kaufmann2017a}, the BiMMSB model in \cite{qingBiMMSB}, the DiDCMM model in \cite{qingDiDCMM}, the extensions of SBM and DCSBM for hypergraph networks considered in \cite{ghoshdastidar2014consistency,ke2019community,cole2020exact}, and so forth.  Meanwhile, when we say that a model degenerates to SBM, we means that the model can degenerates to a special case of SBM and do not mean that it can exactly degenerate to SBM. For example, the $DCMM_{n}(K,\tilde{P},\Pi,\Theta)$ considered in next Section \ref{intDCMM}, it requires $P$ has unit diagonal entries, which suggests that all diagonal entries of $\tilde{P}$ considered in $step_{3}$ should be the same while SBM can model network whose probability matrix has various entries.
  \item In $step_{3}$, the probability matrix $P$ has diagonal entries $p_{\mathrm{in}}$ and non-diagonal entries $p_{\mathrm{out}}$. When $p_{\mathrm{in}}>p_{\mathrm{out}}$, such $P$ is always full rank, and it is considered by various models (to name a few, DCSBM \cite{DCSBM}, MMSB \cite{MMSB}, OCCAM \cite{OCCAM}, DCMM \cite{MixedSCORE}, ScBM and DCScBM \cite{DISIM}, BiMMSB \cite{qingBiMMSB}, DiDCMM \cite{qingDiDCMM}, and so forth.) that can degenerate to SBM.  Meanwhile, $\tilde{P}$ is set such that it has unit diagonals and $\omega\in (0,1]$ as off-diagonals because we have assumed the maximum entry of $\tilde{P}$ is $1$ under MMSB for model identifiability. Actually, for the case that $P$ has unit diagonals and $\beta-1>1$ as off diagonals such that $P$'s diagonal entries are smaller than non-diagonal entries, we can also obtain similar separation condition, see discussions after Corollary \ref{AddConditionsDCMM}. Sure, in $step_{3}$ and $step_{4}$, the separation condition can be replaced by alternative separation condition. Furthermore, when we say ``a model degenerates to SBM'', we do not mean that the model can degenerate to SBM exactly. Instead, we mean that when a SBM models a network generated by the above $\tilde{P}$, the model can degenerate to such SBM.
\end{itemize}
\end{rem}
The above analysis shows that SCSTC can be used to study the consistent estimation of model based spectral methods. Use SCSTC, the following remark lists a few works whose main theoretical results leave possible improvements.
\begin{rem}\label{FailSC}
The unknown separation condition, or sub-optimal error rates, or a lack of requirement of network sparsity of some previous works, suggest possible improvements of their theoretical results. Here, we list a few works whose main results can be possibly improved until considering separation condition.
\begin{itemize}
  \item Theorem 4.4 of \cite{RSC} proposes upper bound of error rate for their regularized spectral clustering algorithm RSC under DCSBM. However, since \cite{RSC} does not study the lower bound (in \cite{RSC}'s language) of $\lambda_{K}$ and $m$, we can not directly obtain separation condition from their main theorem. Meanwhile, main result of \cite{RSC} does not consider the requirement on the network sparsity, which leaves some improvements.
  \item \cite{rohe2011spectral} and \cite{joseph2016impact} study two algorithms designed based on Laplcaian matrix and its regularized version under SBM. They obtain meaningful results, but do not consider the network sparsity parameter $\rho$ and separation parameter $\sigma_{K}(\tilde{P})$.
  \item Theorem 2.2 of \cite{SCORE} provides upper bound of their SCORE algorithm under DCSBM. However, since they does not consider the influence of $\sigma_{K}(\tilde{P})$, we can not directly obtain separation condition from their main result. Meanwhile, by setting their $\Theta=\sqrt{\rho}I$, then DCSBM degenerates to SBM, which gives that their $err_{n}=\frac{1}{\rho^{2}n}(1+\frac{\mathrm{log}(n)}{\rho n})=O(\frac{1}{\rho^{2}n})$ by their assumption Eq (2.9). Hence, when $\Theta=\sqrt{\rho}I$, upper bound of Theorem 2.2 in \cite{SCORE} is $O(\frac{\mathrm{log}^{3}(n)}{\rho^{2}n})$. Since the upper bound of error rate in Corollary 3.2 of \cite{lei2015consistency} is $O(\frac{\mathrm{log}(n)}{\rho n})$ when using $\|A-\Omega\|\leq C\sqrt{\rho n\mathrm{log}(n)}$ under the setting that $\kappa(\Pi)=O(1), K=O(1)$ and $\sigma_{K}(\tilde{P})=O(1)$. We see that $\frac{\mathrm{log}{^3}(n)}{\rho^{2}n}$ grows faster than $\frac{\mathrm{log}(n)}{\rho n}$, which suggests that there leaves space to improve main result of \cite{SCORE} in the aspects of separation condition and error rates.
  \item \cite{DISIM} proposes two models ScBM and DCScBM to model directed networks and an algorithm DiSIM based on directed regularized Laplacian matrix to fit DCScBM. However, similar as \cite{RSC}, their main theoretical result in their Theorem C.1 does not consider the lower bound of (in \cite{DISIM}'s language) $\sigma_{K}, m_{y}, m_{z}$ and $\gamma_{z}$, which causes that we can not obtain separation condition when DCScBM degenerates to SBM. Meanwhile, their Theorem C.1 also lacks a lower bound requirement on network sparsity. Hence, there leaves space to improve \cite{DISIM}'s theoretical guarantees.
  \item \cite{DSCORE} mainly studies the theoretical guarantee for the D-SCORE algorithm proposed by \cite{ji2016coauthorship} to fit a special case of DCScBM model for directed networks. By setting their $\theta(i)=\sqrt{\rho}, \delta(j)=\sqrt{\rho}$ for $i,j\in[n]$, then their directed-DCBM degenerates to SBM. Meanwhile, since their $err_{n}=\frac{1}{\rho}$,  their mis-clustering rate is $O(\frac{\mathrm{T^{2}_{n}\mathrm{log}(n)}}{\rho n})$, which matches that of \cite{lei2015consistency} under SBM when setting $T_{n}$ as a constant. However, if setting $T_{n}$ as $\mathrm{log}(n)$, then the error rate is $O(\frac{\mathrm{log}^{3}(n)}{\rho n})$, which is sub-optimal compared with that of \cite{lei2015consistency}. Meanwhile, similar as \cite{SCORE}, \cite{DSCORE}'s main result does not consider the influences of $K$ and $\sigma_{K}(\tilde{P})$, causing a lack of separation condition. Hence, main results of \cite{DSCORE} can be improved by considering $K$, $\sigma_{K}(P)$, or a more optimal choice of $T_{n}$ to make their main results be comparable with that of \cite{lei2015consistency} when directed-DCBM degenerates to SBM.
\end{itemize}
\end{rem}
\section{Degree corrected mixed membership model}\label{intDCMM}
Using SCSTC to Theorem 3.2 of \cite{MaoSVM}, as shown in Tables \ref{SCST} and \ref{aSCST}, results in Theorem 3.2 \cite{MaoSVM} are sub-optimal. To obtain improvement theoretical results, we give a formal introduction of the degree corrected mixed membership (DCMM) model proposed in \cite{MixedSCORE} first, then we review the SVM-cone-DCMMSB algorithm of \cite{MaoSVM} and provide improvement theoretical results. A DCMM for generating $A$ is as follows.
\begin{align}\label{DCMMmodel}
\Omega:=\Theta\Pi \tilde{P}\Pi'\Theta~~~~~~~~~A(i,j)\sim\mathrm{Bernoulli}(\Omega(i,j))~~~~i,j\in[n],
\end{align}
where $\Theta\in\mathbb{R}^{n\times n}$ is a diagonal matrix whose $i$-th diagonal entry is the degree heterogeneity of node $i$ for $i\in[n]$. Let $\theta\in\mathbb{R}^{n\times 1}$ with $\theta(i)=\Theta(i,i)$ for $i\in[n]$. Set $\theta_{\mathrm{max}}=\mathrm{max}_{i\in[n]}\theta(i), \theta_{\mathrm{min}}=\mathrm{min}_{i\in[n]}\theta(i)$ and $\tilde{P}_{\mathrm{max}}=\mathrm{max}_{k,l\in[K]}\tilde{P}(k,l), \tilde{P}_{\mathrm{min}}=\mathrm{min}_{k,l\in[K]}\tilde{P}(k,l)$.
\begin{defin}
Call model (\ref{DCMMmodel}) the degree corrected mixed membership (DCMM) model, and denote it by $DCMM_{n}(K,\tilde{P},\Pi,\Theta)$.
\end{defin}
Note that if we set $\tilde{\Pi}=\Theta \Pi$ and choose $\Theta$ such that $\tilde{\Pi}\in\{0,1\}^{n\times K}$, then we have $\Omega=\tilde{\Pi}\tilde{P}\tilde{\Pi}'$, which means that the stochastic blockmodel with overlap (SBMO) proposed in \cite{kaufmann2017a} is just a special case of DCMM.  Meanwhile, if we write $\Theta$ as $\Theta=\tilde{\Theta}D_{o}$ where $\tilde{\Theta},D_{o}$ are two positive diagonal matrices and let $\Pi_{o}=D_{o}\Pi$, then we can choose $D_{0}$ such that $\|\Pi_{o}(i,:)\|_{F}=1$. By $\Omega=\Theta \Pi \tilde{P}\Pi'\Theta=\tilde{\Theta}\Pi_{o}\tilde{P}\Pi'_{o}\tilde{\Theta}$, we see that the OCCAM model proposed in \cite{OCCAM} equals DCMM model actually. By Eq (1.3) and Proposition 1.1 of \cite{MixedSCORE}, the following conditions are sufficient for the identifiability of DCMM, when $\theta_{\mathrm{max}}\tilde{P}_{\mathrm{max}}\leq 1$,
\begin{itemize}
  \item (II1) $\mathrm{rank}(\tilde{P})=K$ and $\tilde{P}$ has unit diagonals.
  \item (II2) There is at least one pure node for each of the $K$ communities.
\end{itemize}
Note that though diagonal entries of $\tilde{P}$ are ones, $\tilde{P}_{\mathrm{max}}$ may be larger than $1$ as long as $\theta_{\mathrm{max}}\tilde{P}_{\mathrm{max}}\leq 1$ under DCMM, and this is slightly different as the setting that $\mathrm{max}_{k,l\in[K]}\tilde{P}(k,l)=1$ under MMSB. Similar as Eq (2.14) \cite{MixedSCORE}, let $\tilde{P}_{\mathrm{max}}\leq C$ for convenience. Meanwhile, from Condition (II1), though DCMM is an extension of SBM,MMSB and DCSBM, it can only model networks whose probability has equal positive entries.

Without causing confusion, under $DCMM_{n}(K,\tilde{P},\Pi,\Theta)$, we still let $\Omega=U\Lambda U'$ be the top-$K$ eigen value decomposition of $\Omega$ such that  $U\in\mathbb{R}^{n\times K}, \Lambda\in\mathbb{R}^{K\times K}$ and $U'U=I_{K}$. Set $U_{*}\in \mathbb{R}^{n\times K}$ by $U_{*}(i,:)=\frac{U(i,:)}{\|U(i,:)\|_{F}}$ and let $N_{U}\in\mathbb{R}^{n\times n}$ be a diagonal matrix such that $N_{U}(i,i)=\frac{1}{\|U(i,:)\|_{F}}$ for $i\in[n]$. Then $U_{*}$ can be rewritten as $U_{*}=N_{U}U$. The existence of the Ideal Cone (IC for short) structure inherent in $U_{*}$ mentioned in \cite{MaoSVM} is guaranteed by the following lemma.
\begin{lem}\label{IC}
Under $DCMM_{n}(K,\tilde{P},\Pi,\Theta)$, $U_{*}=YU_{*}(\mathcal{I},:)$
where $Y=N_{M}\Pi\Theta^{-1}(\mathcal{I},\mathcal{I})N_{U}^{-1}(\mathcal{I},\mathcal{I})$ with $N_{M}$ being an $n\times n$ diagonal matrix whose diagonal entries are positive.
\end{lem}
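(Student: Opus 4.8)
The plan is to follow the blueprint of the Ideal Simplex lemma (Lemma 2.1 of \cite{mao2020estimating}) for MMSB, but adapt it to the degree-corrected setting: the crux is to show that the column space of $U$ coincides with the column space of $\Theta\Pi$, after which the claim drops out by elementary manipulations of diagonal matrices and the normalization $U_{*}=N_{U}U$.

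\textbf{Step 1 (column space of $U$ equals column space of $\Theta\Pi$).} Under $DCMM_{n}(K,\tilde{P},\Pi,\Theta)$ with identifiability conditions (II1)--(II2), $\Theta$ is an invertible positive diagonal matrix, $\mathrm{rank}(\tilde{P})=K$, and $\Pi(\mathcal{I},:)=I_{K}$ forces $\mathrm{rank}(\Pi)=K$; hence $\mathrm{rank}(\Omega)=\mathrm{rank}(\Theta\Pi\tilde{P}\Pi'\Theta)=K$ and $\mathrm{rank}(\Theta\Pi)=K$. Writing $\Omega=U\Lambda U'$ with $U'U=I_{K}$, the relation $U=\Omega U\Lambda^{-1}$ (valid since $\Lambda$ is an invertible $K\times K$ matrix because $\mathrm{rank}(\Omega)=K$) shows the column space of $U$ is contained in that of $\Omega$, hence in that of $\Theta\Pi$; since both spaces have dimension $K$ they are equal. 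Therefore there is an invertible $B\in\mathbb{R}^{K\times K}$ with $U=\Theta\Pi B$.

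\textbf{Step 2 (solve for $B$ via the pure nodes, then normalize).} Restricting to the rows indexed by $\mathcal{I}$ and using that $\Theta$ is diagonal together with $\Pi(\mathcal{I},:)=I_{K}$ gives $U(\mathcal{I},:)=\Theta(\mathcal{I},\mathcal{I})B$, so $B=\Theta^{-1}(\mathcal{I},\mathcal{I})U(\mathcal{I},:)$ and consequently
\begin{align*}
U=\Theta\Pi\Theta^{-1}(\mathcal{I},\mathcal{I})U(\mathcal{I},:).
\end{align*}
Each row of $U$ is nonzero, since $U(i,:)=\theta(i)\Pi(i,:)B$ with $\theta(i)>0$, $\Pi(i,:)\neq\mathbf{0}$ (its entries sum to one) and $B$ invertible; hence $N_{U}$ is well defined with strictly positive diagonal entries. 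Left-multiplying the display by $N_{U}$ yields $U_{*}=N_{U}\Theta\Pi\Theta^{-1}(\mathcal{I},\mathcal{I})U(\mathcal{I},:)$, and substituting $U(\mathcal{I},:)=N_{U}^{-1}(\mathcal{I},\mathcal{I})U_{*}(\mathcal{I},:)$ (which holds because $N_{U}$ is diagonal, so $U_{*}(\mathcal{I},:)=N_{U}(\mathcal{I},\mathcal{I})U(\mathcal{I},:)$) gives
\begin{align*}
U_{*}=N_{U}\Theta\Pi\Theta^{-1}(\mathcal{I},\mathcal{I})N_{U}^{-1}(\mathcal{I},\mathcal{I})U_{*}(\mathcal{I},:).
\end{align*}
Setting $N_{M}:=N_{U}\Theta$, which is an $n\times n$ diagonal matrix with positive diagonal entries as a product of two such matrices, we obtain $U_{*}=YU_{*}(\mathcal{I},:)$ with $Y=N_{M}\Pi\Theta^{-1}(\mathcal{I},\mathcal{I})N_{U}^{-1}(\mathcal{I},\mathcal{I})$, which is exactly the asserted form.

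The only step requiring care is Step 1: the column-space/rank bookkeeping must be carried out cleanly, and it rests on $\Theta$ being invertible and on condition (II2) guaranteeing $\mathrm{rank}(\Pi)=K$ through $\Pi(\mathcal{I},:)=I_{K}$. Everything after that is routine algebra with diagonal matrices, so I do not expect further obstacles.
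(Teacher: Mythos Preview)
Your proof is correct and follows essentially the same route as the paper: both use $U=\Omega U\Lambda^{-1}$ to write $U=\Theta\Pi B$, solve for $B$ via the pure rows $\mathcal{I}$, and then row-normalize. The only cosmetic difference is that the paper defines $N_{M}$ entrywise as $N_{M}(i,i)=1/\|M(i,:)\|_{F}$ with $M=\Pi B=\Theta^{-1}U$, whereas you set $N_{M}:=N_{U}\Theta$; since $\|M(i,:)\|_{F}=\|U(i,:)\|_{F}/\theta(i)$ these are the same matrix, so the two presentations agree.
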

Lemma \ref{IC} gives $Y=U_{*}U^{-1}_{*}(\mathcal{I},:)$.
Since $U_{*}=N_{U}U$ and $Y=N_{M}\Pi\Theta^{-1}(\mathcal{I},\mathcal{I})N_{U}^{-1}(\mathcal{I},\mathcal{I})$, we have
\begin{align}\label{ZYJ1}
N_{U}^{-1}N_{M}\Pi=UU^{-1}_{*}(\mathcal{I},:)N_{U}(\mathcal{I},\mathcal{I})\Theta(\mathcal{I},\mathcal{I}).
\end{align}
Since $\Omega(\mathcal{I},\mathcal{I})=\Theta(\mathcal{I},\mathcal{I})\Pi(\mathcal{I},:)\tilde{P}\Pi'(\mathcal{I},:)=\Theta(\mathcal{I},\mathcal{I})\tilde{P}\Theta(\mathcal{I},\mathcal{I})=U(\mathcal{I},:)\Lambda U'(\mathcal{I},:)$, we have $\Theta(\mathcal{I},\mathcal{I})\tilde{P}\Theta(\mathcal{I},\mathcal{I})=U(\mathcal{I},:)\Lambda U'(\mathcal{I},:)$. Then we have $\Theta(\mathcal{I},\mathcal{I})=\sqrt{\mathrm{diag}(U(\mathcal{I},:)\Lambda U'(\mathcal{I},:))}$ when Condition (II1) holds such that $\tilde{P}$ has unit-diagonals. Set $J_{*}=N_{U}(\mathcal{I},\mathcal{I})\Theta(\mathcal{I},\mathcal{I})\equiv\sqrt{\mathrm{diag}(U_{*}(\mathcal{I},:)\Lambda U'_{*}(\mathcal{I},:))},
Z_{*}=N_{U}^{-1}N_{M}\Pi, Y_{*}=UU^{-1}_{*}(\mathcal{I},:)$. By Eq (\ref{ZYJ1}), we have
\begin{align}\label{Z1}
Z_{*}=Y_{*}J_{*}\equiv UU^{-1}_{*}(\mathcal{I},:)\mathrm{diag}(U_{*}(\mathcal{I},:)\Lambda U'_{*}(\mathcal{I},:)).
\end{align}
Meanwhile, since $N_{U}^{-1}N_{M}$ is an $n\times n$ positive diagonal matrix, we have
\begin{align}\label{Pi}
\Pi(i,:)=\frac{Z_{*}(i,:)}{\|Z_{*}(i,:)\|_{1}}, i\in[n].
\end{align}
With given $\Omega$ and $K$, we can obtain $U,U_{*}$ and $\Lambda$. The above analysis shows that once $U_{*}(\mathcal{I},:)$ is known, we can exactly recover $\Pi$ by Eq. (\ref{Z1}) and Eq. (\ref{Pi}). From Lemma \ref{IC}, we know that $U_{*}=YU_{*}(\mathcal{I},:)$ forms the IC structure. \cite{MaoSVM} proposes SVM-cone algorithm (i.e., Algorithm \ref{alg:SVMcone}) which can exactly obtain $U_{*}(\mathcal{I},:)$ from the Ideal Cone $U_{*}=YU_{*}(\mathcal{I},:)$ with inputs $U_{*}$ and $K$.

Based on the above analysis, we are now ready to give the ideal SVM-cone-DCMMSB algorithm. Input $\Omega, K$. Output: $\Pi$.
\begin{itemize}
  \item Let $\Omega=U\Lambda U'$ be the top-$K$ eigen decomposition of $\Omega$ such that $U\in\mathbb{R}^{n\times K}, \Lambda\in\mathbb{R}^{K\times K},U'U=I$. Let $U_{*}=N_{U}U$, where $N_{U}$ is an $n\times n$ diagonal matrix whose $i$-th diagonal entry is $\frac{1}{\|U(i,:)\|_{F}}$ for $i\in[n]$.
  \item Run SVM-cone algorithm on $U_{*}$ assuming that there are $K$ communities to obtain $\mathcal{I}$.
  \item Set $J_{*}=\sqrt{\mathrm{diag}(U_{*}(\mathcal{I},:)\Lambda U'_{*}(\mathcal{I},:))}, Y_{*}=UU^{-1}_{*}(\mathcal{I},:), Z_{*}=Y_{*}J_{*}$.
  \item Recover $\Pi$ by setting $\Pi(i,:)=\frac{Z_{*}(i,:)}{\|Z_{*}(i,:)\|_{1}}$ for $i\in[n]$.
\end{itemize}
With given $U_{*}$ and $K$, since SVM-cone algorithm returns $U_{*}(\mathcal{I},:)$, the ideal SVM-cone-DCMMSB exactly (for detail, see Appendix \ref{VHAandExactly}) returns $\Pi$.

Now, we review the SVM-cone-DCMMSB algorithm of \cite{MaoSVM}, where this algorithm can be seen as an extension of SPACL designed under MMSB to fit DCMM. For the real case, use $\hat{Y}_{*}, \hat{J}_{*},\hat{Z}_{*},\hat{\Pi}_{*}$ given in Algorithm \ref{alg:SVMDCMM} to estimate $Y_{*}, J_{*}, Z_{*}, \Pi$, respectively.
\begin{algorithm}
\caption{SVM-cone-DCMMSB \cite{MaoSVM}}
\label{alg:SVMDCMM}
\begin{algorithmic}[1]
\Require The adjacency matrix $A\in \mathbb{R}^{n\times n}$ and the number of communities $K$.
\Ensure The estimated $n\times K$ membership matrix $\hat{\Pi}_{*}$.
\State Obtain $\tilde{A}=\hat{U}\hat{\Lambda}\hat{U}'$, the top $K$ eigen-decomposition of $A$. Let $\hat{U}_{*}\in\mathbb{R}^{n\times K}$ such that $\hat{U}_{*}(i,:)=\frac{\hat{U}(i,:)}{\|\hat{U}(i,:)\|_{F}}$ for $i\in[n]$.
\State Apply SVM-cone algorithm (i.e., Algorithm \ref{alg:SVMcone}) on the rows of $\hat{U}_{*}$ assuming there are $K$ communities to obtain $\mathcal{\hat{I}}_{*}$, the index set returned by SVM-cone algorithm.
\State Set $\hat{J}_{*}=\sqrt{\mathrm{diag}(\hat{U}_{*}(\hat{I}_{*},:)\hat{\Lambda}\hat{U}'_{*}(\hat{\mathcal{I}}_{*},:))}, \hat{Y}_{*}=\hat{U}\hat{U}^{-1}_{*}(\hat{\mathcal{I}}_{*},:), \hat{Z}_{*}=\hat{Y}_{*}\hat{J}_{*}$ . Then set $\hat{Z}_{*}=\mathrm{max}(0, \hat{Z}_{*})$.
\State Estimate $\Pi(i,:)$ by $\hat{\Pi}_{*}(i,:)=\hat{Z}_{*}(i,:)/\|\hat{Z}_{*}(i,:)\|_{1}, i\in[n]$.
\end{algorithmic}
\end{algorithm}
\subsection{Consistency under DCMM}\label{ConsistencyDCMM}
Assume that
\begin{itemize}
  \item [(A2)]$\tilde{P}_{\mathrm{max}}\theta_{\mathrm{max}}\|\theta\|_{1}\geq \mathrm{log}(n)$.
\end{itemize}
Since we let $\tilde{P}_{\mathrm{max}}\leq C$, Assumption (A2) equals $\theta_{\mathrm{max}}\|\theta\|_{1}\geq\mathrm{log}(n)/C$. The following lemma bounds $\|A-\Omega\|$ under $DCMM_{n}(K,P,\Pi,\Theta)$ when Assumption (A2) holds.
\begin{lem}\label{BoundAOmegaDCMM}
Under $DCMM_{n}(K,\tilde{P},\Pi,\Theta)$, when Assumption (A2) holds, with probability at least $1-o(n^{-\alpha})$, we have
\begin{align*}
\|A-\Omega\|\leq \frac{\alpha+1+\sqrt{(\alpha+1)(\alpha+19)}}{3}\sqrt{\tilde{P}_{\mathrm{max}}\theta_{\mathrm{max}}\|\theta\|_{1}\mathrm{log}(n)}.
\end{align*}
\end{lem}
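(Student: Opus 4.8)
The plan is to follow the proof of Lemma~\ref{BoundAOmega} essentially verbatim, replacing the MMSB variance control $\mathrm{Var}(A(i,j))\le\rho$ by its DCMM counterpart and the scale $\rho n$ by $\tilde P_{\mathrm{max}}\theta_{\mathrm{max}}\|\theta\|_{1}$. First I would decompose $A-\Omega=\sum_{1\le i\le j\le n}E^{(ij)}$, where $E^{(ij)}$ is the Hermitian matrix whose only nonzero entries are $E^{(ij)}(i,j)=E^{(ij)}(j,i)=A(i,j)-\Omega(i,j)$ when $i<j$ and $E^{(ii)}(i,i)=A(i,i)-\Omega(i,i)$. Because the upper-triangular entries $\{A(i,j):i\le j\}$ are independent $\mathrm{Bernoulli}(\Omega(i,j))$ variables, the $E^{(ij)}$ are independent, mean-zero, and $\|E^{(ij)}\|=|A(i,j)-\Omega(i,j)|\le 1$ almost surely; this supplies the uniform bound $L=1$ needed by the matrix Bernstein inequality (Theorem~1.4 of \cite{tropp2012user}). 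The deterministic diagonal part of $A-\Omega$ has spectral norm at most $1$ and is harmless.

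Second, I would estimate the matrix variance statistic $v:=\bigl\|\sum_{i\le j}\E[(E^{(ij)})^{2}]\bigr\|$. For $i<j$ the square $(E^{(ij)})^{2}$ is diagonal with $(A(i,j)-\Omega(i,j))^{2}$ at positions $(i,i)$ and $(j,j)$, so $\sum_{i\le j}\E[(E^{(ij)})^{2}]$ is a diagonal matrix whose $i$-th entry equals $\sum_{j}\mathrm{Var}(A(i,j))$ up to a lower-order diagonal term coming from $E^{(ii)}$. Under $DCMM_{n}(K,\tilde P,\Pi,\Theta)$ one has $\Omega(i,j)=\theta(i)\theta(j)\,\Pi(i,:)\tilde P\Pi(j,:)'\le\theta(i)\theta(j)\tilde P_{\mathrm{max}}$, since $\Pi(i,:)\ge\mathbf 0$ and $\Pi(i,:)\mathbf 1=1$; hence $\mathrm{Var}(A(i,j))=\Omega(i,j)(1-\Omega(i,j))\le\theta(i)\theta(j)\tilde P_{\mathrm{max}}$ and $\sum_{j}\mathrm{Var}(A(i,j))\le\theta(i)\tilde P_{\mathrm{max}}\|\theta\|_{1}\le\theta_{\mathrm{max}}\tilde P_{\mathrm{max}}\|\theta\|_{1}=:\nu$. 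Since the matrix is diagonal, $v\le\nu$, the lower-order term being absorbed because $\theta_{\mathrm{max}}\tilde P_{\mathrm{max}}\le1$ under the DCMM identifiability constraint.

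Third, I would apply matrix Bernstein with $L=1$ and $v\le\nu$ to get $\Pr(\|A-\Omega\|\ge t)\le 2n\exp\!\bigl(-\tfrac{t^{2}/2}{\nu+t/3}\bigr)$ and take $t=C_{\alpha}\sqrt{\nu\log n}$ with $C_{\alpha}=\tfrac{\alpha+1+\sqrt{(\alpha+1)(\alpha+19)}}{3}$. Assumption~(A2) gives $\nu\ge\log n$, so $\sqrt{\nu\log n}\le\nu$, the linear term $t/3$ is dominated by $\nu$, and the exponent is at most $-\tfrac{C_{\alpha}^{2}\log n}{2(1+C_{\alpha}/3)}$. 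The point of the explicit constant is that $C_{\alpha}$ is exactly the positive root of $3C^{2}-2(\alpha+1)C-6(\alpha+1)=0$, so $\tfrac{C_{\alpha}^{2}}{2(1+C_{\alpha}/3)}=\alpha+1$ and the tail probability is at most $2n^{-\alpha}$, which is $o(n^{-\alpha})$ in the regime $\nu/\log n\to\infty$ of interest. The only points needing care are the bookkeeping of the diagonal term and the algebraic verification that $C_{\alpha}$ solves the quadratic exactly; neither is substantial, so I expect no genuine obstacle — the lemma is just the DCMM transcription of Lemma~\ref{BoundAOmega}, with $\rho n$ replaced throughout by $\tilde P_{\mathrm{max}}\theta_{\mathrm{max}}\|\theta\|_{1}$.
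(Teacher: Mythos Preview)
Your proposal is correct and follows essentially the same route as the paper: decompose $A-\Omega$ into independent mean-zero rank-two (or rank-one on the diagonal) self-adjoint pieces, use $\Omega(i,j)\le\theta(i)\theta(j)\tilde P_{\mathrm{max}}$ to bound the matrix variance by $\tilde P_{\mathrm{max}}\theta_{\mathrm{max}}\|\theta\|_{1}$, and plug $t=C_{\alpha}\sqrt{\tilde P_{\mathrm{max}}\theta_{\mathrm{max}}\|\theta\|_{1}\log n}$ into Tropp's matrix Bernstein inequality, with Assumption~(A2) controlling the linear term. The only cosmetic differences are that the paper sums over $i<j$ (ignoring the diagonal) and writes the tail bound as $n\cdot\exp(\cdots)$ rather than $2n\cdot\exp(\cdots)$; note also that under Assumption~(A2) alone (merely $\nu\ge\log n$, not $\nu/\log n\to\infty$) both arguments deliver $O(n^{-\alpha})$ rather than the $o(n^{-\alpha})$ stated in the lemma, so your caveat about the regime is warranted.
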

\begin{rem}
Consider a special case when $\Theta=\sqrt{\rho}I$ such that DCMM degenerates to MMSB, since $\tilde{P}_{\mathrm{max}}$ is assumed to be $1$ under MMSB, Assumption (A2) and the upper bound of $\|A-\Omega\|$ in Lemma \ref{BoundAOmegaDCMM} are consistent with Lemma \ref{BoundAOmega}. When all nodes are pure, DCMM degenerates to DCSBM \cite{DCSBM}, then the upper bound of $\|A-\Omega\|$ in Lemma \ref{BoundAOmegaDCMM} is also consistent with Lemma 2.2 of \cite{SCORE}. Meanwhile, this bound is also consistent with Eq (6.34) in the first version of \cite{MixedSCORE} which also applies the Bernstein inequality to bound $\|A-\Omega\|$. However, the bound is $C\sqrt{\theta_{\mathrm{max}}\|\theta\|_{1}}$ in Eq (C.53) of the latest version for \cite{MixedSCORE} which applies Corollary 3.12 and Remark 3.13 of \cite{bandeira2016sharp} to obtain the bound. Though the bound in Eq (C.53) of the latest version for \cite{MixedSCORE} is sharper by a $\sqrt{\mathrm{log}(n)}$ term, corollary 3.12 of \cite{bandeira2016sharp} has constraints on $W(i,j)$ (here, $W=A-\Omega$) such that $W(i,j)$ can be written as $W(i,j)=\xi_{ij}b_{ij}$, where $\{\xi_{i,j}:i\geq j\}$ are independent symmetric random variables with unit
variance and $\{b_{i,j}: i\geq j\}$ are given scalars, see the proof of Corollary 3.12 \cite{bandeira2016sharp} for detail. Therefore, without causing confusion, we  also use $A_{\mathrm{re}}$ to denote the constraint $A$ used in \cite{MixedSCORE} such that $\|A_{\mathrm{re}}-\Omega\|\leq C\sqrt{\theta_{\mathrm{max}}\|\theta\|_{1}}$. Furthermore, if we set $\rho\geq\mathrm{max}_{i,j}\Omega(i,j)$ such that $\rho\geq \theta^{2}_{\mathrm{max}}$, the bound in Lemma \ref{BoundAOmegaDCMM} also equals $\|A-\Omega\|\leq C\sqrt{\rho n\mathrm{log}(n)}$ and the assumption (A2) reads $\tilde{P}_{\mathrm{max}}\rho n\geq \mathrm{log}(n)$. The bound $\|A_{\mathrm{re}}-\Omega\|\leq C\sqrt{\theta_{\mathrm{max}}\|\theta\|_{1}}$ in Eq (C.53) of \cite{MixedSCORE} reads $\|A_{\mathrm{re}}-\Omega||\leq C\sqrt{\rho n}$.
\end{rem}
\begin{lem}\label{rowwiseerrorDCMM}
	(Row-wise eigenspace error) Under $DCMM_{n}(K,\tilde{P},\Pi,\Theta)$, when Assumption (A2) holds, suppose $\sigma_{K}(\Omega)\geq C\theta_{\mathrm{max}}\sqrt{\tilde{P}_{\mathrm{max}}n\mathrm{log}(n)}$, with probability at least $1-o(n^{-\alpha})$,
\begin{itemize}
\item when we apply Theorem 4.2.1 of \cite{chen2020spectral},
we have
\begin{align*}
\|\hat{U}\hat{U}'-UU'\|_{2\rightarrow\infty}=O(\frac{\theta_{\mathrm{max}}\sqrt{\tilde{P}_{\mathrm{max}}K}(\frac{\theta_{\mathrm{max}}\kappa(\Omega)}{\theta_{\mathrm{min}}}\sqrt{\frac{n}{K\lambda_{K}(\Pi'\Pi)}}+\sqrt{\mathrm{log}(n)})}{\theta^{2}_{\mathrm{min}}\sigma_{K}(\tilde{P})\lambda_{K}(\Pi'\Pi)}).
\end{align*}
\item when we apply Theorem 4.2 of \cite{cape2019the}, we have
\begin{align*}
\|\hat{U}\hat{U}'-UU'\|_{2\rightarrow\infty}=O(\frac{\theta_{\mathrm{max}}\sqrt{\tilde{P}_{\mathrm{max}}\theta_{\mathrm{max}}\|\theta\|_{1}\mathrm{log}(n)}}{\theta^{3}_{\mathrm{min}}\sigma_{K}(\tilde{P})\lambda^{1.5}_{K}(\Pi'\Pi)}).
\end{align*}
\end{itemize}
\end{lem}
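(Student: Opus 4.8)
The plan is to mirror the proof of Lemma~\ref{rowwiseerror}, replacing the MMSB factorization $\Omega=\rho\Pi\tilde P\Pi'$ by the DCMM factorization $\Omega=\Theta\Pi\tilde P\Pi'\Theta=(\Theta\Pi)\tilde P(\Theta\Pi)'$ and carrying the degree matrix $\Theta$ through every spectral quantity. The two displayed bounds then come from feeding these quantities into Theorem~4.2.1 of \cite{chen2020spectral} and Theorem~4.2 of \cite{cape2019the} respectively.

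\emph{Step 1 (deterministic eigenstructure of $\Omega$).} Write $\Omega=U\Lambda U'$. I would first record: a lower bound $\sigma_{K}(\Omega)\gtrsim\theta_{\min}^{2}\sigma_{K}(\tilde P)\lambda_{K}(\Pi'\Pi)$, obtained from $\sigma_{K}(\Theta\Pi)\geq\theta_{\min}\sqrt{\lambda_{K}(\Pi'\Pi)}$ together with an Ostrowski/Weyl-type inequality applied to $(\Theta\Pi)\tilde P(\Theta\Pi)'$; an upper bound $\|\Omega\|\lesssim\theta_{\max}^{2}\tilde P_{\max}\lambda_{1}(\Pi'\Pi)\lesssim\theta_{\max}^{2}\tilde P_{\max}n$; hence $\kappa(\Omega)\lesssim\frac{\theta_{\max}^{2}}{\theta_{\min}^{2}}\frac{\tilde P_{\max}}{\sigma_{K}(\tilde P)}\kappa(\Pi'\Pi)$. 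I also need the row-wise incoherence $\|U\|_{2\to\infty}$: using the cone decomposition $U_{*}=YU_{*}(\mathcal I,:)$ of Lemma~\ref{IC}, the identity $U=N_{U}^{-1}U_{*}$, and $\Theta(\mathcal I,\mathcal I)=\sqrt{\mathrm{diag}(U(\mathcal I,:)\Lambda U'(\mathcal I,:))}$, one shows each $\|U(i,:)\|_{F}$ is controlled by $\theta(i)$ and the quantities above, giving $\|U\|_{2\to\infty}\lesssim\theta_{\max}\sqrt{K/\lambda_{K}(\Pi'\Pi)}/\theta_{\min}$ up to condition-number factors.

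\emph{Step 2 (stochastic inputs and perturbation bounds).} Lemma~\ref{BoundAOmegaDCMM} already gives $\|A-\Omega\|\leq C\sqrt{\tilde P_{\max}\theta_{\max}\|\theta\|_{1}\log n}$ with probability $1-o(n^{-\alpha})$, which with Assumption~(A2) is the noise level; the row-wise theorems additionally need $\|(A-\Omega)U\|_{2\to\infty}$, which I would bound via a row-wise Bernstein inequality using $\mathrm{Var}(A(i,j))\leq\Omega(i,j)\leq\theta_{\max}^{2}\tilde P_{\max}$, getting order $\sqrt{\theta_{\max}^{2}\tilde P_{\max}\log n}\,\|U\|_{2\to\infty}$ up to lower-order terms. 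I would then verify the hypotheses of Theorem~4.2.1 of \cite{chen2020spectral} and Theorem~4.2 of \cite{cape2019the}; the binding one is the signal-to-noise condition, roughly $\sigma_{K}(\Omega)\gtrsim\|A-\Omega\|\sqrt{\log n}$, which is exactly why $\sigma_{K}(\Omega)\geq C\theta_{\max}\sqrt{\tilde P_{\max}n\log n}$ is assumed: with Lemma~\ref{BoundAOmegaDCMM} and $\|\theta\|_{1}\leq n\theta_{\max}$ it suffices. Substituting $\|A-\Omega\|$, $\|(A-\Omega)U\|_{2\to\infty}$, $\|U\|_{2\to\infty}$, $\kappa(\Omega)$ and $\sigma_{K}(\Omega)$ into the two conclusions and collecting terms yields the two displayed bounds — the \cite{chen2020spectral} route keeping the sharper $\kappa(\Omega)$-dependent shape, the \cite{cape2019the} route producing the cruder $\lambda_{K}^{1.5}(\Pi'\Pi)$ form since it does not exploit the condition number; the special case $\Theta=\sqrt{\rho}I$ collapses both to Lemma~\ref{rowwiseerror}, a convenient check.

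I expect Step~1 to be the main obstacle: pinning down $\sigma_{K}(\Omega)$, $\kappa(\Omega)$, and especially the per-row incoherence $\|U(i,:)\|_{F}$ in terms of $\theta_{\max},\theta_{\min},\tilde P_{\max},\sigma_{K}(\tilde P),\lambda_{K}(\Pi'\Pi)$ is delicate because $\Theta$ sits on both sides of $\Omega$ and one must be careful not to lose spurious factors of $\theta_{\max}/\theta_{\min}$; controlling those exponents sharply is what determines the final form. The probabilistic steps are routine once Lemma~\ref{BoundAOmegaDCMM} and a row-wise Bernstein bound are in hand.
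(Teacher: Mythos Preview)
Your plan is essentially the paper's own approach: mirror the MMSB proof of Lemma~\ref{rowwiseerror}, feed the DCMM-specific spectral quantities into Theorem~4.2.1 of \cite{chen2020spectral} and Theorem~4.2 of \cite{cape2019the}, then pass from $\|\hat U\mathrm{sgn}(H)-U\|_{2\to\infty}$ to $\|\hat U\hat U'-UU'\|_{2\to\infty}$ and replace $\sigma_{K}(\Omega)$ by its lower bound $\theta_{\min}^{2}\sigma_{K}(\tilde P)\lambda_{K}(\Pi'\Pi)$.

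Two places where the paper is cleaner than your sketch, and where your worries in Step~1 dissolve. First, the incoherence: you propose extracting $\|U\|_{2\to\infty}$ from the cone decomposition of Lemma~\ref{IC} and flag possible spurious condition-number factors. The paper instead simply invokes Lemma~H.1 of \cite{MaoSVM}, which gives directly $\|U\|_{2\to\infty}^{2}\leq \theta_{\max}^{2}/\lambda_{K}(\Pi'\Theta^{2}\Pi)\leq \theta_{\max}^{2}/(\theta_{\min}^{2}\lambda_{K}(\Pi'\Pi))$ with no extra $\kappa$-factors; this is exactly what is needed for $\mu$ in the \cite{chen2020spectral} bound and removes the obstacle you anticipated. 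Second, for the \cite{cape2019the} route the relevant perturbation quantity is $\|A-\Omega\|_{\infty}$ (the maximum absolute row sum), not $\|(A-\Omega)U\|_{2\to\infty}$: the paper bounds $\|A-\Omega\|_{\infty}$ by a row-wise Bernstein argument, obtaining $\|A-\Omega\|_{\infty}\leq C\sqrt{\tilde P_{\max}\theta_{\max}\|\theta\|_{1}\log n}$, and then uses $\|\hat U\mathrm{sgn}(H)-U\|_{2\to\infty}\leq 14\,\|A-\Omega\|_{\infty}\,\|U\|_{2\to\infty}/\sigma_{K}(\Omega)$. With these two substitutions your outline matches the paper line for line.
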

Without causing confusion, we also use $\varpi,\varpi_{1},\varpi_{2}$ under DCMM as Lemma \ref{rowwiseerror} for notation convenience.
\begin{rem}
When $\Theta=\sqrt{\rho}I$ such that DCMM degenerates to MMSB, bounds in Lemma \ref{rowwiseerrorDCMM} are consistent with that of Lemma \ref{rowwiseerror}.
\end{rem}
\begin{rem}\label{rowDCMMRemark}
(Comparison to Theorem I.3 \cite{MaoSVM}) Note that the $\rho$ in \cite{MaoSVM} is $\theta^{2}_{\mathrm{max}}$, which gives that the row-wise eigenspace concentration in Theorem I.3 \cite{MaoSVM} is $O(\frac{\theta_{\mathrm{max}}\sqrt{Kn}\|U\|_{2\rightarrow\infty}\mathrm{log}^{\xi}(n)}{\sigma_{K}(\Omega)})$ when using $\|A_{\mathrm{re}}-\Omega\|\leq C\sqrt{\rho n}$ and this value is at least $O(\frac{\sqrt{\theta_{\mathrm{max}}\|\theta\|_{1}K}\|U\|_{2\rightarrow\infty}\mathrm{log}^{\xi}(n)}{\sigma_{K}(\Omega)})$. Since $\|U\|_{2\rightarrow\infty}\leq \frac{\theta_{\mathrm{max}}}{\theta_{\mathrm{min}}\sqrt{\lambda_{K}(\Pi'\Pi)}}$ by Lemma II.1 of \cite{MaoSVM} and $\sigma_{K}(\Omega)\geq \theta^{2}_{\mathrm{min}}\sigma_{K}(\tilde{P})\lambda_{K}(\Pi'\Pi)$ by the proof of Lemma \ref{rowwiseerrorDCMM}, we see that the upper bound of Theorem I.3 \cite{MaoSVM} is $O(\frac{\theta_{\mathrm{max}}\sqrt{K\theta_{\mathrm{max}}\|\theta\|_{1}}\mathrm{log}^{\xi}(n)}{\theta^{3}_{\mathrm{min}}\sigma_{K}(\tilde{P})\lambda^{1.5}_{K}(\Pi'\Pi)})$, which is $\sqrt{K}\mathrm{log}^{\xi-0.5}(n)$ (recall that $\xi>1$) times than our $\varpi_{2}$. Again, Theorem I.3 \cite{MaoSVM} has stronger requirements on the sparsity of $\theta_{\mathrm{max}}\|\theta\|_{1}$ and the lower bound of $\sigma_{K}(\Omega)$ than our Lemma \ref{rowwiseerrorDCMM}. When using the bound of $\|A-\Omega\|$ in our Lemma \ref{BoundAOmegaDCMM} to obtain the row-wise eigenspace concentration in Theorem I.3 \cite{MaoSVM}, their upper bound is $\sqrt{K}\mathrm{log}^{\xi}(n)$ times than our $\varpi_{2}$. Similar as  the first bullet given after Lemma \ref{rowwiseerror}, whether using $\|A-\Omega\|\leq C\sqrt{\theta_{\mathrm{max}}\|\theta\|_{1}\mathrm{log}(n)}$ or $\|A_{\mathrm{re}}-\Omega\|\leq C\sqrt{\theta_{\mathrm{max}}\|\theta\|_{1}}$ does not change our $\varpi$ under DCMM.
\end{rem}
\begin{rem}
(Comparison to Lemma 2.1 \cite{MixedSCORE}) The fourth bullet of Lemma 2.1 \cite{MixedSCORE} is the row-wise deviation bound for the eigenvectors of the adjacency matrix under some assumptions translated to our $\kappa(\Pi'\Pi)=O(1)$ , Assumption (A2) and lower bound requirement on $\sigma_{K}(\Omega)$ since they applies Lemma C.2 \cite{MixedSCORE}. The row-wise deviation bound in the fourth bullet of Lemma 2.1 \cite{MixedSCORE} reads $O(\frac{\theta_{\mathrm{max}}K^{1.5}\sqrt{\theta_{\mathrm{max}}\|\theta\|_{1}\mathrm{log}(n)}}{\sigma_{K}(\tilde{P})\|\theta\|^{3}_{F}})$, where the denominator is $\sigma_{K}(\tilde{P})\|\theta\|^{3}_{F}$ instead of our $\theta^{3}_{\mathrm{min}}\sigma_{K}(\tilde{P})\lambda^{1.5}_{K}(\Pi'\Pi)$ due to the fact that \cite{MixedSCORE} uses $\frac{\sigma_{K}(\tilde{P})\|\theta\|^{2}_{F}}{K}$ to roughly estimate $\sigma_{K}(\Omega)$ while we apply $\theta^{2}_{\mathrm{min}}\sigma_{K}(\tilde{P})\lambda_{K}(\Pi'\Pi)$ to strictly control the lower bound of $\sigma_{K}(\Omega)$. Therefore, we see that the row-wise deviation bound in the fourth bullet of Lemma 2.1 \cite{MixedSCORE} is consistent with our bounds in Lemma \ref{rowwiseerrorDCMM} when $\kappa(\Pi'\Pi)=O(1)$ while our row-wise eigenspace errors in Lemma \ref{rowwiseerrorDCMM} are more applicable than that of \cite{MixedSCORE} since we do not need to add constraint on $\Pi'\Pi$ such that $\kappa(\Pi'\Pi)=O(1)$. The upper bound of $\|A-\Omega\|$ of \cite{MixedSCORE} is $C\sqrt{\theta_{\mathrm{max}}\|\theta\|_{1}}$ given in their Eq (C.53) under $DCMM_{n}(K,\tilde{P},\Pi,\Theta)$, while ours is $C\sqrt{\theta_{\mathrm{max}}\|\theta\|_{1}\mathrm{log}(n)}$ in Lemma \ref{BoundAOmegaDCMM}, since our bound of row-wise eigenspace error in Lemma \ref{rowwiseerrorDCMM} is consistent with the fourth bullet of Lemma 2.1 \cite{MixedSCORE}, this supports the statement that the row-wise eigenspace error does not rely on $\|A-\Omega\|$ given in the first bullet after Lemma \ref{rowwiseerror}.
\end{rem}
Let $\pi_{\mathrm{min}}=\mathrm{min}_{1\leq k\leq K}\mathbf{1}'\Pi e_{k}$ , where $\pi_{\mathrm{min}}$ measures the minimum summation of nodes belong to a certain community. Increasing $\pi_{\mathrm{min}}$ makes the network tend to be more balanced, vice verse. Meanwhile, the term $\pi_{\mathrm{min}}$ appears when we propose a lower bound of $\eta$ defined in Lemma \ref{boundUeta} to keep track of model parameters in our main theorem under $DCMM_{n}(K,\tilde{P},\Pi,\Theta)$. Next theorem gives theoretical bounds on estimations of memberships under DCMM.
\begin{thm}\label{MainDCMM}
Under $DCMM_{n}(K,\tilde{P},\Pi,\Theta)$, suppose conditions in Lemma \ref{rowwiseerrorDCMM} hold, there exists a permutation matrix $\mathcal{P}_{*}\in\mathbb{R}^{K\times K}$ such that with probability at least $1-o(n^{-\alpha})$, we have
\begin{align*}
\mathrm{max}_{i\in[n]}\|e'_{i}(\hat{\Pi}_{*}-\Pi\mathcal{P}_{*})\|_{1}=O(\frac{\theta^{15}_{\mathrm{max}}K^{5}\varpi\kappa^{4.5}(\Pi'\Pi)\lambda^{1.5}_{1}(\Pi'\Pi)}{\theta^{15}_{\mathrm{min}}\pi_{\mathrm{min}}}).
\end{align*}
\end{thm}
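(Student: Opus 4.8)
The plan is to follow the pipeline of the ideal SVM-cone-DCMMSB algorithm and track, stage by stage, the perturbation incurred by replacing $\Omega$ with $A$. The probabilistic inputs are already in hand: Lemma~\ref{BoundAOmegaDCMM} bounds $\|A-\Omega\|$, and Lemma~\ref{rowwiseerrorDCMM} bounds $\varpi=\|\hat{U}\hat{U}'-UU'\|_{2\rightarrow\infty}$. Combining $\|A-\Omega\|\leq C\sqrt{\tilde{P}_{\mathrm{max}}\theta_{\mathrm{max}}\|\theta\|_{1}\mathrm{log}(n)}$ with Weyl's inequality and the assumed lower bound $\sigma_{K}(\Omega)\geq C\theta_{\mathrm{max}}\sqrt{\tilde{P}_{\mathrm{max}}n\mathrm{log}(n)}$ controls $\|\hat{\Lambda}-\Lambda\|$, while a Davis--Kahan / $\sin\Theta$-type argument upgrades $\varpi$ to a row-wise bound on $\hat{U}-U\hat{O}$ of the same order for a suitable orthogonal $\hat{O}\in\mathbb{R}^{K\times K}$; after this, all the remaining work is deterministic up to a union bound over the events of Lemmas~\ref{BoundAOmegaDCMM} and~\ref{rowwiseerrorDCMM}.

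First I would descend to the row-normalized eigenvectors. Since $U_{*}(i,:)=U(i,:)/\|U(i,:)\|_{F}$ and $\hat{U}_{*}(i,:)=\hat{U}(i,:)/\|\hat{U}(i,:)\|_{F}$, a uniform lower bound on $\|U(i,:)\|_{F}$ (controlled under DCMM by $\theta_{\mathrm{min}}/\theta_{\mathrm{max}}$ and $\lambda_{K}(\Pi'\Pi)$, cf.\ Lemma~II.1 of \cite{MaoSVM}) converts the previous step into a row-wise bound on $\hat{U}_{*}-U_{*}\hat{O}$. Next, because the ideal SVM-cone algorithm applied to $U_{*}$ returns the exact corner set $\mathcal{I}$ out of the Ideal Cone $U_{*}=YU_{*}(\mathcal{I},:)$ of Lemma~\ref{IC}, I would invoke the perturbation analysis of the SVM-cone algorithm (in the spirit of \cite{MaoSVM}): provided $\|\hat{U}_{*}-U_{*}\hat{O}\|_{2\rightarrow\infty}$ is dominated by the corner-separation quantity $\eta$ of Lemma~\ref{boundUeta} (whose lower bound is where $\pi_{\mathrm{min}}$ and the degree ratios first enter), the returned index set $\hat{\mathcal{I}}_{*}$ satisfies a small bound on $\|\hat{U}_{*}(\hat{\mathcal{I}}_{*},:)-U_{*}(\mathcal{I},:)\mathcal{P}_{*}\|$ for some permutation matrix $\mathcal{P}_{*}$.

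I would then propagate this corner-matrix error through the reconstruction formulas Eq.~(\ref{Z1}) and Eq.~(\ref{Pi}). A lower bound on $\sigma_{K}(U_{*}(\mathcal{I},:))$ plus standard matrix-inverse perturbation controls $\hat{U}_{*}^{-1}(\hat{\mathcal{I}}_{*},:)$ relative to $\mathcal{P}_{*}'U_{*}^{-1}(\mathcal{I},:)$; combining this with the row-wise bound on $\hat{U}-U\hat{O}$ controls $\hat{Y}_{*}=\hat{U}\hat{U}_{*}^{-1}(\hat{\mathcal{I}}_{*},:)$ row-wise against $Y_{*}$, and combining the corner-matrix error with $\|\hat{\Lambda}-\Lambda\|$ controls $\hat{J}_{*}=\sqrt{\mathrm{diag}(\hat{U}_{*}(\hat{\mathcal{I}}_{*},:)\hat{\Lambda}\hat{U}_{*}'(\hat{\mathcal{I}}_{*},:))}$ against $J_{*}$. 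Multiplying these, and using that the truncation $\mathrm{max}(0,\cdot)$ can only decrease the error, yields a row-wise bound on $\hat{Z}_{*}(i,:)-Z_{*}(i,:)\mathcal{P}_{*}$. Finally, since $\Pi(i,:)=Z_{*}(i,:)/\|Z_{*}(i,:)\|_{1}$ and likewise for $\hat{\Pi}_{*}$, a uniform lower bound on $\|Z_{*}(i,:)\|_{1}$ — again governed by $\pi_{\mathrm{min}}$ and by $\theta_{\mathrm{max}}/\theta_{\mathrm{min}}$ — converts the row-wise $\hat{Z}_{*}$ bound into the claimed $\ell_{1}$ membership error, with the powers of $K$, $\kappa(\Pi'\Pi)$, $\lambda_{1}(\Pi'\Pi)$, $\theta_{\mathrm{max}}/\theta_{\mathrm{min}}$ and $1/\pi_{\mathrm{min}}$ accumulating multiplicatively along the chain.

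The hard part will be the bookkeeping of the degree-heterogeneity ratio $\theta_{\mathrm{max}}/\theta_{\mathrm{min}}$ and the condition number $\kappa(\Pi'\Pi)$ through the successive operations — row normalization, inversion of the estimated corner matrix, the $\hat{\Lambda}$-weighting inside $\hat{J}_{*}$, and the final renormalization — each of which contributes a few extra powers and together produce the large exponents in the statement; in particular, establishing the uniform lower bounds on $\|U(i,:)\|_{F}$, on $\sigma_{K}(U_{*}(\mathcal{I},:))$, on the corner separation $\eta$ of Lemma~\ref{boundUeta}, and on $\|Z_{*}(i,:)\|_{1}$, and then verifying that $\|\hat{U}_{*}-U_{*}\hat{O}\|_{2\rightarrow\infty}$ is indeed below $\eta$ so that the SVM-cone perturbation bound applies, is where essentially all the effort lies. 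No new concentration argument is needed beyond Lemmas~\ref{BoundAOmegaDCMM} and~\ref{rowwiseerrorDCMM}.
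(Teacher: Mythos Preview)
Your proposal follows the same pipeline as the paper's proof: pass from the row-wise eigenspace error $\varpi$ to the row-normalized eigenvectors, invoke the SVM-cone perturbation bound (via the corner-separation parameter $\eta$ of Lemma~\ref{boundUeta}, which is indeed where $\pi_{\mathrm{min}}$ enters), propagate through $\hat{Y}_{*}$ and $\hat{J}_{*}$ to $\hat{Z}_{*}$, and finally renormalize. The paper packages the $\hat{Z}_{*}$ bound in its Lemma~\ref{boundZDCMM} and the corner-matrix step in Lemma~\ref{boundCDCMM}, working with $U_{*,2}=U_{*}U'$ rather than $U_{*}$ directly (this is the Lemma~G.1 trick of \cite{MaoSVM}), but the architecture is exactly what you describe.

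One point to correct: you say the final lower bound on $\|Z_{*}(i,:)\|_{1}$ is ``governed by $\pi_{\mathrm{min}}$ and by $\theta_{\mathrm{max}}/\theta_{\mathrm{min}}$''. In fact the paper computes it \emph{exactly}: since $Z_{*}=N_{U}^{-1}N_{M}\Pi$ with $N_{U}^{-1}(i,i)N_{M}(i,i)=\|U(i,:)\|_{F}/\|M(i,:)\|_{F}=\theta(i)$ (because $U=\Theta M$), one gets $\|Z_{*}(i,:)\|_{1}=\theta(i)\geq\theta_{\mathrm{min}}$. This clean identity is what produces the single $\theta_{\mathrm{min}}$ in the denominator at the last step (giving the final $\theta_{\mathrm{min}}^{15}$) rather than an extra factor of $\pi_{\mathrm{min}}$ or a degree ratio; if you used a cruder lower bound here you would not recover the exponents in the statement as written.
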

For comparison, Table \ref{CompareDCMM} summaries the necessary conditions and dependence on model parameters of rates for Theorem \ref{MainDCMM} and Theorem 3.2 \cite{MaoSVM}, where the dependence on $K$ and $\mathrm{log}(n)$ are analyzed in Remark \ref{DCMMComK} given below.
\begin{rem}\label{DCMMComK}
(Comparison to Theorem 3.2 \cite{MaoSVM}) Our bound in Theorem \ref{MainDCMM} is written as combinations of model parameters and $\Pi$ can follow any distribution as long as Condition (II2) holds where such model parameters related form of estimation bound is  convenient for further theoretical analysis, see Corollary \ref{AddConditionsDCMM}, while bound in Theorem 3.2 \cite{MaoSVM} is built when $\Pi$ follows a Dirichlet distribution and $\kappa(\Pi'\Theta^{2}\Pi)=O(1)$. Meanwhile, since Theorem 3.2 \cite{MaoSVM} applies Theorem I.3 \cite{MaoSVM} to obtain the row-wise eigenspace error, bound in Theorem 3.2 \cite{MaoSVM} should multiple $\mathrm{log}^{\xi}(n)$ by Remark \ref{rowDCMMRemark}, and this is also supported by the fact that in the proof of Theorem 3.1 \cite{MaoSVM}, when computing bound of $\epsilon_{0}$ (in \cite{MaoSVM}'s language), \cite{MaoSVM} ignores the $\mathrm{log}^{\xi}(n)$ term.

Consider a special case by setting $\lambda_{K}(\Pi'\Pi)=O(\frac{n}{K}), \pi_{\mathrm{min}}=O(\frac{n}{K})$ and $\frac{\theta_{\mathrm{max}}}{\theta_{\mathrm{min}}}=O(1)$ with $\theta_{\mathrm{max}}=\sqrt{\rho}$, where such case matches the setting $\kappa(\Pi'\Theta^{2}\Pi)=O(1)$ in Theorem 3.2 \cite{MaoSVM}. Now we focus on analysing the powers of $K$ in our Theorem \ref{MainDCMM} and Theorem 3.2 \cite{MaoSVM}. Under this case, the power of $K$ in the estimation bound of our Theorem \ref{MainDCMM} is 6 by basic algebra; since $\mathrm{min}(K^{2},\kappa^{2}(\Omega))=\mathrm{min}(K^{2},O(1))=O(1), \frac{1}{\lambda^{2}_{K}(\Pi'\Theta^{2}\Pi)}=O(\frac{K^{2}}{\rho^{2}n^{2}})$, $\frac{1}{\eta}=O(K)$ by Lemma \ref{boundUeta} where $\eta$ in Lemma \ref{boundUeta} follows same definition as that of Theorem 3.2 \cite{MaoSVM}, and the bound in Theorem 3.2 \cite{MaoSVM} should multiply $\sqrt{K}$ because (in \cite{MaoSVM}'s language) $\|(\hat{Y}_{C}\hat{Y}'_{C})^{-1}\|_{F}$ should be no larger than $\frac{\sqrt{K}}{\lambda_{K}(\hat{Y}_{C}\hat{Y}'_{C})}$ instead of $\frac{1}{\lambda_{K}(\hat{Y}_{C}\hat{Y}'_{C})}$ in the proof of Theorem 2.8 \cite{MaoSVM},  the power of $K$ is 6 by checking the bound of Theorem 3.2 \cite{MaoSVM}. Meanwhile, note that our bound in Theorem \ref{MainDCMM} is $l_{1}$ bound while bound in Theorem 3.2 \cite{MaoSVM} is $l_{2}$ bound, when we translate the $l_{2}$ bound of Theorem 3.2 \cite{MaoSVM} into $l_{1}$ bound, the power of $K$ is 6.5 for Theorem 3.2 \cite{MaoSVM}, suggesting that our bound in Theorem \ref{MainDCMM} has less dependence on $K$ than that of Theorem 3.2 \cite{MaoSVM}.
\end{rem}
\begin{table}
\scriptsize
\centering
\caption{Comparison of error rates between our Theorem \ref{MainDCMM} and Theorem 3.2 \cite{MaoSVM} under $DCMM_{n}(K,P,\Pi,\Theta)$. The dependence on $K$ is obtained when $\kappa(\Pi'\Pi)=O(1)$. For comparison, we have adjusted the $l_{2}$ error rates of Theorem 3.2 \cite{MaoSVM} into $l_{1}$ error rates. Since Theorem \ref{MainDCMM} enjoys the  same separation condition and sharp threshold as Theorem \ref{Main}, and Theorem 3.2 \cite{MaoSVM} enjoys the  same separation condition and sharp threshold as Theorem 3.2 \cite{mao2020estimating}, we do not report them in this table. Note that as analyzed in Remark \ref{rowDCMMRemark}, whether using $\|A-\Omega\|\leq C\sqrt{\theta_{\mathrm{max}}\|\theta\|_{1}\mathrm{log}(n)}$ or $\|A_{\mathrm{re}}-\Omega\|\leq C\sqrt{\theta_{\mathrm{max}}\|\theta\|_{1}}$ does not change our $\varpi$ under DCMM, and has no influence results in Theorem \ref{MainDCMM}. For \cite{MaoSVM}: using $\|A_{\mathrm{re}}-\Omega\|\sqrt{\theta_{\mathrm{max}}\|\theta\|_{1}}$, the power of $\mathrm{log}(n)$ in their Theorem 3.2 is $\xi$; using $\|A-\Omega\|\sqrt{\theta_{\mathrm{max}}\|\theta\|_{1}\mathrm{log}(n)}$, the power of $\mathrm{log}(n)$ in their Theorem 3.2 is $\xi+0.5$.}
\label{CompareDCMM}
%\resizebox{\columnwidth}{!}{
\begin{tabular}{c|ccccc|cccc}
\toprule
&$\Pi(i,:)$&$\theta_{\mathrm{max}}\|\theta\|_{1}$&$\sigma_{K}(\Omega)$&$\kappa(\Pi'\Theta^{2}\Pi)$&Dependence on $K$&Dependence on $\mathrm{log}(n)$\\
\midrule
Ours&arbitrary&$\geq\mathrm{log}(n)$&$\succeq \theta_{\mathrm{max}}\sqrt{n\mathrm{log}(n)}$&$\geq1$&$K^{6}$&$\mathrm{log}^{0.5}(n)$&\\
\hline
\cite{MaoSVM}&$iid$ from Dirichlet&$\geq\mathrm{log}^{2\xi}(n)$&$\succeq \theta_{\mathrm{max}}\sqrt{n}\mathrm{log}^{\xi}(n)$&$=O(1)$&$K^{6.5}$&$\mathrm{log}^{\xi}(n)$&\\
\bottomrule
\end{tabular}%}
\end{table}
The following corollary is obtained by adding some conditions on model parameters.
\begin{cor}\label{AddConditionsDCMM}
Under $DCMM_{n}(K,\tilde{P},\Pi,\Theta)$, when conditions of Lemma \ref{rowwiseerrorDCMM} hold, suppose $\lambda_{K}(\Pi'\Pi)=O(\frac{n}{K}), \pi_{\mathrm{min}}=O(\frac{n}{K})$ and $K=O(1)$, with probability at least $1-o(n^{-\alpha})$, we have
\begin{align*}
\mathrm{max}_{i\in[n]}\|e'_{i}(\hat{\Pi}_{*}-\Pi\mathcal{P}_{*})\|_{1}=O(\frac{\theta^{16}_{\mathrm{max}}\sqrt{\theta_{\mathrm{max}}\|\theta\|_{1}\mathrm{log}(n)}}{\theta^{18}_{\mathrm{min}}\sigma_{K}(\tilde{P})n}).
\end{align*}
Meanwhile, when $\theta_{\mathrm{max}}=O(\sqrt{\rho}), \theta_{\mathrm{min}}=O(\sqrt{\rho})$ (i.e.,
$\frac{\theta_{\mathrm{min}}}{\theta_{\mathrm{max}}}=O(1)$), we have
\begin{align*}
\mathrm{max}_{i\in[n]}\|e'_{i}(\hat{\Pi}_{*}-\Pi\mathcal{P}_{*})\|_{1}=O(\frac{1}{\sigma_{K}(\tilde{P})}\sqrt{\frac{\mathrm{log}(n)}{\rho n}}).
\end{align*}
\end{cor}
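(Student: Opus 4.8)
The plan is to specialize the general estimate of Theorem~\ref{MainDCMM} to the balanced regime by feeding in the stated hypotheses together with the explicit row-wise bound $\varpi_{2}$ of Lemma~\ref{rowwiseerrorDCMM}; after that the argument is essentially power counting in $\theta_{\mathrm{max}}$, $\theta_{\mathrm{min}}$ and $n$. First I would record the consequences of $\lambda_{K}(\Pi'\Pi)=O(n/K)$, $\pi_{\mathrm{min}}=O(n/K)$ and $K=O(1)$ for every factor in Theorem~\ref{MainDCMM}. Since $\Pi$ has nonnegative rows summing to $1$, each entry satisfies $\Pi(i,k)^{2}\le\Pi(i,k)$, so $\mathrm{tr}(\Pi'\Pi)=\sum_{i,k}\Pi(i,k)^{2}\le n$ and hence $\lambda_{1}(\Pi'\Pi)\le n$; together with $\lambda_{1}(\Pi'\Pi)\ge\lambda_{K}(\Pi'\Pi)$, which is of order $n/K\asymp n$, this yields $\lambda_{1}(\Pi'\Pi)\asymp n$ and $\kappa(\Pi'\Pi)=O(1)$. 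Therefore $K^{5}=O(1)$, $\kappa^{4.5}(\Pi'\Pi)=O(1)$, $\lambda_{1}^{1.5}(\Pi'\Pi)=O(n^{1.5})$ and $\pi_{\mathrm{min}}\asymp n$. I would also verify, as in the remarks after Corollary~\ref{AddConditions}, that the standing requirement $\sigma_{K}(\Omega)\ge C\theta_{\mathrm{max}}\sqrt{\tilde{P}_{\mathrm{max}}n\mathrm{log}(n)}$ of Lemma~\ref{rowwiseerrorDCMM} remains consistent with these choices, using $\sigma_{K}(\Omega)\ge\theta_{\mathrm{min}}^{2}\sigma_{K}(\tilde{P})\lambda_{K}(\Pi'\Pi)$ (of order $\theta_{\mathrm{min}}^{2}\sigma_{K}(\tilde{P})n$) from the proof of Lemma~\ref{rowwiseerrorDCMM}.

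Substituting these into Theorem~\ref{MainDCMM} reduces its bound to $O\!\left(\theta_{\mathrm{max}}^{15}\theta_{\mathrm{min}}^{-15}\,\varpi\,\sqrt{n}\right)$. I would then plug in $\varpi=\varpi_{2}=O\!\left(\theta_{\mathrm{max}}\sqrt{\tilde{P}_{\mathrm{max}}\theta_{\mathrm{max}}\|\theta\|_{1}\mathrm{log}(n)}\;\theta_{\mathrm{min}}^{-3}\sigma_{K}^{-1}(\tilde{P})\lambda_{K}^{-1.5}(\Pi'\Pi)\right)$ from Lemma~\ref{rowwiseerrorDCMM}, using $\tilde{P}_{\mathrm{max}}\le C$ and $\lambda_{K}^{1.5}(\Pi'\Pi)\asymp n^{1.5}$. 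Collecting the powers of $n$ (an $n^{1/2}$ from $\sqrt{n}$ against an $n^{-3/2}$ from $\lambda_{K}^{-1.5}$, net $n^{-1}$) and of $\theta$ ($\theta_{\mathrm{max}}^{15+1}$ over $\theta_{\mathrm{min}}^{15+3}$) gives the first displayed bound $O\!\left(\theta_{\mathrm{max}}^{16}\sqrt{\theta_{\mathrm{max}}\|\theta\|_{1}\mathrm{log}(n)}\;\theta_{\mathrm{min}}^{-18}\sigma_{K}^{-1}(\tilde{P})\,n^{-1}\right)$.

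For the second claim I would take $\theta_{\mathrm{max}}$ and $\theta_{\mathrm{min}}$ both of order $\sqrt{\rho}$, so that $\|\theta\|_{1}\asymp n\sqrt{\rho}$, $\theta_{\mathrm{max}}\|\theta\|_{1}\asymp\rho n$ and $\theta_{\mathrm{max}}^{16}\theta_{\mathrm{min}}^{-18}\asymp\rho^{-1}$; the first bound then collapses to $O\!\left(\rho^{-1}\sqrt{\rho n\mathrm{log}(n)}\,\sigma_{K}^{-1}(\tilde{P})\,n^{-1}\right)=O\!\left(\sigma_{K}^{-1}(\tilde{P})\sqrt{\mathrm{log}(n)/(\rho n)}\right)$, since $\sqrt{\rho n\mathrm{log}(n)}/(\rho n)=\sqrt{\mathrm{log}(n)/(\rho n)}$. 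I expect the only non-mechanical step to be the first one: deducing $\kappa(\Pi'\Pi)=O(1)$ and $\lambda_{1}(\Pi'\Pi)\asymp n$ from $\lambda_{K}(\Pi'\Pi)$ being of order $n/K$ with $K=O(1)$, and confirming that the lower bound on $\sigma_{K}(\Omega)$ required by Lemma~\ref{rowwiseerrorDCMM} continues to hold; everything else is routine bookkeeping.
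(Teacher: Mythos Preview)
Your proposal is correct and follows essentially the same approach as the paper: simplify Theorem~\ref{MainDCMM} under the balanced assumptions to $O\big(\theta_{\mathrm{max}}^{15}\theta_{\mathrm{min}}^{-15}\varpi\sqrt{n}\big)$, insert $\varpi_{2}$ from Lemma~\ref{rowwiseerrorDCMM}, and do the power counting. Your write-up is in fact a bit more careful than the paper's, since you supply the trace argument $\lambda_{1}(\Pi'\Pi)\le\mathrm{tr}(\Pi'\Pi)\le n$ to justify $\kappa(\Pi'\Pi)=O(1)$ and you explicitly check that the condition $\sigma_{K}(\Omega)\ge C\theta_{\mathrm{max}}\sqrt{\tilde{P}_{\mathrm{max}}n\mathrm{log}(n)}$ remains compatible with the hypotheses, points the paper either omits or relegates to a separate remark.
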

\begin{rem}
When $\lambda_{K}(\Pi'\Pi)=O(\frac{n}{K}), K=O(1), \theta_{\mathrm{max}}=O(\sqrt{\rho})$ and $\theta_{\mathrm{min}}=O(\sqrt{\rho})$, the requirement $\sigma_{K}(\Omega)\geq C\theta_{\mathrm{max}}\sqrt{\tilde{P}_{\mathrm{max}}n\mathrm{log}(n)}$ in Lemma \ref{rowwiseerrorDCMM} holds naturally. By the proof of Lemma \ref{rowwiseerrorDCMM}, $\sigma_{K}(\Omega)$ has a lower bound $\theta^{2}_{\mathrm{min}}\sigma_{K}(\tilde{P})\lambda_{K}(\Pi'\Pi)=O(\theta^{2}_{\mathrm{min}}\sigma_{K}(P)n)$. To make the requirement $\sigma_{K}(\Omega)\geq C\theta_{\mathrm{max}}\sqrt{\tilde{P}_{\mathrm{max}}n\mathrm{log}(n)}$ always hold, we just need $\theta^{2}_{\mathrm{min}}\sigma_{K}(\tilde{P})n\geq C\theta_{\mathrm{max}}\sqrt{\tilde{P}_{\mathrm{max}}n\mathrm{log}(n)}$, and it gives $\sigma_{K}(\tilde{P})\geq C\sqrt{\frac{\mathrm{log}(n)}{\rho n}}$, which matches the requirement of consistent estimation in Corollary \ref{AddConditionsDCMM}.
\end{rem}
Using SCSTC to Corollary \ref{AddConditionsDCMM}, let $\Theta=\sqrt{\rho}I$ such that DCMM degenerates to MMSB, it is easy to see that bound in Lemma \ref{AddConditionsDCMM} is consistent with that of Lemma \ref{AddConditions}. Therefore,  separation condition, alternative separation condition and sharp threshold obtained from Corollary \ref{AddConditionsDCMM} for the extended version of SPACL under DCMM are consistent with classical results, as shown in Tables \ref{SCST} and \ref{aSCST}. Meanwhile, when $\theta_{\mathrm{max}}=O(\sqrt{\rho}), \theta_{\mathrm{min}}=O(\sqrt{\rho})$ and settings in Corollary \ref{AddConditionsDCMM} hold, bound in Theorem 2.2 \cite{MixedSCORE} is of order $\frac{1}{\sigma_{K}(\tilde{P})}\sqrt{\frac{\mathrm{log}(n)}{\rho n}}$, which is consistent with our bound in Corollary \ref{AddConditionsDCMM}.

Consider a mixed membership network under the settings of Corollary \ref{AddConditionsDCMM} when $\Theta=\sqrt{\rho}I$ such that DCMM degenerates to SBM. By Corollary \ref{AddConditionsDCMM}, $\sigma_{K}(\tilde{P})$ should grow faster than $\sqrt{\frac{\mathrm{log}(n)}{\rho n}}$. We further assume that $\tilde{P}=(2-\beta)I_{K}+(\beta-1)\textbf{1}\textbf{1}'$ for $\beta\in [1,2)\cup(2,\infty)$, we see that this $\tilde{P}$ with unit diagonals and $\beta-1$ as non-diagonal entries still satisfies Condition (II1). Meanwhile, $\sigma_{K}(\tilde{P})=|\beta-2|=\tilde{P}_{\mathrm{max}}-\tilde{P}_{\mathrm{min}}$. When $\beta\in[1,2)$, this $\tilde{P}$ is the standard setting considered for separation condition in Section \ref{MainSCSTC}. Instead, we consider the case that $\beta\in (2,\infty)$ here. Then we have $\sigma_{K}(\tilde{P})=\beta-2$, and it should grow faster than $\sqrt{\frac{\mathrm{log}(n)}{\rho n}}$ for consistent estimation. Set $P=\rho P$ as the probability matrix for such $\tilde{P}$, we have  $p_{\mathrm{out}}=\rho (\beta-1), p_{\mathrm{in}}=\rho$, and the diagonal entries of $P$ are $p_{\mathrm{in}}$ (note that $p_{\mathrm{out}}>p_{\mathrm{in}}$ when $\beta>2$).  To obtain consistency estimation, $p_{\mathrm{out}}-p_{\mathrm{in}}=\rho (\beta-2)$ should grow faster than $\sqrt{\frac{\rho\mathrm{log}(n)}{n}}$.  Since $p_{\mathrm{in}}=\rho$, we see that $\frac{p_{\mathrm{out}}-p_{\mathrm{in}}}{\sqrt{p_{\mathrm{in}}}}$ should grow faster than $\sqrt{\frac{\mathrm{log}(n)}{n}}$ for consistent estimation. For the alternative separation condition, set $_{\mathrm{out}}=\alpha_{\mathrm{out}}\frac{\mathrm{log}(n)}{n}=\rho (\beta-1), p_{\mathrm{in}}=\alpha_{\mathrm{in}}\frac{\mathrm{log}(n)}{n}=\rho$ (note that $\alpha_{\mathrm{in}}<\alpha_{\mathrm{out}}$ when $\beta>2$), and we have $p_{\mathrm{out}}-p_{\mathrm{in}}=(\alpha_{\mathrm{out}}-\alpha_{\mathrm{in}})\frac{\mathrm{log}(n)}{n}\equiv\rho(\beta-2)$. For consistent estimation, $(\alpha_{\mathrm{out}}-\alpha_{\mathrm{in}})\frac{\mathrm{log}(n)}{n}$ should grow faster than $\sqrt{\frac{\rho \mathrm{log}(n)}{n}}$, which gives that $\frac{\alpha_{\mathrm{out}}-\alpha_{\mathrm{in}}}{\sqrt{\alpha_{\mathrm{in}}}}\gg1$. Follow similar analysis as that of separation condition and alternative separation condition, we obtain results in Tables \ref{SCST} and \ref{aSCST}.
\section{Conclusion}\label{Conclusion}
In this paper, the four step separation condition and sharp threshold criterion SCSTC is summarized as a unified framework to study consistencies and compare theoretical error rates of spectral methods under models that can degenerate to SBM in community detection area. With an application of this criterion, we find some inconsistent phenomena of a few previous works. Especially, using SCSTC we find that the original theoretical upper bounds on error rates of the SPACL algorithm under MMSB and its extended version under DCMM are sub-optimal at error rates and requirements on network sparsity. To find how the inconsistent phenomena occur, we re-establish theoretical upper bounds of error rats for both SPACL and its extended version by using recent techniques on row-wise eigenvector deviation. The resulting error bounds explicitly keep track of seven independent model parameters $(K,\rho,\sigma_{K}(\tilde{P}), \lambda_{K}(\Pi'\Pi), \lambda_{1}(\Pi'\Pi),\theta_{\mathrm{min}}, \theta_{\mathrm{max}})$, which allow us to have further delicate analysis. Compared with the original theoretical results, ours have smaller error rates with lesser dependence on $K$ and $\mathrm{log}(n)$, weaker requirements on the network sparsity and the lower bound of the smallest nonzero singular value of population adjacency matrix under both MMSB and DCMM. For DCMM, we have no constraint on the distribution of the membership matrix as long as it satisfies the identifiability condition. When considering the separation condition of a standard network and the probability to generate a connected Erd\"os-R\'enyi (ER) random graph by using SCSTC, our theoretical results match classical results. Meanwhile, our theoretical results also match that of Theorem 2.2 \cite{MixedSCORE} under mild conditions, and when DCMM degenerates to MMSB, theoretical results under DCMM are consistent with those under MMSB. Using the SCSTC criterion, we find that reasons behind the inconsistent phenomena are the sup-optimality of the original theoretical upper bounds on error rates for SPACL as well as its extended version, and whether using a regularization version of the adjacency matrix when builds theoretical results for spectral methods designed to detect nodes labels for non-mixed network. The processes of finding these inconsistent phenomena, sub-optimality theoretical results on error rates and the formation mechanism of these inconsistent phenomena, guarantee the usefulness of the SCSTC criterion. As shown by Remark \ref{FailSC}, theoretical results of some previous works can be improved by applying this criterion.  A limitation of this criterion is, it is only used for studying the consistency of spectral methods for a standard network with constant number of communities. It would be interesting to develop a more general criterion that can study consistency of all methods besides spectral methods and models besides those can degenerate to SBM for non-standard network with large $K$.
\bibliographystyle{imsart-nameyear}
\bibliography{refMMSBDCMM}
\appendix
\section{Vertex hunting algorithms}\label{VHAandExactly}
The SP algorithm is written as below.
	\begin{algorithm}
		\caption{\textbf{Successive Projection (SP)} \citep{gillis2015semidefinite}}
		\label{alg:SP}
		\begin{algorithmic}[1]
			\Require Near-separable matrix $Y_{sp}=S_{sp}M_{sp}+Z_{sp}\in\mathbb{R}^{m\times n}_{+}$ , where $S_{sp}, M_{sp}$ should satisfy Assumption 1 \cite{gillis2015semidefinite}, the number $r$ of columns to be extracted.
			\Ensure Set of indices $\mathcal{K}$ such that $Y(\mathcal{K},:)\approx S$ (up to permutation)
			\State Let $R=Y_{sp}, \mathcal{K}=\{\}, k=1$.
			\State \textbf{While} $R\neq 0$ and $k\leq r$ \textbf{do}
			\State ~~~~~~~$k_{*}=\mathrm{argmax}_{k}\|R(k,:)\|_{F}$.
			\State ~~~~~~$u_{k}=R(k_{*},:)$.
			\State ~~~~~~$R\leftarrow (I-\frac{u_{k}u'_{k}}{\|u_{k}\|^{2}_{F}})R$.
			\State ~~~~~~$\mathcal{K}=\mathcal{K}\cup \{k_{*}\}$.
			\State ~~~~~~k=k+1.
			\State \textbf{end while}
		\end{algorithmic}
	\end{algorithm}

Based on Algorithm \ref{alg:SP}, the following theorem is Theorem 1.1 in \cite{gillis2015semidefinite}, and it is also the Lemma VII.1 in \cite{mao2020estimating}. This theorem provides bound between the corner matrix $S_{sp}$ and its estimated version  returned by letting $Y_{sp}$ as input of SP algorithm when $M'_{sp}S'_{sp}$ enjoys the ideal simplex structure.
\begin{thm}\label{gillis2015siamSP}
Fix $m\geq r$ and $n\geq r$. Consider a matrix $Y_{sp}=S_{sp}M_{sp}+Z_{sp}$, where $S_{sp}\in\mathbb{R}^{m\times r}$ has a full column rank, $M_{sp}\in \mathbb{R}^{r\times n}$ is a nonnegative matrix such that the sum of each column is at most 1, and $Z_{sp}=[Z_{sp,1},\ldots, Z_{sp,n}]\in \mathbb{R}^{m\times n}$. Suppose $M_{sp}$ has a submatrix equal to $I_{r}$. Write $\epsilon\leq \mathrm{max}_{1\leq i\leq n}\|Z_{sp,i}\|_{F}$. Suppose $\epsilon=O(\frac{\sigma_{\mathrm{min}}(S_{sp})}{\sqrt{r}\kappa^{2}(S_{sp})})$, where $\sigma_{\mathrm{min}}(S_{sp})$ and $\kappa(S_{sp})$ are the minimum singular value and condition number of $S_{sp}$, respectively. If we apply the SP algorithm to columns of $Y_{sp}$, then it outputs an index set $\mathcal{K}\subset \{1,2,\ldots, n\}$ such that $|\mathcal{K}|=r$ and $\mathrm{max}_{1\leq k\leq r}\mathrm{min}_{j\in\mathcal{K}}\|S_{sp}(:,k)-Y_{sp}(:,j)\|_{F}=O(\epsilon \kappa^{2}(S_{sp}))$, where $S_{sp}(:,k)$ is the $k$-th column of $S_{sp}$.
\end{thm}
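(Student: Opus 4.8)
The plan is to prove this as the classical robustness guarantee for the Successive Projection Algorithm, so I would combine a noiseless geometric observation with a perturbation analysis carried through an induction on the number of extracted columns (the column version of the update $R\leftarrow (I-\frac{u_{k}u'_{k}}{\|u_{k}\|^{2}_{F}})R$ in Algorithm \ref{alg:SP}). First I would set up the noiseless geometry. Writing $y_{j}:=Y_{sp}(:,j)=\sum_{k=1}^{r}M_{sp}(k,j)S_{sp}(:,k)+Z_{sp,j}$ with $M_{sp}(:,j)\geq 0$ and $\mathbf{1}'M_{sp}(:,j)\leq 1$, each clean column $S_{sp}M_{sp}(:,j)$ lies in the polytope $\mathrm{conv}\{0,S_{sp}(:,1),\ldots,S_{sp}(:,r)\}$. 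Since $\|\cdot\|_{F}$ of a column equals the convex function $\|\cdot\|_{2}$, its maximum over this polytope is attained at a vertex, and because $0$ has the smallest norm the maximizer is one of the columns of $S_{sp}$. The separability hypothesis that $M_{sp}$ contains $I_{r}$ guarantees those vertices actually occur among the data columns, so in the noiseless case the argmax-norm rule at the first step returns exactly some $S_{sp}(:,k)$.

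Next I would prove a one-step perturbation lemma: under $\epsilon=O(\frac{\sigma_{\mathrm{min}}(S_{sp})}{\sqrt{r}\kappa^{2}(S_{sp})})$, the column selected by the argmax-norm rule is within $O(\epsilon\kappa^{2}(S_{sp}))$ of one of the true, as-yet-unselected vertices. One factor of $\kappa(S_{sp})$ arises from translating a coefficient-space separation of the vertices into a norm gap in data space through $\sigma_{\mathrm{min}}(S_{sp})$ and $\sigma_{\mathrm{max}}(S_{sp})$, and a second from the distortion incurred when comparing a near-vertex column to a near-interior column. The smallness of $\epsilon$ forces a strict norm gap, so the selected column can be neither an interior point nor a previously chosen vertex.

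The induction and the main obstacle come last. After selecting an approximate vertex I project every column onto the orthogonal complement of the chosen direction, exactly as the SP update prescribes; the projected data is again approximately separable, now with clean part supported on the convex hull of $0$ and the projected unselected vertices, and I would recurse. The crux is controlling error propagation across the $r$ steps, since a naive estimate lets the effective conditioning compound to $\kappa^{2r}(S_{sp})$. The hardest ingredient is therefore a singular-value perturbation argument showing that the projected unselected vertices retain a minimum singular value $\succeq\sigma_{\mathrm{min}}(S_{sp})$ and a maximum $\preceq\sigma_{\mathrm{max}}(S_{sp})$, so that the relevant condition number stays $O(\kappa(S_{sp}))$ at every step rather than multiplying; the projected noise also remains $O(\epsilon)$ because orthogonal projection is nonexpansive. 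Carrying the invariant ``$t$ distinct vertices selected, each within $O(\epsilon\kappa^{2}(S_{sp}))$, residual still separable with these controlled parameters'' through all $r$ iterations yields $|\mathcal{K}|=r$ with one index per vertex and the stated bound $\mathrm{max}_{1\leq k\leq r}\mathrm{min}_{j\in\mathcal{K}}\|S_{sp}(:,k)-Y_{sp}(:,j)\|_{F}=O(\epsilon\kappa^{2}(S_{sp}))$. I would conclude by noting that this is precisely Theorem 1.1 of \cite{gillis2015semidefinite}, from which the explicit constants may be imported.
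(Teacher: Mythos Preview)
The paper does not prove this statement at all: it is quoted verbatim as Theorem 1.1 of \cite{gillis2015semidefinite} (equivalently Lemma VII.1 of \cite{mao2020estimating}) and used as a black box, with no argument supplied. So there is no ``paper's own proof'' to compare against beyond the citation.

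Your proposal goes well beyond what the paper does by sketching the actual Gillis--Vavasis argument: the convexity/vertex observation for the noiseless step, a one-step perturbation lemma, and the induction with the key singular-value control on the projected remaining vertices to prevent the condition number from compounding. That outline is faithful to how the original proof proceeds, and you yourself note at the end that this is precisely Theorem 1.1 of \cite{gillis2015semidefinite}. For the purposes of this paper, simply citing that theorem (as the paper does) is sufficient; your additional sketch is correct in spirit but unnecessary here.
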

For the ideal SPACL algorithm, since inputs of the ideal SPACL are $\Omega$ and $K$, we see that the inputs of SP algorithm are $U$ and $K$. Let $m=K, r=K, Y_{sp}=U', Z_{sp}=U'-U'\equiv0, S_{sp}=U'(\mathcal{I},:),$ and $M_{sp}=\Pi'$. Then, we have $\mathrm{max}_{i\in[n]}\|U(i,:)-U(i,:)\|_{F}=0$. By Theorem \ref{gillis2015siamSP}, SP algorithm returns $\mathcal{I}$ up to permutation when the input is $U$ assuming there are $K$ communities. Since $U=\Pi U(\mathcal{I},:)$ under $MMSB_{n}(K,P,\Pi,\rho)$, we see that $U(i,:)=U(j,:)$ as long as $\Pi(i,:)=\Pi(j,:)$. Therefore, though $\mathcal{I}$ may be different up to permutation, $U(\mathcal{I},:)$ is unchanged. Therefore, follow the four steps of the ideal SPACL algorithm, we see that it exactly returns $\Pi$.

Algorithm \ref{alg:SVMcone} below is the SVM-cone algorithm provided in \cite{MaoSVM}.
\begin{algorithm}
\caption{\textbf{SVM-cone}\cite{MaoSVM}}
\label{alg:SVMcone}
\begin{algorithmic}[1]
\Require $\hat{S}\in \mathbb{R}^{n\times m}$ with rows have unit $l_{2}$ norm, number of corners $K$, estimated distance corners from hyperplane $\gamma$.
\Ensure The near-corner index set $\mathcal{\hat{I}}$.
\State Run one-class SVM on $\hat{S}(i,:)$ to get $\hat{\textbf{w}}$ and $\hat{b}$
\State Run K-means algorithm to the set $\{\hat{S}(i,:)| \hat{S}(i,:)\hat{\textbf{w}}\leq \hat{b}+\gamma\}$ that are close to the hyperplane into $K$ clusters
\State Pick one point from each cluster to get the near-corner set $\mathcal{\hat{I}}$
\end{algorithmic}
\end{algorithm}
As suggested in \cite{MaoSVM}, we can start $\gamma=0$ and incrementally increase it until $K$ distinct clusters are found. Meanwhile, for the ideal SVM-cone-DCMMSB algorithm, when setting $U_{*}$ and $K$ as the inputs of the SVM-cone algorithms, since $\|U_{*}-U_{*}\|_{2\rightarrow\infty}=0$, Lemma F.1. \cite{MaoSVM} guarantees that SVM-cone algorithm returns $\mathcal{I}$ up to permutation. Since $U_{*}=Y U_{*}(\mathcal{I},:)$ by Lemma \ref{IC} under $DCMM_{n}(K,P,\Pi,\Theta)$, we have $U_{*}(i,:)=U_{*}(j,:)$ when $\Pi(i,:)=\Pi(j,:)$ by basic algebra, which gives that $U_{*}(\mathcal{I},:)$ is unchanged though $\mathcal{I}$ may be different up to permutation. Therefore, the ideal SVM-cone-DCMMSB exactly recovers $\Pi$.
\section{Proof of consistency under MMSB}
\subsection{Proof of Lemma \ref{BoundAOmega}}
\begin{proof}
We apply Theorem 1.4 (Bernstein inequality) in \cite{tropp2012user} to bound $\|A-\Omega\|$, and this theorem is written as below
\begin{thm}\label{Bern}
Consider a finite sequence $\{X_{k}\}$ of independent, random, self-adjoint matrices with dimension $d$. Assume that each random matrix satisfies
\begin{align*}
\mathbb{E}[X_{k}]=0, \mathrm{and~}\lambda_{\mathrm{max}}(X_{k})\leq R~\mathrm{almost~surely}.
\end{align*}
Then, for all $t\geq 0$,
\begin{align*}
\mathbb{P}(\lambda_{\mathrm{max}}(\sum_{k}X_{k})\geq t)\leq d\cdot \mathrm{exp}(\frac{-t^{2}/2}{\sigma^{2}+Rt/3}),
\end{align*}
where $\sigma^{2}:=\|\sum_{k}\mathbb{E}[X^{2}_{k}]\|$.
\end{thm}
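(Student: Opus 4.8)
The plan is to follow the matrix Laplace transform method. First I would convert the tail probability into a bound on the expected trace of a matrix exponential. For any $\theta>0$, since $s\mapsto e^{\theta s}$ is increasing and $e^{\theta\lambda_{\mathrm{max}}(Y)}=\lambda_{\mathrm{max}}(e^{\theta Y})\leq\mathrm{tr}\,e^{\theta Y}$ for a self-adjoint $Y$, Markov's inequality gives
\begin{align*}
\mathbb{P}(\lambda_{\mathrm{max}}(\textstyle\sum_{k}X_{k})\geq t)\leq e^{-\theta t}\,\mathbb{E}[\mathrm{tr}\,\mathrm{exp}(\theta\textstyle\sum_{k}X_{k})].
\end{align*}
The entire difficulty is thereby shifted onto controlling the matrix moment generating function $\mathbb{E}[\mathrm{tr}\,\mathrm{exp}(\theta\sum_{k}X_{k})]$ on the right-hand side.

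The main obstacle is that the $X_{k}$ do not commute, so the exponential of the sum cannot be split into a product of exponentials as in the scalar Chernoff argument. I would resolve this using Lieb's concavity theorem, which asserts that $H\mapsto\mathrm{tr}\,\mathrm{exp}(L+\log H)$ is concave on positive-definite $H$. Conditioning on $X_{1},\ldots,X_{k-1}$ and applying Jensen's inequality through Lieb's theorem one summand at a time, the expectations can be pulled inside the exponential, producing the subadditivity of matrix cumulant generating functions,
\begin{align*}
\mathbb{E}[\mathrm{tr}\,\mathrm{exp}(\theta\textstyle\sum_{k}X_{k})]\leq\mathrm{tr}\,\mathrm{exp}(\textstyle\sum_{k}\log\mathbb{E}[e^{\theta X_{k}}]).
\end{align*}
This is the substantive step; everything afterward is comparatively mechanical.

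Next I would bound each matrix cumulant $\log\mathbb{E}[e^{\theta X_{k}}]$ in the Loewner order. The scalar function $f(s)=(e^{\theta s}-\theta s-1)/s^{2}=\sum_{j\geq2}\theta^{j}s^{j-2}/j!$ is increasing, so $e^{\theta s}\leq 1+\theta s+f(R)s^{2}$ for every $s\leq R$; since the hypothesis $\lambda_{\mathrm{max}}(X_{k})\leq R$ means every eigenvalue of $X_{k}$ is at most $R$, the transfer rule lifts this scalar inequality to $e^{\theta X_{k}}\preceq I+\theta X_{k}+f(R)X_{k}^{2}$. Taking expectations, using $\mathbb{E}[X_{k}]=0$ and $I+M\preceq e^{M}$, gives $\mathbb{E}[e^{\theta X_{k}}]\preceq\mathrm{exp}(f(R)\,\mathbb{E}[X_{k}^{2}])$, whence monotonicity of the matrix logarithm yields $\log\mathbb{E}[e^{\theta X_{k}}]\preceq f(R)\,\mathbb{E}[X_{k}^{2}]$. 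The Taylor estimate $e^{\theta R}-\theta R-1\leq\frac{(\theta R)^{2}/2}{1-\theta R/3}$, valid for $0<\theta<3/R$ via $j!\geq2\cdot3^{j-2}$, shows $f(R)\leq g(\theta)$ with $g(\theta):=\frac{\theta^{2}/2}{1-\theta R/3}$.

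Substituting back and using monotonicity of the trace exponential under the Loewner order, I would obtain
\begin{align*}
\mathbb{E}[\mathrm{tr}\,\mathrm{exp}(\theta\textstyle\sum_{k}X_{k})]\leq\mathrm{tr}\,\mathrm{exp}(g(\theta)\textstyle\sum_{k}\mathbb{E}[X_{k}^{2}])\leq d\cdot\mathrm{exp}(g(\theta)\sigma^{2}),
\end{align*}
where the final step bounds the trace of the exponential of a positive-semidefinite matrix by $d$ times the exponential of its largest eigenvalue, and $\lambda_{\mathrm{max}}(\sum_{k}\mathbb{E}[X_{k}^{2}])=\|\sum_{k}\mathbb{E}[X_{k}^{2}]\|=\sigma^{2}$. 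Combining with the Markov step gives $\mathbb{P}(\lambda_{\mathrm{max}}(\sum_{k}X_{k})\geq t)\leq d\cdot\mathrm{exp}(-\theta t+g(\theta)\sigma^{2})$ for every $0<\theta<3/R$. Finally I would insert the Bernstein choice $\theta=t/(\sigma^{2}+Rt/3)$, which lies in $(0,3/R)$ and makes $-\theta t+g(\theta)\sigma^{2}$ collapse to $-\frac{t^{2}/2}{\sigma^{2}+Rt/3}$, yielding the stated bound.
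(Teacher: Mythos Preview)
Your proof sketch is correct and is exactly the standard matrix Laplace transform argument from Tropp's paper: Markov's inequality reduces to a trace-MGF bound, Lieb's concavity pushes the expectations inside the exponential to obtain subadditivity of matrix cumulants, a Bernstein-type scalar bound controls each $\log\mathbb{E}[e^{\theta X_k}]$ in the Loewner order, and the choice $\theta=t/(\sigma^{2}+Rt/3)$ closes the calculation. One small remark: in step 3 you use that $\lambda_{\mathrm{max}}(X_k)\le R$ forces \emph{every} eigenvalue of $X_k$ to lie in $(-\infty,R]$, which is exactly what the transfer rule needs, so that is fine.

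There is nothing to compare here, though: the paper does not prove this theorem at all. It is quoted verbatim as Theorem 1.4 of \cite{tropp2012user} and used as a black box in the proof of Lemma \ref{BoundAOmega}. So you have supplied a full proof where the paper simply cites the result.
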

Let $e_{i}$ be an $n\times 1$ vector, where $e_{i}(i)=1$ and 0 elsewhere,for $i\in[n]$. For convenience, set $W=A-\Omega$. Then we can write $W$ as $W=\sum_{i=1}^{n}\sum_{j=1}^{n}W(i,j)e_{i}e'_{j}$. Set $W^{(i,j)}$ as the $n\times n$ matrix such that $W^{(i,j)}=W(i,j)(e_{i}e'_{j}+e_{j}e_{i}')$, which gives
$W=\sum_{1\leq i< j\leq n}W^{(i,j)}$ where $\mathbb{E}[W^{(i,j)}]=0$ and
\begin{align*}
\|W^{(i,j)}\|&=\|W(i,j)(e_{i}e'_{j}+e_{j}e_{i})\|=|W(i,j)|\|(e_{i}e'_{j}+e_{j}e_{i}')\|=|W(i,j)|=|A(i,j)-\Omega(i,j)|\leq 1.
\end{align*}
For the variance parameter $\sigma^{2}:=\|\sum_{1\leq i<j\leq n}\mathbb{E}[(W^{(i,j)})^{2}]\|$. We bound $\mathbb{E}(W^{2}(i,j))$ as below
\begin{align*}
\mathbb{E}(W^{2}(i,j))=\mathbb{E}((A(i,j)-\Omega(i,j))^{2})=\mathrm{Var}(A(i,j))=\Omega(i,j)(1-\Omega(i,j))\leq\Omega(i,j)=\rho\Pi(i,:)\tilde{P}\Pi'(j,:)\leq \rho.
\end{align*}
Next we bound $\sigma^{2}$ as below
\begin{align*}
\sigma^{2}&=\|\sum_{1\leq i<j\leq n}\mathbb{E}(W^{2}(i,j))(e_{i}e_{j}'+e_{j}e_{i}')(e_{i}e_{j}'+e_{j}e_{i}')\|=\|\sum_{1\leq i<j\leq n}\mathbb{E}[W^{2}(i,j)(e_{i}e'_{i}+e_{j}e_{j}')]\|\\
&\leq\underset{1\leq i\leq n}{\mathrm{max}}|\sum_{j=1}^{n}\mathbb{E}(W^{2}(i,j))|\leq \underset{1\leq i\leq n}{\mathrm{max}}\sum_{j=1}^{n}\rho=\rho n.
\end{align*}
Set $t=\frac{\alpha+1+\sqrt{(\alpha+1)(\alpha+19)}}{3}\sqrt{\rho n\mathrm{log}(n)}$ for any $\alpha>0$, combine Theorem \ref{Bern} with $\sigma^{2}\leq \rho n, R=1, d=n$, we have
\begin{align*}
\mathbb{P}(\|W\|\geq t)&=\mathbb{P}(\|\sum_{1\leq i<j\leq n}W^{(i,j)}\|\geq t)\leq n\cdot \mathrm{exp}(\frac{-t^{2}/2}{\sigma^{2}+Rt/3})\\
&\leq n\cdot\mathrm{exp}(\frac{-(\alpha+1)\mathrm{log}(n)}{\frac{18}{(\sqrt{\alpha+1}+\sqrt{\alpha+19})^{2}}+\frac{2\sqrt{\alpha+1}}{\sqrt{\alpha+1}+\sqrt{\alpha+19}}\sqrt{\frac{\mathrm{log}(n)}{\rho n}}})\leq \frac{1}{n^{\alpha}},
\end{align*}
where we have used Assumption (A1) such that $\frac{18}{(\sqrt{\alpha+1}+\sqrt{\alpha+19})^{2}}+\frac{2\sqrt{\alpha+1}}{\sqrt{\alpha+1}+\sqrt{\alpha+19}}\sqrt{\frac{\mathrm{log}(n)}{\rho n}}\leq \frac{18}{(\sqrt{\alpha+1}+\sqrt{\alpha+19})^{2}}+\frac{2\sqrt{\alpha+1}}{\sqrt{\alpha+1}+\sqrt{\alpha+19}}=1$.
\end{proof}
\subsection{Proof of Lemma \ref{rowwiseerror}}
\begin{proof}
Let $H=\hat{U}'U$, and $H=U_{H}\Sigma_{H}V'_{H}$ be the SVD decomposition of $H_{\hat{U}}$ with $U_{H},V_{H}\in \mathbb{R}^{n\times K}$, where $U_{H}$ and $V_{H}$ represent respectively the left and right singular matrices of $H$. Define $\mathrm{sgn}(H)=U_{H}V'_{H}$. Since $\mathbb{E}(A(i,j)-\Omega(i,j))=0$, $\mathbb{E}[(A(i,j)-\Omega(i,j))^{2}]\leq\rho$ by the proof of Lemma \ref{BoundAOmega}, $\frac{1}{\sqrt{\rho n/(\mu \mathrm{log}(n))}}\leq O(1)$ holds by Assumption (A1) where $\mu$ is the incoherence parameter defined as $\mu=\frac{n\|U\|^{2}_{2\rightarrow\infty}}{K}$. By Theorem 4.2.1. \cite{chen2020spectral}, with high probability, we have
\begin{align*}
\|\hat{U}\mathrm{sgn}(H)-U\|_{2\rightarrow\infty}\leq C\frac{\sqrt{K\rho}(\kappa(\Omega)\sqrt{\mu}+\sqrt{\mathrm{log}(n)})}{\sigma_{K}(\Omega)},
\end{align*}
provided that $c_{1}\sigma_{K}(\Omega)\geq \sqrt{\rho n\mathrm{log}(n)}$ for some sufficiently small constant $c_{1}$. By Lemma 3.1 of \cite{mao2020estimating}, we know that  $\|U\|^{2}_{2\rightarrow\infty}\leq \frac{1}{\lambda_{K}(\Pi'\Pi)}$, which gives
\begin{align}\label{errorhatUhatV}
\|\hat{U}\mathrm{sgn}(H)-U\|_{2\rightarrow\infty}\leq C\frac{\sqrt{K\rho}(\kappa(\Omega)\sqrt{\frac{n}{K\lambda_{K}(\Pi'\Pi)}}+\sqrt{\mathrm{log}(n)})}{\sigma_{K}(\Omega)}.
\end{align}
\begin{rem}
By Theorem 4.2 of \cite{cape2019the}, when $\sigma_{K}(\Omega)\geq 4\|A-\Omega\|_{\infty}$, we have
\begin{align*}
\|\hat{U}\mathrm{sgn}(H)-U\|_{2\rightarrow\infty}\leq 14\frac{\|A-\Omega\|_{\infty}}{\sigma_{K}(\Omega)}\|U\|_{2\rightarrow\infty}.
\end{align*}
By Lemma 3.1 \cite{mao2020estimating}, we have
\begin{align*}
\|\hat{U}\mathrm{sgn}(H)-U\|_{2\rightarrow\infty}\leq 14\frac{\|A-\Omega\|_{\infty}}{\sigma_{K}(\Omega)}\sqrt{\frac{1}{\lambda_{K}(\Pi'\Pi)}}.
\end{align*}
Unlike Lemma V.1 \cite{mao2020estimating} which bounds $\|A-\Omega\|_{\infty}$ via Chernoff bound and obtains $\|A-\Omega\|_{\infty}\leq C\rho n$ with high probability, we bound $\|A-\Omega\|_{\infty}$ by Bernstein inequality using similar idea as Eq (C.67) of \cite{MixedSCORE}. Let $y=(y_{1},y_{2},\ldots, y_{n})'$ be any $n\times 1$ vector, by Eq (C.67) \cite{MixedSCORE} we know that with an application of Bernstein inequality, for any $t\geq 0$ and $i\in[n]$, we have
\begin{align*}
\mathbb{P}(|\sum_{j=1}^{n}(A(i,j)-\Omega(i,j))y(j)|>t)\leq 2\mathrm{exp}(-\frac{t^{2}/2}{\sum_{j=1}^{n}\Omega(i,j)y^{2}(j)+\frac{t\|y\|_{\infty}}{3}}).
\end{align*}
By the proof of Lemma \ref{BoundAOmega}, we have $\Omega(i,j)\leq \rho$. Set $y(j)$ as $1$ or $-1$ such that $(A(i,j)-\Omega(i,j))y(j)=|A(i,j)-\Omega(i,j)|$, we have
\begin{align*}
\mathbb{P}(\|A-\Omega\|_{\infty}>t)\leq 2\mathrm{exp}(-\frac{t^{2}/2}{\rho n+\frac{t}{3}}).
\end{align*}
Set $t=\frac{\alpha+1+\sqrt{(\alpha+1)(\alpha+19)}}{3}\sqrt{\rho n\mathrm{log}(n)}$ for any $\alpha>0$, by Assumption (A1), we have
\begin{align*}
\mathbb{P}(\|A-\Omega\|_{\infty}>t)\leq 2\mathrm{exp}(-\frac{t^{2}/2}{\rho n+\frac{t}{3}})\leq n^{-\alpha}.
\end{align*}
Hence, when $\sigma_{K}(\Omega)\geq C_{0}\sqrt{\rho n\mathrm{log}(n)}$ where $C_{0}=4\frac{\alpha+1+\sqrt{(\alpha+1)(\alpha+19)}}{3}$, with probability at least $1-o(n^{-\alpha})$,
\begin{align*}
\|\hat{U}\mathrm{sgn}(H_{\hat{U}})-U\|_{2\rightarrow\infty}\leq C\frac{\sqrt{\rho n\mathrm{log}(n)}}{\sigma_{K}(\Omega)}\sqrt{\frac{1}{\lambda_{K}(\Pi'\Pi)}}.
\end{align*}
Note that when $\lambda_{K}(\Pi'\Pi)=O(\frac{n}{K})$, the above bound turns to be $C\frac{\sqrt{\rho K\mathrm{log}(n)}}{\sigma_{K}(\Omega)}$, which is consistent with that of Eq (\ref{errorhatUhatV}). Also note that this bound $\frac{\sqrt{\rho n\mathrm{log}(n)}}{\sigma_{K}(\Omega)}\sqrt{\frac{1}{\lambda_{K}(\Pi'\Pi)}}$ is sharper than the $\frac{\rho n}{\sigma_{K}(\Omega)}\sqrt{\frac{1}{\lambda_{K}(\Pi'\Pi)}}$ of Lemma V.1 \cite{mao2020estimating} by Assumption (A1).
\end{rem}
Since $\hat{U}$ and $U$ have orthonormal columns,
now we are ready to bound $\|\hat{U}\hat{U}'-UU'\|_{2\rightarrow\infty}$:
\begin{align*}	
&\|\hat{U}\hat{U}'-UU'\|_{2\rightarrow\infty}=\mathrm{max}_{i\in[n]}\|e'_{i}(UU'-\hat{U}\hat{U}')\|_{F}\\
&=\mathrm{max}_{i\in[n]}\|e'_{i}(UU'-\hat{U}\mathrm{sgn}(H)U'+\hat{U}\mathrm{sgn}(H)U'-\hat{U}\hat{U}')\|_{F}\\
&\leq\mathrm{max}_{i\in[n]}\|e'_{i}(U-\hat{U}\mathrm{sgn}(H))U'\|_{F}+\mathrm{max}_{i\in[n]}\|e'_{i}\hat{U}(\mathrm{sgn}(H)U'-\hat{U}')\|_{F}\\
&=\mathrm{max}_{i\in[n]}\|e'_{i}(U-\hat{U}\mathrm{sgn}(H))\|_{F}+\mathrm{max}_{i\in[n]}\|\hat{U}(\mathrm{sgn}(H)U'-\hat{U}')e_{i}\|_{F}\\
&=\mathrm{max}_{i\in[n]}\|e'_{i}(U-\hat{U}\mathrm{sgn}(H))\|_{F}+\mathrm{max}_{i\in[n]}\|(\mathrm{sgn}(H)U'-\hat{U}')e_{i}\|_{F}\\
&=\mathrm{max}_{i\in[n]}\|e'_{i}(U-\hat{U}\mathrm{sgn}(H))\|_{F}+\mathrm{max}_{i\in[n]}\|e'_{i}(U(\mathrm{sgn}(H))'-\hat{U})\|_{F}\\
&=\mathrm{max}_{i\in[n]}\|e'_{i}(U-\hat{U}\mathrm{sgn}(H))\|_{F}+\mathrm{max}_{i\in[n]}\|e'_{i}(U-\hat{U}\mathrm{sgn}(H))\|_{F}\\
&=2\mathrm{max}_{i\in[n]}\|e'_{i}(U-\hat{U}\mathrm{sgn}(H))\|_{F}=2\|U-\hat{U}\mathrm{sgn}(H)\|_{2\rightarrow\infty}\\
&\leq C\frac{\sqrt{K}(\kappa(\Omega)\sqrt{\frac{n}{K\lambda_{K}(\Pi'\Pi)}}+\sqrt{\mathrm{log}(n)})}{\sigma_{K}(\tilde{P})\sqrt{\rho}\lambda_{K}(\Pi'\Pi)},
\end{align*}
where the last inequality holds since $\sigma_{K}(\Omega)\geq \sigma_{K}(\tilde{P})\rho \lambda_{K}(\Pi'\Pi)$ under $MMSB_{n}(K,\tilde{P},\Pi,\rho)$ by Lemma II.4 \cite{mao2020estimating}
And this bound is $C\frac{\sqrt{n\mathrm{log}(n)}}{\sigma_{K}(\tilde{P})\sqrt{\rho}\lambda^{1.5}_{K}(\Pi'\Pi)}$ if we use Theorem 4.2 of \cite{cape2019the}.
\begin{rem}
By Theorem 4.5 \cite{cape2020orthogonal}, we have $\|\hat{U}\hat{U}'-UU'\|_{2\rightarrow\infty}\leq \sqrt{n}(\|\hat{U}\|_{2\rightarrow\infty}+\|U\|_{2\rightarrow\infty})\|U-\hat{U}\mathrm{sgn}(H)\|_{2\rightarrow\infty}\leq \sqrt{n}(\|U-\hat{U}\mathrm{sgn}(H)\|_{2\rightarrow\infty}+2\|U\|_{2\rightarrow\infty})\|U-\hat{U}\mathrm{sgn}(H)\|_{2\rightarrow\infty}\leq\sqrt{n}(\|U-\hat{U}\mathrm{sgn}(H)\|_{2\rightarrow\infty}+\frac{2}{\sqrt{\lambda_{K}(\Pi'\Pi)}})\|U-\hat{U}\mathrm{sgn}(H)\|_{2\rightarrow\infty}=O(\frac{2\sqrt{n}}{\sqrt{\lambda_{K}(\Pi'\Pi)}}\|U-\hat{U}\mathrm{sgn}(H)\|_{2\rightarrow\infty})$. Sure our bound $\|\hat{U}\hat{U}'-UU'\|_{2\rightarrow\infty}\leq 2\|U-\hat{U}\mathrm{sgn}(H)\|_{2\rightarrow\infty}$ enjoys concise form.  Especially, when $\lambda_{K}(\Pi'\Pi)=O(\frac{n}{K})$ and $K=O(1)$, the two bounds give that $\|\hat{U}\hat{U}'-UU'\|_{2\rightarrow\infty}=O(\|U-\hat{U}\mathrm{sgn}(H)\|_{2\rightarrow\infty})$, which provides same error bound of estimated memberships given in Corollary \ref{AddConditions} .
\end{rem}
\end{proof}
\subsection{Proof of Theorem \ref{Main}}
\begin{proof}
Follow almost the same proof as Eq (3) of \cite{mao2020estimating}, for $i\in[n]$, there exists a permutation matrix $\mathcal{P}\in\mathbb{R}^{K\times K}$ such that
\begin{align}\label{ourBoundonZ}
\|e'_{i}(\hat{Z}-Z\mathcal{P})\|_{F}=O(\varpi\kappa(\Pi'\Pi)\sqrt{K\lambda_{1}(\Pi'\Pi)}).
\end{align}
Note that the bound in Eq (\ref{ourBoundonZ}) is $\sqrt{K}$ times of the bound in Eq (3) of \cite{mao2020estimating}, this is because in the Eq (3) of  \cite{mao2020estimating}, (in \cite{mao2020estimating}'s language) $\|\hat{V}^{-1}_{p}\|$ denotes the Frobenius norm of $\hat{V}^{-1}_{p}$ instead of the spectral norm. Since $\|\hat{V}^{-1}_{p}\|_{F}\leq \frac{\sqrt{K}}{\sigma_{K}(\hat{V}_{p})}$, the bound in Eq (3) \cite{mao2020estimating} should multiply $\sqrt{K}$.

Recall that $Z=\Pi,\Pi(i,:)=\frac{Z(i,:)}{\|Z(i,:)\|_{1}}, \hat{\Pi}(i,:)=\frac{\hat{Z}(j,:)}{\|\hat{Z}(j,:)\|_{1}}$, for $i\in[n]$, since
\begin{align*}	\|e'_{i}(\hat{\Pi}-\Pi\mathcal{P})\|_{1}&=\|\frac{e'_{i}\hat{Z}}{\|e'_{i}\hat{Z}\|_{1}}-\frac{e'_{i}Z\mathcal{P}}{\|e'_{i}Z\mathcal{P}\|_{1}}\|_{1}=\|\frac{e'_{i}\hat{Z}\|e'_{i}Z\|_{1}-e'_{i}Z\mathcal{P}\|e'_{i}\hat{Z}\|_{1}}{\|e'_{i}\hat{Z}\|_{1}\|e'_{i}Z\|_{1}}\|_{1}\\	&\leq\frac{\|e'_{i}\hat{Z}\|e'_{i}Z\|_{1}-e'_{i}\hat{Z}\|e'_{i}\hat{Z}\|_{1}\|_{1}+\|e'_{i}\hat{Z}\|e'_{i}\hat{Z}\|_{1}-e'_{i}Z\mathcal{P}\|e'_{i}\hat{Z}\|_{1}\|_{1}}{\|e'_{i}\hat{Z}\|_{1}\|e'_{i}Z\|_{1}}\\	&=\frac{|\|e'_{i}Z\|_{1}-\|e'_{i}\hat{Z}\|_{1}|+\|e'_{i}\hat{Z}-e'_{i}Z\mathcal{P}\|_{1}}{\|e'_{i}Z\|_{1}}\leq\frac{2\|e'_{i}(\hat{Z}-Z\mathcal{P})\|_{1}}{\|e'_{i}Z\|_{1}}\\
&=\frac{2\|e'_{i}(\hat{Z}-Z\mathcal{P})\|_{1}}{\|e'_{i}\Pi\|_{1}}=2\|e'_{i}(\hat{Z}-Z\mathcal{P})\|_{1}\leq 2\sqrt{K}\|e'_{i}(\hat{Z}-Z\mathcal{P})\|_{F}\\
&=O(\varpi K\kappa(\Pi'\Pi)\sqrt{\lambda_{1}(\Pi'\Pi)}).
\end{align*}
\end{proof}
\subsection{Proof of Corollary \ref{AddConditions}}
\begin{proof}
Under  conditions of Corollary \ref{AddConditions}, we have
\begin{align*}
\mathrm{max}_{i\in[n]}\|e'_{i}(\hat{\Pi}-\Pi\mathcal{P})\|_{1}=O(\varpi\sqrt{n}).
\end{align*}
Under  conditions of Corollary \ref{AddConditions}, Lemma \ref{rowwiseerror} gives $\varpi=O(\frac{1}{\sigma_{K}(\tilde{P})}\frac{1}{\sqrt{n}}\sqrt{\frac{\mathrm{log}(n)}{\rho n}})$,
which gives that
\begin{align*}
\mathrm{max}_{i\in[n]}\|e'_{i}(\hat{\Pi}-\Pi\mathcal{P})\|_{1}=O(\frac{1}{\sigma_{K}(\tilde{P})}\sqrt{\frac{\mathrm{log}(n)}{\rho n}}).
\end{align*}
\end{proof}
\section{Proof of consistency under DCMM}
\subsection{Proof of Lemma \ref{IC}}
\begin{proof}
Since $\Omega=U\Lambda U'$, we have $U=\Omega U\Lambda^{-1}$ since $U'U=I_{K}$. Recall that $\Omega=\Theta\Pi \tilde{P}\Pi'\Theta$, we have $U=\Theta\Pi \tilde{P}\Pi'\Theta U\Lambda^{-1}=\Theta\Pi B$, where we set $B=\tilde{P}\Pi'\Theta U\Lambda^{-1}$ for convenience. Since $U(\mathcal{I},:)=\Theta(\mathcal{I},\mathcal{I})\Pi(\mathcal{I},:)B=\Theta(\mathcal{I},\mathcal{I})B$, we have $B=\Theta^{-1}(\mathcal{I},\mathcal{I})U(\mathcal{I},:)$.
	
Set $M=\Pi B$. Then we have $U=\Theta M$, which gives that $U(i,:)=e'_{i}U=\Theta(i,i)M(i,:)$ for $i\in[n]$. Therefore, $U_{*}(i,:)=\frac{U(i,:)}{\|U(i,:)\|_{F}}=\frac{M(i,:)}{\|M(i,:)\|_{F}}$, combine it with the fact that $B=\Theta^{-1}(\mathcal{I},\mathcal{I})U(\mathcal{I},:)\equiv\Theta^{-1}(\mathcal{I},\mathcal{I})N^{-1}_{U}(\mathcal{I},\mathcal{I})N_{U}(\mathcal{I},\mathcal{I})U(\mathcal{I},:)\equiv\Theta^{-1}(\mathcal{I},\mathcal{I})N^{-1}_{U}(\mathcal{I},\mathcal{I})U_{*}(\mathcal{I},:)$, we have
\begin{flalign*}
U_{*}=\begin{bmatrix}
\Pi(1,:)/\|M(1,:)\|_{F}\\
\Pi(2,:)/\|M(2,:)\|_{F}\\
\vdots\\
\Pi(n,:)/\|M(n,:)\|_{F} \end{bmatrix}B=\begin{bmatrix}
\Pi(1,:)/\|M(1,:)\|_{F}\\
\Pi(2,:)/\|M(2,:)\|_{F}\\
\vdots\\
\Pi(n,:)/\|M(n,:)\|_{F}
\end{bmatrix}\Theta^{-1}(\mathcal{I},\mathcal{I})N_{U}^{-1}(\mathcal{I},\mathcal{I})U_{*}(\mathcal{I},:).
\end{flalign*}
Therefore, we have
\begin{align*}
Y=N_{M}\Pi\Theta^{-1}(\mathcal{I},\mathcal{I})N_{U}^{-1}(\mathcal{I},\mathcal{I}),
\end{align*}
where $N_{M}$ is a diagonal matrix whose $i$-th diagonal entry is $\frac{1}{\|M(i,:)\|_{F}}$ for $i\in[n]$.
\end{proof}
\subsection{Proof of Lemma \ref{BoundAOmegaDCMM}}
\begin{proof}
Similar as the proof of Lemma \ref{BoundAOmega}, set $W=A-\Omega$ and $W^{(i,j)}=W(i,j)(e_{i}e'_{j}+e_{j}e_{i}')$, we have $W=\sum_{1\leq i< j\leq n}W^{(i,j)}$, $\mathbb{E}[W^{(i,j)}]=0$ and $ \|W^{(i,j)}\|\leq 1$. Since
\begin{align*} \mathbb{E}(W^{2}(i,j))&=\mathbb{E}((A(i,j)-\Omega(i,j))^{2})=\mathrm{Var}(A(i,j))= \Omega(i,j)(1-\Omega(i,j))\\
&\leq \Omega(i,j)=\theta(i)\theta(j)\Pi(i,:)\tilde{P}\Pi'(j,:)\leq \theta(i)\theta(j)\tilde{P}_{\mathrm{max}},
\end{align*}
we have
\begin{align*}
\sigma^{2}&=\|\sum_{1\leq i<j\leq n}\mathbb{E}(W^{2}(i,j))(e_{i}e_{j}'+e_{j}e_{i}')(e_{i}e_{j}'+e_{j}e_{i}')\|=\|\sum_{1\leq i<j\leq n}\mathbb{E}[W^{2}(i,j)(e_{i}e'_{i}+e_{j}e_{j}')]\|\\
	&\leq\underset{1\leq i\leq n}{\mathrm{max}}|\sum_{j=1}^{n}\mathbb{E}(W^{2}(i,j))|\leq \underset{1\leq i\leq n}{\mathrm{max}}\sum_{j=1}^{n}\theta(i)\theta(j)\tilde{P}_{\mathrm{max}}\leq \tilde{P}_{\mathrm{max}}\theta_{\mathrm{max}}\|\theta\|_{1}.
	\end{align*}
Set $t=\frac{\alpha+1+\sqrt{(\alpha+1)(\alpha+19)}}{3}\sqrt{\tilde{P}_{\mathrm{max}}\theta_{\mathrm{max}}\|\theta\|_{1}\mathrm{log}(n)}$ for any $\alpha>0$, combine Theorem \ref{Bern} with $\sigma^{2}\leq \tilde{P}_{\mathrm{max}}\theta_{\mathrm{max}}\|\theta\|_{1}, R=1, d=n$, we have
\begin{align*}
\mathbb{P}(\|W\|\geq t)&=\mathbb{P}(\|\sum_{1\leq i<j\leq n}W^{(i,j)}\|\geq t)\leq n\cdot \mathrm{exp}(\frac{-t^{2}/2}{\sigma^{2}+Rt/3})\\
&\leq n\cdot\mathrm{exp}(\frac{-(\alpha+1)\mathrm{log}(n)}{\frac{18}{(\sqrt{\alpha+1}+\sqrt{\alpha+19})^{2}}+\frac{2\sqrt{\alpha+1}}{\sqrt{\alpha+1}+\sqrt{\alpha+19}}\sqrt{\frac{\mathrm{log}(n)}{\tilde{P}_{\mathrm{max}}\theta_{\mathrm{max}}\|\theta\|_{1}}}})\leq \frac{1}{n^{\alpha}},
\end{align*}
where we have used Assumption (A2) such that $\frac{18}{(\sqrt{\alpha+1}+\sqrt{\alpha+19})^{2}}+\frac{2\sqrt{\alpha+1}}{\sqrt{\alpha+1}+\sqrt{\alpha+19}}\sqrt{\frac{\mathrm{log}(n)}{\tilde{P}_{\mathrm{max}}\theta_{\mathrm{max}}\|\theta\|_{1}}}\leq \frac{18}{(\sqrt{\alpha+1}+\sqrt{\alpha+19})^{2}}+\frac{2\sqrt{\alpha+1}}{\sqrt{\alpha+1}+\sqrt{\alpha+19}}=1$.
\end{proof}
\subsection{Proof of Lemma \ref{rowwiseerrorDCMM}}
\begin{proof}
The proof is similar as that of Lemma \ref{rowwiseerror}, so we omit most details. Since $\mathbb{E}(A(i,j)-\Omega(i,j))=0$, $\mathbb{E}[(A(i,j)-\Omega(i,j))^{2}]\leq \theta(i)\theta(j)\tilde{P}_{\mathrm{max}}\leq \theta^{2}_{\mathrm{max}}\tilde{P}_{\mathrm{max}}$,
$\frac{1}{\theta_{\mathrm{max}}\sqrt{\tilde{P}_{\mathrm{max}} n/(\mu \mathrm{log}(n))}}\leq O(1)$ holds by Assumption (A2) where $\mu=\frac{n\|U\|^{2}_{2\rightarrow\infty}}{K}$. By Theorem 4.2.1. \cite{chen2020spectral}, with high probability, we have
\begin{align*}
\|\hat{U}\mathrm{sgn}(H)-U\|_{2\rightarrow\infty}\leq C\frac{\theta_{\mathrm{max}}\sqrt{\tilde{P}_{\mathrm{max}}K}(\kappa(\Omega)\sqrt{\mu}+\sqrt{\mathrm{log}(n)})}{\sigma_{K}(\Omega)},
\end{align*}
provided that $c_{*}\sigma_{K}(\Omega)\geq \theta_{\mathrm{max}}\sqrt{\tilde{P}_{\mathrm{max}}n\mathrm{log}(n)}$ for some sufficiently small constant $c_{*}$. By Lemma H.1 of \cite{MaoSVM}, we know that  $\|U\|^{2}_{2\rightarrow\infty}\leq \frac{\theta^{2}_{\mathrm{max}}}{\lambda_{K}(\Pi'\Theta^{2}\Pi)}\leq \frac{\theta^{2}_{\mathrm{max}}}{\theta^{2}_{\mathrm{min}}\lambda_{K}(\Pi'\Pi)}$ under $DCMM_{n}(K,\tilde{P},\Pi,\Theta)$, which gives
\begin{align*}
\|\hat{U}\mathrm{sgn}(H)-U\|_{2\rightarrow\infty}\leq C\frac{\theta_{\mathrm{max}}\sqrt{\tilde{P}_{\mathrm{max}}K}(\frac{\theta_{\mathrm{max}}\kappa(\Omega)}{\theta_{\mathrm{min}}}\sqrt{\frac{n}{K\lambda_{K}(\Pi'\Pi)}}+\sqrt{\mathrm{log}(n)})}{\sigma_{K}(\Omega)},
\end{align*}
\begin{rem}
Similar as the proof of Lemma \ref{rowwiseerror},
by Theorem 4.2 of \cite{cape2019the}, when $\sigma_{K}(\Omega)\geq 4\|A-\Omega\|_{\infty}$, we have
\begin{align*}
\|\hat{U}\mathrm{sgn}(H)-U\|_{2\rightarrow\infty}\leq 14\frac{\|A-\Omega\|_{\infty}}{\sigma_{K}(\Omega)}\|U\|_{2\rightarrow\infty}\leq\frac{14\theta_{\mathrm{max}}\|A-\Omega\|_{\infty}}{\theta_{\mathrm{min}}\sigma_{K}(\Omega)\sqrt{\lambda_{K}(\Pi'\Pi)}}.
\end{align*}
Let $y=(y_{1},y_{2},\ldots, y_{n})'$ be any $n\times 1$ vector, by Bernstein inequality, for any $t\geq 0$ and $i\in[n]$, we have
\begin{align*}
\mathbb{P}(|\sum_{j=1}^{n}(A(i,j)-\Omega(i,j))y(j)|>t)\leq 2\mathrm{exp}(-\frac{t^{2}/2}{\sum_{j=1}^{n}\Omega(i,j)y^{2}(j)+\frac{t\|y\|_{\infty}}{3}}).
\end{align*}
By the proof of Lemma \ref{BoundAOmegaDCMM}, we have $\Omega(i,j)\leq\theta(i)\theta(j)\tilde{P}_{\mathrm{max}}$, which gives $\sum_{j=1}^{n}\Omega(i,j)\leq \tilde{P}_{\mathrm{max}}\theta_{\mathrm{max}}\|\theta\|_{1}$. Set $y(j)$ as $1$ or $-1$ such that $(A(i,j)-\Omega(i,j))y(j)=|A(i,j)-\Omega(i,j)|$, we have
\begin{align*}
\mathbb{P}(\|A-\Omega\|_{\infty}>t)\leq 2\mathrm{exp}(-\frac{t^{2}/2}{\tilde{P}_{\mathrm{max}}\theta_{\mathrm{max}}\|\theta\|_{1}+\frac{t}{3}}).
\end{align*}
Set $t=\frac{\alpha+1+\sqrt{(\alpha+1)(\alpha+19)}}{3}\sqrt{\tilde{P}_{\mathrm{max}}\theta_{\mathrm{max}}\|\theta\|_{1}\mathrm{log}(n)}$ for any $\alpha>0$, by Assumption (A2), we have
\begin{align*}
\mathbb{P}(\|A-\Omega\|_{\infty}>t)\leq 2\mathrm{exp}(-\frac{t^{2}/2}{\tilde{P}_{\mathrm{max}}\theta_{\mathrm{max}}\|\theta\|_{1}+\frac{t}{3}})\leq n^{-\alpha}.
\end{align*}
Hence, when $\sigma_{K}(\Omega)\geq C_{0}\sqrt{\tilde{P}_{\mathrm{max}}\theta_{\mathrm{max}}\|\theta\|_{1}\mathrm{log}(n)}$ where $C_{0}=4\frac{\alpha+1+\sqrt{(\alpha+1)(\alpha+19)}}{3}$, with probability at least $1-o(n^{-\alpha})$,
\begin{align*}
\|\hat{U}\mathrm{sgn}(H)-U\|_{2\rightarrow\infty}\leq C\frac{\theta_{\mathrm{max}}\sqrt{\tilde{P}_{\mathrm{max}}\theta_{\mathrm{max}}\|\theta\|_{1}\mathrm{log}(n)}}{\theta_{\mathrm{min}}\sigma_{K}(\Omega)\sqrt{\lambda_{K}(\Pi'\Pi)}}.
\end{align*}
Meanwhile, since $\sqrt{\tilde{P}_{\mathrm{max}}\theta_{\mathrm{max}}\|\theta\|_{1}\mathrm{log}(n)}\leq \theta_{\mathrm{max}}\sqrt{\tilde{P}_{\mathrm{max}}n\mathrm{log}(n)}$, for convenience, we let the lower bound requirement of $\sigma_{K}(\Omega)$ be $C\theta_{\mathrm{max}}\sqrt{\tilde{P}_{\mathrm{max}}n\mathrm{log}(n)}$.
\end{rem}
Similar as the proof of Lemma \ref{rowwiseerror}, we have
\begin{align*}	
&\|\hat{U}\hat{U}'-UU'\|_{2\rightarrow\infty}=\mathrm{max}_{i\in[n]}\|e'_{i}(UU'-\hat{U}\hat{U}')\|_{F}\leq2\|U-\hat{U}\mathrm{sgn}(H)\|_{2\rightarrow\infty}\\
&\leq C\frac{\theta_{\mathrm{max}}\sqrt{\tilde{P}_{\mathrm{max}}K}(\frac{\theta_{\mathrm{max}}\kappa(\Omega)}{\theta_{\mathrm{min}}}\sqrt{\frac{n}{K\lambda_{K}(\Pi'\Pi)}}+\sqrt{\mathrm{log}(n)})}{\sigma_{K}(\Omega)}\leq C\frac{\theta_{\mathrm{max}}\sqrt{\tilde{P}_{\mathrm{max}}K}(\frac{\theta_{\mathrm{max}}\kappa(\Omega)}{\theta_{\mathrm{min}}}\sqrt{\frac{n}{K\lambda_{K}(\Pi'\Pi)}}+\sqrt{\mathrm{log}(n)})}{\theta^{2}_{\mathrm{min}}\sigma_{K}(\tilde{P})\lambda_{K}(\Pi'\Pi)},
\end{align*}
where the last inequality holds since $\sigma_{K}(\Omega)=\sigma_{K}(\Theta \Pi \tilde{P}\Pi'\Theta)\geq \theta^{2}_{\mathrm{min}}\sigma_{K}(\Pi P\Pi')=\theta^{2}_{\mathrm{min}}\sigma_{K}(\Pi'\Pi \tilde{P})\geq \theta^{2}_{\mathrm{min}}\sigma_{K}(\tilde{P})\sigma_{K}(\Pi'\Pi)=\theta^{2}_{\mathrm{min}}\sigma_{K}(\tilde{P})\lambda_{K}(\Pi'\Pi)$. And this bound is $C\frac{\theta_{\mathrm{max}}\sqrt{\tilde{P}_{\mathrm{max}}\theta_{\mathrm{max}}\|\theta\|_{1}\mathrm{log}(n)}}{\theta^{3}_{\mathrm{min}}\sigma_{K}(\tilde{P})\lambda^{1.5}_{K}(\Pi'\Pi)}$ if we use Theorem 4.2 of \cite{cape2019the}.
\end{proof}
\subsection{Proof of Theorem \ref{MainDCMM}}
\begin{proof}
For $i\in[n]$, recall that $Z_{*}=Y_{*}J_{*}\equiv N^{-1}_{U}N_{M}\Pi, \hat{Z}_{*}=\hat{Y}_{*}\hat{J}_{*}, \Pi(i,:)=\frac{Z(i,:)}{\|Z(i,:)\|_{1}}$ and $\hat{\Pi}_{*}(i,:)=\frac{\hat{Z}_{*}(i,:)}{\|\hat{Z}_{*}(i,:)\|_{1}}$, where $N_{M}$ and $M$ are defined in the proof of Lemma \ref{IC} such that $U=\Theta M\equiv\Theta\Pi B_{*}$ and $N_{M}(i,i)=\frac{1}{\|M(i,:)\|_{F}}$, we have
\begin{align*}	\|e'_{i}(\hat{\Pi}_{*}-\Pi\mathcal{P}_{*})\|_{1}\leq\frac{2\|e'_{i}(\hat{Z}_{*}-Z_{*}\mathcal{P}_{*})\|_{1}}{\|e'_{i}Z_{*}\|_{1}}\leq \frac{2\sqrt{K}\|e'_{i}(\hat{Z}_{*}-Z_{*}\mathcal{P}_{*})\|_{F}}{\|e'_{i}Z_{*}\|_{1}}.
\end{align*}
Now, we provide a lower bound of $\|e'_{i}Z_{*}\|_{1}$ as below
\begin{align*}
\|e'_{i}Z_{*}\|_{1}&=\|e'_{i}N^{-1}_{U}N_{M}\Pi\|_{1}=\|N_{U}^{-1}(i,i)e'_{i}N_{M}\Pi\|_{1}=N^{-1}_{U}(i,i)\|N_{M}(i,i)e'_{i}\Pi\|_{1}=\frac{N_{M}(i,i)}{N_{U}(i,i)}\\
&=\|U(i,:)\|_{F}N_{M}(i,i)=\|U(i,:)\|_{F}\frac{1}{\|M(i,:)\|_{F}}=\|U(i,:)\|_{F}\frac{1}{\|e'_{i}M\|_{F}}=\|U(i,:)\|_{F}\frac{1}{\|e'_{i}\Theta^{-1}U\|_{F}}\\
&=\|U(i,:)\|_{F}\frac{1}{\|\Theta^{-1}(i,i)e'_{i}U\|_{F}}=\theta(i)\geq \theta_{\mathrm{min}}.
\end{align*}
Therefore, by Lemma \ref{boundZDCMM}, we have
\begin{align*}	\|e'_{i}(\hat{\Pi}_{*}-\Pi\mathcal{P}_{*})\|_{1}&\leq \frac{2\sqrt{K}\|e'_{i}(\hat{Z}_{*}-Z_{*}\mathcal{P}_{*})\|_{F}}{\|e'_{i}Z_{*}\|_{1}}\leq\frac{2\sqrt{K}\|e'_{i}(\hat{Z}_{*}-Z_{*}\mathcal{P}_{*})\|_{F}}{\theta_{\mathrm{min}}}\\
&=O(\frac{\theta^{15}_{\mathrm{max}}K^{5}\varpi\kappa^{4.5}(\Pi'\Pi)\lambda^{1.5}_{1}(\Pi'\Pi)}{\theta^{15}_{\mathrm{min}}\pi_{\mathrm{min}}}).
\end{align*}
\end{proof}
\subsection{Proof of Corollary \ref{AddConditionsDCMM}}
\begin{proof}
Under  conditions of Corollary \ref{AddConditionsDCMM}, we have
\begin{align*}
\mathrm{max}_{i\in[n]}\|e'_{i}(\hat{\Pi}_{*}-\Pi\mathcal{P}_{*})\|_{1}=O(\frac{\theta^{15}_{\mathrm{max}}K^{5}\varpi\kappa^{4.5}(\Pi'\Pi)\lambda^{1.5}_{1}(\Pi'\Pi)}{\theta^{15}_{\mathrm{min}}\pi_{\mathrm{min}}})=O(\frac{\theta^{15}_{\mathrm{max}}\varpi\sqrt{n}}{\theta^{15}_{\mathrm{min}}}).
\end{align*}
Under  conditions of Corollary \ref{AddConditionsDCMM}, Lemma \ref{rowwiseerrorDCMM} gives $\varpi=O(\frac{\theta_{\mathrm{max}}\sqrt{\theta_{\mathrm{max}}\|\theta\|_{1}\mathrm{log}(n)}}{\theta^{3}_{\mathrm{min}}\sigma_{K}(\tilde{P})n^{1.5}})$,
which gives that
\begin{align*}
\mathrm{max}_{i\in[n]}\|e'_{i}(\hat{\Pi}_{*}-\Pi\mathcal{P}_{*})\|_{1}=O(\frac{\theta^{15}_{\mathrm{max}}\varpi\sqrt{n}}{\theta^{15}_{\mathrm{min}}})=O(\frac{\theta^{16}_{\mathrm{max}}\sqrt{\theta_{\mathrm{max}}\|\theta\|_{1}\mathrm{log}(n)}}{\theta^{18}_{\mathrm{min}}\sigma_{K}(\tilde{P})n}).
\end{align*}
By basic algebra, this corollary follows.
\end{proof}
\subsection{Basic properties of $\Omega$ under DCMM}
\begin{lem}\label{boundUeta}
Under $DCMM_{n}(K,\tilde{P},\Pi,\Theta)$, we have
\begin{align*}
&\|U(i,:)\|_{F}\geq \frac{\theta_{\mathrm{min}}}{\theta_{\mathrm{max}}\sqrt{K\lambda_{1}(\Pi'\Pi)}}\mathrm{~for~}i\in[n], \mathrm{and~}\eta\geq\frac{\theta^{4}_{\mathrm{min}}\pi_{\mathrm{min}}}{\theta^{4}_{\mathrm{max}}K\lambda_{1}(\Pi'\Pi)},
\end{align*}
where $\eta=\mathrm{min}_{k\in[K]}((U_{*}(\mathcal{I},:)U'_{*}(\mathcal{I},:))^{-1}\mathbf{1})(k)$.
\end{lem}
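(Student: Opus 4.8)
The plan is to establish the two inequalities in turn, proving the lower bound on $\|U(i,:)\|_{F}$ first and then feeding it into the bound on $\eta$.

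For the first inequality I would start from the factorization $U=\Theta\Pi B$ with $B=\Theta^{-1}(\mathcal{I},\mathcal{I})U(\mathcal{I},:)$ produced in the proof of Lemma~\ref{IC}. Since $U$ has full column rank $K$ and $\Theta$, $\Pi$ are rank $K$, the matrix $B\in\mathbb{R}^{K\times K}$ is invertible. Writing $U(i,:)=\theta(i)\Pi(i,:)B$ and using $\|\Pi(i,:)\|_{2}\geq\|\Pi(i,:)\|_{1}/\sqrt{K}=1/\sqrt{K}$ gives $\|U(i,:)\|_{F}\geq\theta_{\mathrm{min}}\sigma_{K}(B)/\sqrt{K}$, so it remains to bound $\sigma_{K}(B)$ from below. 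For that I would use $I_{K}=U'U=B'\Pi'\Theta^{2}\Pi B$, which gives $1=\lambda_{K}(B'\Pi'\Theta^{2}\Pi B)\leq\lambda_{1}(\Pi'\Theta^{2}\Pi)\sigma_{K}^{2}(B)\leq\theta_{\mathrm{max}}^{2}\lambda_{1}(\Pi'\Pi)\sigma_{K}^{2}(B)$, hence $\sigma_{K}(B)\geq(\theta_{\mathrm{max}}\sqrt{\lambda_{1}(\Pi'\Pi)})^{-1}$; combining yields the claimed lower bound on $\|U(i,:)\|_{F}$.

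For the bound on $\eta$, the conceptual core is the identity $Y_{*}'Y_{*}=(U_{*}(\mathcal{I},:)U_{*}'(\mathcal{I},:))^{-1}=:G^{-1}$, where $Y_{*}=UU_{*}^{-1}(\mathcal{I},:)$ is the matrix appearing in the ideal SVM-cone-DCMMSB algorithm; this is immediate from $U'U=I_{K}$, and it gives $G^{-1}\mathbf{1}=Y_{*}'(Y_{*}\mathbf{1})$. Next, since $Y=U_{*}U_{*}^{-1}(\mathcal{I},:)=N_{U}Y_{*}$, we have $Y_{*}(i,:)=\|U(i,:)\|_{F}Y(i,:)$, which is entrywise nonnegative by Lemma~\ref{IC}; in particular $\eta\geq0$. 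Moreover, writing $\mathcal{I}_{k}$ for the $k$-th index of $\mathcal{I}$ (so $\Pi(\mathcal{I}_{k},:)=e_{k}'$), the relation $U_{*}=YU_{*}(\mathcal{I},:)$, the triangle inequality, and $\|U_{*}(\mathcal{I}_{k},:)\|_{2}=1$ give $1=\|U_{*}(i,:)\|_{2}\leq\sum_{k}Y(i,k)=\|Y(i,:)\|_{1}$, so each row sum of $Y_{*}$ satisfies $(Y_{*}\mathbf{1})_{i}=\|U(i,:)\|_{F}\|Y(i,:)\|_{1}\geq\|U(i,:)\|_{F}$.

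Putting these together, $(G^{-1}\mathbf{1})(k)=\sum_{i}Y_{*}(i,k)(Y_{*}\mathbf{1})_{i}\geq\sum_{i}\|U(i,:)\|_{F}^{2}Y(i,k)$. I would then substitute the explicit form $Y(i,k)=\frac{\theta(i)}{\|U(i,:)\|_{F}}\Pi(i,k)\frac{\|U(\mathcal{I}_{k},:)\|_{F}}{\theta(\mathcal{I}_{k})}$ from Lemma~\ref{IC}, so that one power of $\|U(i,:)\|_{F}$ cancels and the sum becomes $\frac{\|U(\mathcal{I}_{k},:)\|_{F}}{\theta(\mathcal{I}_{k})}\sum_{i}\|U(i,:)\|_{F}\theta(i)\Pi(i,k)$, and then bound each factor by the coarsest available estimate: $\|U(i,:)\|_{F}$ and $\|U(\mathcal{I}_{k},:)\|_{F}$ from below by the first part, $\theta(i)\geq\theta_{\mathrm{min}}$, $\theta(\mathcal{I}_{k})\leq\theta_{\mathrm{max}}$, and $\sum_{i}\Pi(i,k)\geq\pi_{\mathrm{min}}$. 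This produces $(G^{-1}\mathbf{1})(k)\geq\theta_{\mathrm{min}}^{3}\pi_{\mathrm{min}}/(\theta_{\mathrm{max}}^{3}K\lambda_{1}(\Pi'\Pi))$ for every $k$, which already exceeds the stated bound since $\theta_{\mathrm{min}}\leq\theta_{\mathrm{max}}$. The step I expect to be the main obstacle is not any single inequality but the bookkeeping here: one must arrange the substitution of the $Y(i,k)$ formula so that the surplus powers of $\|U(i,:)\|_{F}$ cancel cleanly, so that $\pi_{\mathrm{min}}$ (rather than $\|\theta\|_{1}$ or $n$) is what survives, and then check that the exponents of $\theta_{\mathrm{max}}/\theta_{\mathrm{min}}$ match the claim; the structural identities above are short once identified.
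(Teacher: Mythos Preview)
Your proof is correct. The argument for the first inequality is essentially the same as the paper's---both rely on $U=\Theta\Pi B$ with $B=\Theta^{-1}(\mathcal{I},\mathcal{I})U(\mathcal{I},:)$ and the identity $BB'=(\Pi'\Theta^{2}\Pi)^{-1}$---though your use of $\|\Pi(i,:)B\|_{F}\ge\|\Pi(i,:)\|_{F}\,\sigma_{K}(B)$ is the correct justification; the paper writes the intermediate step as $\|\Pi(i,:)B\|_{F}\ge\|\Pi(i,:)\|_{F}\,\min_{k}\|e_{k}'B\|_{F}$, which is not a valid inequality in general even though its numerical output here coincides with the singular-value bound.

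For the bound on $\eta$ your route is genuinely different. The paper computes the inverse explicitly as
\[
(U_{*}(\mathcal{I},:)U_{*}'(\mathcal{I},:))^{-1}=N_{U}^{-1}(\mathcal{I},\mathcal{I})\,\Theta^{-1}(\mathcal{I},\mathcal{I})\,\Pi'\Theta^{2}\Pi\,\Theta^{-1}(\mathcal{I},\mathcal{I})\,N_{U}^{-1}(\mathcal{I},\mathcal{I}),
\]
bounds each diagonal factor entrywise (using the first part to control $N_{U}^{-1}(\mathcal{I},\mathcal{I})$), and then applies the resulting entrywise matrix inequality to $\mathbf{1}$, using $\Pi'\Pi\mathbf{1}=\Pi'\mathbf{1}$. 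You instead recognize $G^{-1}=Y_{*}'Y_{*}$, use nonnegativity of $Y$ and the row-sum bound $\|Y(i,:)\|_{1}\ge1$ to minorize $(Y_{*}\mathbf{1})_{i}$ by $\|U(i,:)\|_{F}$, and then substitute the explicit form of $Y(i,k)$. The paper's approach is shorter and requires only one matrix identity followed by coarse entrywise bounds; your approach is more structural (it makes the role of $Y_{*}$ and the cone geometry explicit) and, as you observed, actually yields the slightly sharper exponent $\theta_{\mathrm{min}}^{3}/\theta_{\mathrm{max}}^{3}$ in place of $\theta_{\mathrm{min}}^{4}/\theta_{\mathrm{max}}^{4}$, which still implies the stated lemma.
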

\begin{proof}
	Since $I=U'U=U'(\mathcal{I},:)\Theta^{-1}(\mathcal{I},\mathcal{I})\Pi'\Theta^{2}\Pi\Theta^{-1}(\mathcal{I},\mathcal{I})U(\mathcal{I},:)$ by the proof of Lemma \ref{IC}, we have $((\Theta^{-1}(\mathcal{I},\mathcal{I})U(\mathcal{I},:))((\Theta^{-1}(\mathcal{I},\mathcal{I})U(\mathcal{I},:))')^{-1}=\Pi'\Theta^{2}\Pi$,
which gives that
\begin{align*}	\mathrm{min}_{k}\|e'_{k}(\Theta^{-1}(\mathcal{I},\mathcal{I})U(\mathcal{I},:))\|_{F}^{2}&=\mathrm{min}_{k}e'_{k}(\Theta^{-1}(\mathcal{I},\mathcal{I})U(\mathcal{I},:))(\Theta^{-1}(\mathcal{I},\mathcal{I})U(\mathcal{I},:))'e_{k}\\
&\geq\mathrm{min}_{\|x\|=1}x'(\Theta^{-1}(\mathcal{I},\mathcal{I})U(\mathcal{I},:))(\Theta^{-1}(\mathcal{I},\mathcal{I})U(\mathcal{I},:))'x\\ &=\lambda_{K}((\Theta^{-1}(\mathcal{I},\mathcal{I})U(\mathcal{I},:))(\Theta^{-1}(\mathcal{I},\mathcal{I})U(\mathcal{I},:))')=\frac{1}{\lambda_{1}(\Pi'\Theta^{2}\Pi)},
\end{align*}
where $x$ is a $K\times 1$ vector whose $l_{2}$ norm is 1. Then, for $i\in[n]$, we have
\begin{align*} \|U(i,:)\|_{F}&=\|\theta_{i}\Pi(i,:)\Theta^{-1}(\mathcal{I},\mathcal{I})U(\mathcal{I},:)\|_{F}=\theta_{i}\|\Pi(i,:)\Theta^{-1}(\mathcal{I},\mathcal{I})U(\mathcal{I},:)\|_{F}\\
&\geq\theta_{i}\mathrm{min}_{i}\|\Pi(i,:)\|_{F}\mathrm{min}_{i}\|e'_{i}(\Theta^{-1}(\mathcal{I},\mathcal{I})U(\mathcal{I},:))\|_{F}\geq\theta_{i}\mathrm{min}_{i}\|e'_{i}(\Theta^{-1}(\mathcal{I},\mathcal{I})U(\mathcal{I},:))\|_{F}/\sqrt{K}\\ &\geq\frac{\theta_{i}}{\sqrt{K\lambda_{1}(\Pi'\Theta^{2}\Pi)}}\geq\frac{\theta_{\mathrm{min}}}{\theta_{\mathrm{max}}\sqrt{K\lambda_{1}(\Pi'\Pi)}},
\end{align*}
where we use the fact that $\mathrm{min}_{i}\|\Pi(i,:)\|_{F}\geq \frac{1}{\sqrt{K}}$ since $\sum_{k=1}^{K}\Pi(i,k)=1$ and all entries of $\Pi$ are nonnegative.

Since $U_{*}=N_{U}U$, we have
\begin{align*}
(U_{*}(\mathcal{I},:)U'_{*}(\mathcal{I},:))^{-1}&=N_{U}^{-1}(\mathcal{I},\mathcal{I})\Theta^{-1}(\mathcal{I},\mathcal{I})\Pi'\Theta^{2}\Pi\Theta^{-1}(\mathcal{I},\mathcal{I})N_{U}^{-1}(\mathcal{I},\mathcal{I})\\
&\geq \frac{\theta^{2}_{\mathrm{min}}}{\theta^{2}_{\mathrm{max}}N^{2}_{U,\mathrm{max}}}\Pi'\Pi\geq\frac{\theta^{4}_{\mathrm{min}}}{\theta^{4}_{\mathrm{max}}K\lambda_{1}(\Pi'\Pi)}\Pi'\Pi,
\end{align*}
where we set $N_{U,\mathrm{max}}=\mathrm{max}_{i\in[n]}N_{U}(i,i)$ and we have used the facts that $N_{U}, \Theta$ are diagonal matrices, and $N_{U,\mathrm{max}}\leq \frac{\theta_{\mathrm{max}}\sqrt{K\lambda_{1}(\Pi'\Pi)}}{\theta_{\mathrm{min}}}$. Then we have
\begin{align*}
&\eta=\mathrm{min}_{k\in[K]}((U_{*}(\mathcal{I},:)U'_{*}(\mathcal{I},:))^{-1}\mathbf{1})(k)\geq \frac{\theta^{4}_{\mathrm{min}}}{\theta^{4}_{\mathrm{max}}K\lambda_{1}(\Pi'\Pi)}\mathrm{min}_{k\in[K]}e'_{k}\Pi'\Pi\mathbf{1}\\
&=\frac{\theta^{4}_{\mathrm{min}}}{\theta^{4}_{\mathrm{max}}K\lambda_{1}(\Pi'\Pi)}\mathrm{min}_{k\in[K]}e'_{k}\Pi'\mathbf{1}=\frac{\theta^{4}_{\mathrm{min}}\pi_{\mathrm{min}}}{\theta^{4}_{\mathrm{max}}K\lambda_{1}(\Pi'\Pi)}.
\end{align*}
\end{proof}
\subsection{Bounds between Ideal SVM-cone-DCMMSB and SVM-cone-DCMMSB}
Next lemma focus on the 2nd step of SVM-cone-DCMMSB and is the corner stone to characterize the behaviors of SVM-cone-DCMMSB.
\begin{lem}\label{boundCDCMM}
	Under $DCMM_{n}(K,\tilde{P},\Pi,\Theta)$, when conditions of Lemma \ref{rowwiseerrorDCMM} hold, there exists a permutation matrix $\mathcal{P}_{*}\in\mathbb{R}^{K\times K}$ such that with probability at least $1-o(n^{-\alpha})$, we have
\begin{align*} \mathrm{max}_{1\leq k\leq K}\|e'_{k}(\hat{U}_{*,2}(\mathcal{\hat{I}}_{*},:)-\mathcal{P}'_{*}U_{*,2}(\mathcal{I},:))\|_{F}=O(\frac{K^{3}\theta^{11}_{\mathrm{max}}\varpi\kappa^{3}(\Pi'\Pi)\lambda^{1.5}_{1}(\Pi'\Pi)}{\theta^{11}_{\mathrm{min}}\pi_{\mathrm{min}}}),
\end{align*}
where $U_{*,2}=U_{*}U', \hat{U}_{*,2}=\hat{U}_{*}\hat{U}'$, i.e., $U_{*,2}, \hat{U}_{*,2}$ are the row-normalized versions of $UU'$ and $\hat{U}\hat{U}'$, respectively.
\end{lem}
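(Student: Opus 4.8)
\emph{Proof plan.} The plan is to reduce the statement to the perturbation analysis of the SVM-cone algorithm on a near-ideal cone, feeding in the row-wise eigenspace bound $\varpi$ from Lemma~\ref{rowwiseerrorDCMM} and the geometric quantities from Lemma~\ref{boundUeta}. Two elementary observations drive the reduction. First, since $U$ and $\hat{U}$ have orthonormal columns, $\|e'_{i}(UU')\|_{F}=\|U(i,:)\|_{F}$ and $\|e'_{i}(\hat{U}\hat{U}')\|_{F}=\|\hat{U}(i,:)\|_{F}$, so $U_{*,2}$ and $\hat{U}_{*,2}$ are exactly the row-normalizations of $UU'$ and $\hat{U}\hat{U}'$, and the difference $\hat{U}\hat{U}'-UU'$ needs no orthogonal alignment, so $\|\hat{U}\hat{U}'-UU'\|_{2\rightarrow\infty}=\varpi$ directly. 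Second, $U_{*,2}=U_{*}U'=YU_{*}(\mathcal{I},:)U'=YU_{*,2}(\mathcal{I},:)$, so $U_{*,2}$ inherits the Ideal Cone structure of Lemma~\ref{IC} with the same $Y$ and corner matrix $U_{*,2}(\mathcal{I},:)$; moreover $\hat{U}_{*}(i,:)\hat{U}_{*}(j,:)'=\hat{U}_{*,2}(i,:)\hat{U}_{*,2}(j,:)'$ (and likewise for the population quantities), so the one-class SVM and $K$-means steps of Algorithm~\ref{alg:SVMcone} applied to the rows of $\hat{U}_{*}$ return the same index set $\hat{\mathcal{I}}_{*}$ as they would on $\hat{U}_{*,2}$. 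Hence it suffices to analyze SVM-cone as if it were run on $\hat{U}_{*,2}$, viewed as a perturbation of the ideal cone $U_{*,2}=YU_{*,2}(\mathcal{I},:)$.

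\emph{Step 1 (input perturbation).} I would bound $\epsilon_{0}:=\|\hat{U}_{*,2}-U_{*,2}\|_{2\rightarrow\infty}$ using the elementary fact that row-normalizing two matrices that are close in $\|\cdot\|_{2\rightarrow\infty}$ and whose rows are bounded away from zero keeps them close, together with the lower bound $\|U(i,:)\|_{F}\geq \theta_{\mathrm{min}}/(\theta_{\mathrm{max}}\sqrt{K\lambda_{1}(\Pi'\Pi)})$ from Lemma~\ref{boundUeta}. This yields
\begin{align*}
\epsilon_{0}\ \leq\ \frac{C\theta_{\mathrm{max}}\sqrt{K\lambda_{1}(\Pi'\Pi)}}{\theta_{\mathrm{min}}}\,\varpi,
\end{align*}
valid once $\varpi$ is small relative to that lower bound, which holds under the hypotheses of Lemma~\ref{rowwiseerrorDCMM}.

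\emph{Steps 2--3 (SVM-cone bound and substitution).} Since $U_{*,2}=YU_{*,2}(\mathcal{I},:)$ is an ideal cone with unit-norm corner rows, I would invoke the SVM-cone perturbation bound — the analogue of Lemma~F.1 of \cite{MaoSVM} applied to $(\hat{U}_{*,2},U_{*,2})$, whose proof combines the stability of the one-class-SVM hyperplane with a $K$-means corner-recovery estimate of the Theorem~\ref{gillis2015siamSP} type — to obtain a permutation matrix $\mathcal{P}_{*}$ with
\begin{align*}
\mathrm{max}_{1\leq k\leq K}\|e'_{k}(\hat{U}_{*,2}(\hat{\mathcal{I}}_{*},:)-\mathcal{P}'_{*}U_{*,2}(\mathcal{I},:))\|_{F}\ =\ O(\mathrm{cond}\cdot\epsilon_{0}),
\end{align*}
where $\mathrm{cond}$ is a product of powers of $\sigma_{K}(U_{*,2}(\mathcal{I},:))^{-1}=\sigma_{K}(U_{*}(\mathcal{I},:))^{-1}$, of $\kappa(U_{*}(\mathcal{I},:))$, of $1/\eta$ with $\eta=\mathrm{min}_{k\in[K]}((U_{*,2}(\mathcal{I},:)U'_{*,2}(\mathcal{I},:))^{-1}\mathbf{1})(k)$, and numerical powers of $K$ coming from the $K$-means step; the precondition for this analysis (the analogue of $\epsilon=O(\sigma_{\mathrm{min}}/(\sqrt{K}\kappa^{2}))$ in Theorem~\ref{gillis2015siamSP}) is verified from the hypotheses of Lemma~\ref{rowwiseerrorDCMM}. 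I would then substitute $\eta\geq \theta^{4}_{\mathrm{min}}\pi_{\mathrm{min}}/(\theta^{4}_{\mathrm{max}}K\lambda_{1}(\Pi'\Pi))$ from Lemma~\ref{boundUeta}, the lower bound on $\sigma_{K}(U_{*}(\mathcal{I},:))$ obtained from $(\Theta^{-1}(\mathcal{I},\mathcal{I})U(\mathcal{I},:))(\Theta^{-1}(\mathcal{I},\mathcal{I})U(\mathcal{I},:))'=(\Pi'\Theta^{2}\Pi)^{-1}$ together with the same row-norm estimate bounding the entries of $N_{U}$ at the indices $\mathcal{I}$, and the bound on $\epsilon_{0}$ from Step~1; replacing each $\lambda_{K}(\Pi'\Pi)$ by $\lambda_{1}(\Pi'\Pi)/\kappa(\Pi'\Pi)$ and collecting the accumulated $\theta$-ratios, the numerical $K$-powers and the $\kappa,\lambda_{1}$ factors produces the stated rate $O(K^{3}\theta^{11}_{\mathrm{max}}\varpi\kappa^{3}(\Pi'\Pi)\lambda^{1.5}_{1}(\Pi'\Pi)/(\theta^{11}_{\mathrm{min}}\pi_{\mathrm{min}}))$.

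\emph{Main obstacle.} The delicate part is Steps 2--3: one must carefully adapt the one-class-SVM and $K$-means corner-recovery argument of \cite{MaoSVM} from $(\hat{U}_{*},U_{*})$ to $(\hat{U}_{*,2},U_{*,2})$, verify the ``well-separated cone'' precondition under the hypotheses of Lemma~\ref{rowwiseerrorDCMM}, and, above all, track \emph{every} model-parameter factor through it so that the final exponents ($K^{3}$, $\theta^{11}_{\mathrm{max}}/\theta^{11}_{\mathrm{min}}$, $\kappa^{3}(\Pi'\Pi)$, $\lambda^{1.5}_{1}(\Pi'\Pi)$, $1/\pi_{\mathrm{min}}$) come out exactly as claimed. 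Everything else is bookkeeping built on Lemmas~\ref{rowwiseerrorDCMM} and \ref{boundUeta}.
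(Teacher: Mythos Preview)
Your plan is essentially the paper's proof. The paper likewise (i) invokes Lemma~G.1 of \cite{MaoSVM} for the equivalence between running SVM-cone on $\hat{U}_{*}$ and on $\hat{U}_{*,2}$, (ii) applies Lemma~F.1 of \cite{MaoSVM} in the explicit form $\mathrm{max}_{k}\|e'_{k}(\hat{U}_{*,2}(\hat{\mathcal{I}}_{*},:)-\mathcal{P}'_{*}U_{*,2}(\mathcal{I},:))\|_{F}=O\big(\sqrt{K}\,\zeta\,\epsilon_{*}\,\lambda_{K}^{-1.5}(U_{*}(\mathcal{I},:)U'_{*}(\mathcal{I},:))\big)$ with $\zeta=O\big(K/(\eta\,\lambda_{K}^{1.5}(U_{*}(\mathcal{I},:)U'_{*}(\mathcal{I},:)))\big)$, (iii) bounds $\epsilon_{*}\leq 2\varpi/\min_{i}\|U(i,:)\|_{F}$ exactly as in your Step~1, and (iv) substitutes $\lambda_{K}(U_{*}(\mathcal{I},:)U'_{*}(\mathcal{I},:))\geq \theta^{2}_{\mathrm{min}}\kappa^{-1}(\Pi'\Pi)/\theta^{2}_{\mathrm{max}}$ from Lemma~H.2 of \cite{MaoSVM} and $\eta\geq \theta^{4}_{\mathrm{min}}\pi_{\mathrm{min}}/(\theta^{4}_{\mathrm{max}}K\lambda_{1}(\Pi'\Pi))$ from Lemma~\ref{boundUeta}; collecting powers then gives the stated rate. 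Your only gap is that you leave the ``$\mathrm{cond}$'' factor implicit, whereas the paper simply quotes the precise Lemma~F.1 constants rather than re-deriving the SVM/$K$-means stability argument.
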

\begin{proof}
Lemma G.1. of \cite{MaoSVM} says that using $\hat{U}_{*,2}$ as input of the SVM-cone algorithm returns same result as using $\hat{U}_{*}$ as input. By Lemma F.1 of \cite{MaoSVM}, there exists a permutation matrix $\mathcal{P}_{*}\in\mathbb{R}^{K\times K}$ such that
\begin{align*}		
\mathrm{max}_{k\in[K]}\|e'_{k}(\hat{U}_{*,2}(\mathcal{\hat{I}}_{*},:)-\mathcal{P}'_{*}U_{*,2}(\mathcal{I},:))\|_{F}= O(\frac{\sqrt{K}\zeta\epsilon_{*}}{\lambda^{1.5}_{K}(U_{*,2}(\mathcal{I},:))U'_{*,2}(\mathcal{I},:)}),
\end{align*}
where $\zeta\leq\frac{4K}{\eta\lambda^{1.5}_{K}(U_{*,2}(\mathcal{I},:)U'_{*,2}(\mathcal{I},:))}=O(\frac{K}{\eta\lambda^{1.5}_{K}(U_{*}(\mathcal{I},:)U'_{*}(\mathcal{I},:))}), \epsilon_{*}=\mathrm{max}_{i\in[n]}\|\hat{U}_{*,2}(i,:)-U_{*,2}(i,:)\|_{F}$ and $\eta=\mathrm{min}_{1\leq k\leq K}((U_{*}(\mathcal{I},:)U'_{*}(\mathcal{I},:))^{-1}\mathbf{1})(k)$. Next we give upper bound of $\epsilon_{*}$.
\begin{align*}		&\|\hat{U}_{*,2}(i,:)-U_{*,2}(i,:)\|_{F}=\|\frac{\hat{U}_{2}(i,:)\|U_{2}(i,:)\|_{F}-U_{2}(i,:)\|\hat{U}_{2}(i,:)\|_{F}}{\|\hat{U}_{2}(i,:)\|_{F}\|U_{2}(i,:)\|_{F}}\|_{F}\leq\frac{2\|\hat{U}_{2}(i,:)-U_{2}(i,:)\|_{F}}{\|U_{2}(i,:)\|_{F}}\\
&\leq \frac{2\|\hat{U}_{2}-U_{2}\|_{2\rightarrow\infty}}{\|U_{2}(i,:)\|_{F}}\leq\frac{2\varpi}{\|U_{2}(i,:)\|_{F}}=\frac{2\varpi}{\|(UU')(i,:)\|_{F}}=\frac{2\varpi}{\|U(i,:)U'\|_{F}}=\frac{2\varpi}{\|U(i,:)\|_{F}}\leq \frac{2\theta_{\mathrm{max}}\varpi\sqrt{K\lambda_{1}(\Pi'\Pi)}}{\theta_{\mathrm{min}}},
\end{align*}
where the last inequality holds by Lemma \ref{boundUeta}.
Then, we have $\epsilon_{*}=O(\frac{\theta_{\mathrm{max}}\varpi\sqrt{K\lambda_{1}(\Pi'\Pi)}}{\theta_{\mathrm{min}}})$. By Lemma H.2. of \cite{MaoSVM}, $\lambda_{K}(U_{*}(\mathcal{I},:)U'_{*}(\mathcal{I},:))\geq \frac{\theta^{2}_{\mathrm{min}}\kappa^{-1}(\Pi'\Pi)}{\theta^{2}_{\mathrm{max}}}$.  By the lower bound of $\eta$ given in Lemma \ref{boundUeta}, we have
\begin{align*} \mathrm{max}_{k\in[K]}\|e'_{k}(\hat{U}_{*,2}(\mathcal{\hat{I}}_{*},:)-\mathcal{P}'_{*}U_{*,2}(\mathcal{I},:))\|_{F}=O(\frac{K^{3}\theta^{11}_{\mathrm{max}}\varpi\kappa^{3}(\Pi'\Pi)\lambda^{1.5}_{1}(\Pi'\Pi)}{\theta^{11}_{\mathrm{min}}\pi_{\mathrm{min}}}).
\end{align*}
\end{proof}
Next lemma focuses on the 3rd step of SVM-cone-DCMMSB and bounds $\mathrm{max}_{i\in[n]}\|e'_{i}(\hat{Z}_{*}-Z_{*}\mathcal{P}_{*})\|_{F}$.
\begin{lem}\label{boundZDCMM}
Under $DCMM_{n}(K,\tilde{P},\Pi,\Theta)$, when conditions of Lemma \ref{rowwiseerrorDCMM} hold,, with probability at least $1-o(n^{-\alpha})$, we have
\begin{align*}
\mathrm{max}_{i\in[n]}\|e'_{i}(\hat{Z}_{*}-Z_{*}\mathcal{P}_{*})\|_{F}=O(\frac{\theta^{15}_{\mathrm{max}}K^{4.5}\varpi\kappa^{4.5}(\Pi'\Pi)\lambda^{1.5}_{1}(\Pi'\Pi)}{\theta^{14}_{\mathrm{min}}\pi_{\mathrm{min}}}).
\end{align*}
\end{lem}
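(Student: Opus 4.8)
The plan is to reduce the bound to the corner-matrix estimate of Lemma \ref{boundCDCMM}, the row-wise eigenspace error $\varpi$, and the eigenvalue perturbation implied by Lemma \ref{BoundAOmegaDCMM}. Working throughout with the rotation-invariant matrices $U_{*,2}=U_{*}U'$ and $\hat{U}_{*,2}=\hat{U}_{*}\hat{U}'$ from Lemma \ref{boundCDCMM} removes the orthogonal ambiguity between $\hat{U}$ and $U$, so that the only error sources become: (i) $\|\hat{U}\hat{U}'-UU'\|_{2\rightarrow\infty}=\varpi$; (ii) the corner deviation $\mathrm{max}_{k}\|e'_{k}(\hat{U}_{*,2}(\hat{\mathcal{I}}_{*},:)-\mathcal{P}'_{*}U_{*,2}(\mathcal{I},:))\|_{F}$ controlled in Lemma \ref{boundCDCMM}; and (iii) the diagonal perturbation of $\hat{J}_{*}$ against $\mathcal{P}'_{*}J_{*}\mathcal{P}_{*}$.

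First I would write, for each $i\in[n]$, $e'_{i}\hat{Z}_{*}=\hat{U}(i,:)\hat{U}^{-1}_{*}(\hat{\mathcal{I}}_{*},:)\hat{J}_{*}=(\hat{U}\hat{U}')(i,:)\,\hat{U}\hat{U}^{-1}_{*}(\hat{\mathcal{I}}_{*},:)\hat{J}_{*}$ together with the analogous expression for $e'_{i}Z_{*}\mathcal{P}_{*}$, and split $\hat{Z}_{*}-Z_{*}\mathcal{P}_{*}$ by the triangle inequality into a term proportional to $\varpi$, a term proportional to the corner deviation of Lemma \ref{boundCDCMM} amplified by $\|\hat{U}^{-1}_{*}(\hat{\mathcal{I}}_{*},:)\|\,\|\hat{J}_{*}\|$, and a term $\|e'_{i}Y_{*}\|\,\|\hat{J}_{*}-\mathcal{P}'_{*}J_{*}\mathcal{P}_{*}\|$. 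The amplification $\|\hat{U}^{-1}_{*}(\hat{\mathcal{I}}_{*},:)\|$ is handled through $\lambda_{K}(U_{*}(\mathcal{I},:)U'_{*}(\mathcal{I},:))\ge\theta^{2}_{\mathrm{min}}\kappa^{-1}(\Pi'\Pi)/\theta^{2}_{\mathrm{max}}$ (Lemma H.2 of \cite{MaoSVM}) plus a Weyl-type perturbation that is dominated once the corner deviation is small. For $\|\hat{J}_{*}\|$ and $\|\hat{J}_{*}-\mathcal{P}'_{*}J_{*}\mathcal{P}_{*}\|$ I would use $\hat{J}_{*}=\sqrt{\mathrm{diag}(\hat{U}_{*}(\hat{\mathcal{I}}_{*},:)\hat{\Lambda}\hat{U}'_{*}(\hat{\mathcal{I}}_{*},:))}$: the perturbation of the argument of the square root comes from $\hat{U}_{*}(\hat{\mathcal{I}}_{*},:)$ via Lemma \ref{boundCDCMM} and from $\hat{\Lambda}$ via $\|\hat{\Lambda}-\Lambda\|\le\|A-\Omega\|\le C\sqrt{\theta_{\mathrm{max}}\|\theta\|_{1}\mathrm{log}(n)}$ (Weyl's inequality and Lemma \ref{BoundAOmegaDCMM}), with $\|\Lambda\|=\|\Omega\|=O(\theta^{2}_{\mathrm{max}}n)$; since the diagonal entries of $\mathrm{diag}(U_{*}(\mathcal{I},:)\Lambda U'_{*}(\mathcal{I},:))$ equal $\theta^{2}(i)/\|U(i,:)\|^{2}_{F}$ up to constants, the Lipschitz constant of $x\mapsto\sqrt{x}$ on the relevant range is of order $\theta_{\mathrm{max}}/(\theta_{\mathrm{min}}\sqrt{\lambda_{K}(\Pi'\Pi)})$. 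Finally I would invoke the lower bounds $\|U(i,:)\|_{F}\ge\theta_{\mathrm{min}}/(\theta_{\mathrm{max}}\sqrt{K\lambda_{1}(\Pi'\Pi)})$ and $\eta\ge\theta^{4}_{\mathrm{min}}\pi_{\mathrm{min}}/(\theta^{4}_{\mathrm{max}}K\lambda_{1}(\Pi'\Pi))$ from Lemma \ref{boundUeta}, pay $\sqrt{K}$ factors to pass between Frobenius and spectral norms, and collect the powers of $\theta_{\mathrm{max}},\theta_{\mathrm{min}},K,\kappa(\Pi'\Pi),\lambda_{1}(\Pi'\Pi),\pi_{\mathrm{min}}$ to land on the stated rate.

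The main obstacle I anticipate is the bookkeeping around $\hat{J}_{*}$: the argument of its square root is a product in which the large factor $\hat{\Lambda}$ (with $\|\hat{\Lambda}\|\asymp\theta^{2}_{\mathrm{max}}n$) multiplies small perturbations of $\hat{U}_{*}(\hat{\mathcal{I}}_{*},:)$, while the square root is taken of diagonal entries that can be as small as $O(\theta^{2}_{\mathrm{min}}/(\theta^{2}_{\mathrm{max}}\lambda_{1}(\Pi'\Pi)))$; one must show that the amplification by $\|\hat{\Lambda}\|$ and by the Lipschitz constant of $\sqrt{\cdot}$ is exactly compensated by the normalization $1/\|U(i,:)\|_{F}$ and by the intrinsic scale of $J_{*}$, so that no spurious power of $n$ survives. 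A secondary difficulty is label consistency: the permutation $\mathcal{P}_{*}$ returned by the SVM-cone step in Lemma \ref{boundCDCMM} must simultaneously align the eigenvector term (via an orthogonal $W$ with $\hat{U}\approx UW$), the inverse-corner term, and the diagonal $\hat{J}_{*}$, so the same $\mathcal{P}_{*}$ has to be threaded through all three pieces rather than introducing separate alignments for each.
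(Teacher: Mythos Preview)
Your proposal is essentially the paper's proof: the same decomposition $\hat{Z}_{*}-Z_{*}\mathcal{P}_{*}=(\hat{Y}_{*}-Y_{*}\mathcal{P}_{*})\hat{J}_{*}+Y_{*}\mathcal{P}_{*}(\hat{J}_{*}-\mathcal{P}'_{*}J_{*}\mathcal{P}_{*})$, the same reduction of the $\hat{Y}_{*}$-term to $\varpi$ plus the corner bound of Lemma~\ref{boundCDCMM} through the rotation-invariant objects $U_{*,2},\hat{U}_{*,2}$, and the same square-root/Lipschitz treatment of $\hat{J}_{*}$ with $\|\hat{\Lambda}\|\le 2\|\Omega\|$ and $\|\Omega-\tilde{A}\|\le 2\|A-\Omega\|$ (your Weyl step is equivalent). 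The only slips are bookkeeping---the correct lower bound is $J_{*}(k,k)\ge \theta^{2}_{\mathrm{min}}\sqrt{\lambda_{K}(\Pi'\Pi)}/\theta_{\mathrm{max}}$ (not $\theta_{\mathrm{min}}\sqrt{\lambda_{K}(\Pi'\Pi)}/\theta_{\mathrm{max}}$), and your ``as small as $O(\theta^{2}_{\mathrm{min}}/(\theta^{2}_{\mathrm{max}}\lambda_{1}(\Pi'\Pi)))$'' for the diagonal entries should read $\theta^{4}_{\mathrm{min}}\lambda_{K}(\Pi'\Pi)/\theta^{2}_{\mathrm{max}}$---so track those powers when you collect the final exponents.
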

\begin{proof}
For $i\in[n]$, since $Z_{*}=Y_{*}J_{*}, \hat{Z}_{*}=\hat{Y}_{*}\hat{J}_{*}$ and $J_{*}, \hat{J}_{*}$ are diagonal matrices, we have
\begin{align*} &\|e'_{i}(\hat{Z}_{*}-Z_{*}\mathcal{P}_{*})\|_{F}=\|e'_{i}(\mathrm{max}(0, \hat{Y}_{*}\hat{J}_{*})-Y_{*}J_{*}\mathcal{P}_{*})\|_{F}\leq \|e'_{i}(\hat{Y}_{*}\hat{J}_{*}-Y_{*}J_{*}\mathcal{P}_{*})\|_{F}\\
&=\|e'_{i}(\hat{Y}_{*}-Y_{*}\mathcal{P}_{*})\hat{J}_{*}+e'_{i}Y_{*}\mathcal{P}_{*}(\hat{J}_{*}-\mathcal{P}'_{*}J_{*}\mathcal{P}_{*})\|_{F}\leq\|e'_{i}(\hat{Y}_{*}-Y_{*}\mathcal{P}_{*})\|_{F}\|\hat{J}_{*}\|+\|e'_{i}Y_{*}\mathcal{P}_{*}\|_{F}\|\hat{J}_{*}-\mathcal{P}'_{*}J_{*}\mathcal{P}_{*}\|\\
&=\|e'_{i}(\hat{Y}_{*}-Y_{*}\mathcal{P}_{*})\|_{F}\|\hat{J}_{*}\|+\|e'_{i}Y_{*}\|_{F}\|\hat{J}_{*}-\mathcal{P}'_{*}J_{*}\mathcal{P}_{*}\|
=\|e'_{i}(\hat{Y}_{*}-Y_{*}\mathcal{P}_{*})\|_{F}\|\hat{J}_{*}\|+\|e'_{i}Y_{*}\|_{F}\|J_{*}-\mathcal{P}_{*}\hat{J}_{*}\mathcal{P}'_{*}\|.
\end{align*}
Therefore, the bound of $\|e'_{i}(\hat{Z}_{*}-Z_{*}\mathcal{P}_{*})\|_{F}$ can be obtained as long as we bound $\|e'_{i}(\hat{Y}_{*}-Y_{*}\mathcal{P}_{*})\|_{F}, \|\hat{J}_{*}\|, \|e'_{i}Y_{*}\|_{F}$ and $\|J_{*}-\mathcal{P}_{*}\hat{J}_{*}\mathcal{P}'_{*}\|$. We bound the four terms as below:
\begin{itemize}
  \item we bound $\|e'_{i}(\hat{Y}_{*}-Y_{*}\mathcal{P}_{*})\|_{F}$ first. Set $U_{*}(\mathcal{I},:)=B_{*},\hat{U}_{*}(\mathcal{\hat{I}}_{*},:)=\hat{B}_{*}, U_{*,2}(\mathcal{I},:)=B_{2*}, \hat{U}_{*,2}(\mathcal{\hat{I}}_{*},:)=\hat{B}_{2*}$ for convenience. For $i\in[n]$, we have
\begin{align*} &\|e'_{i}(\hat{Y}_{*}-Y_{*}\mathcal{P}_{*})\|_{F}=\|e'_{i}(\hat{U}\hat{B}'_{*}(\hat{B}_{*}\hat{B}'_{*})^{-1}-UB'_{*}(B_{*}B'_{*})^{-1}\mathcal{P}_{*})\|_{F}\\ &=\|e'_{i}(\hat{U}-U(U'\hat{U}))\hat{B}'_{*}(\hat{B}_{*}\hat{B}'_{*})^{-1}+e'_{i}(U(U'\hat{U})\hat{B}'_{*}(\hat{B}_{*}\hat{B}'_{*})^{-1}-U(U'\hat{U})(\mathcal{P}'_{*}(B_{*}B'_{*})(B'_{*})^{-1}(U'\hat{U}))^{-1})\|_{F}\\ &\leq\|e'_{i}(\hat{U}-U(U'\hat{U}))\hat{B}'_{*}(\hat{B}_{*}\hat{B}'_{*})^{-1}\|_{F}+\|e'_{i}U(U'\hat{U})(\hat{B}'_{*}(\hat{B}_{*}\hat{B}'_{*})^{-1}-(\mathcal{P}'_{*}(B_{*}B'_{*})(B'_{*})^{-1}(U'\hat{U}))^{-1})\|_{F}\\
&\leq\|e'_{i}(\hat{U}-U(U'\hat{U}))\|_{F}\|\hat{B}^{-1}_{*}\|_{F}+\|e'_{i}U(U'\hat{U})(\hat{B}'_{*}(\hat{B}_{*}\hat{B}'_{*})^{-1}-(\mathcal{P}'_{*}(B_{*}B'_{*})(B'_{*})^{-1}(U'\hat{U}))^{-1})\|_{F}\\	&\leq \sqrt{K}\|e'_{i}(\hat{U}-U(U'\hat{U}))\|_{F}/\sqrt{\lambda_{K}(\hat{B}_{*}\hat{B}'_{*})}+\|e'_{i}U(U'\hat{U})(\hat{B}^{-1}_{*}-(\mathcal{P}_{*}'B_{*}(U'\hat{U}))^{-1})\|_{F}\\
&\overset{(i)}{=}\sqrt{K}\|e'_{i}(\hat{U}\hat{U}'-UU')\hat{U}\|_{F}O(\frac{\theta_{\mathrm{max}}\sqrt{\kappa(\Pi'\Pi)}}{\theta_{\mathrm{min}}})+\|e'_{i}U(U'\hat{U})(\hat{B}^{-1}_{*}-(\mathcal{P}_{*}'B_{*}(U'\hat{U}))^{-1})\|_{F}\\
&\leq \sqrt{K}\|e'_{i}(\hat{U}\hat{U}'-UU')\|_{F}O(\frac{\theta_{\mathrm{max}}\sqrt{\kappa(\Pi'\Pi)}}{\theta_{\mathrm{min}}})+\|e'_{i}U(U'\hat{U})(\hat{B}^{-1}_{*}-(\mathcal{P}'_{*}B_{*}(U'\hat{U}))^{-1})\|_{F}\\
&\leq \sqrt{K}\varpi O(\frac{\theta_{\mathrm{max}}\sqrt{\kappa(\Pi'\Pi)}}{\theta_{\mathrm{min}}})+\|e'_{i}U(U'\hat{U})(\hat{B}^{-1}_{*}-(\mathcal{P}'_{*}B_{*}(U'\hat{U}))^{-1})\|_{F}\\	&=O(\varpi\frac{\theta_{\mathrm{max}}\sqrt{K\kappa(\Pi'\Pi)}}{\theta_{\mathrm{min}}})+\|e'_{i}U(U'\hat{U})(\hat{B}^{-1}_{*}-(\mathcal{P}'_{*}B_{*}(U'\hat{U}))^{-1})\|_{F},
\end{align*}
where we have used similar idea in the proof of Lemma VII.3 in \cite{mao2020estimating} such that apply $O(\frac{1}{\lambda_{K}(B_{*}B'_{*})})$ to estimate $\frac{1}{\lambda_{K}(\hat{B}_{*}\hat{B}'_{*})}$.

Now we aim to bound $\|e'_{i}U(U'\hat{U})(\hat{B}^{-1}_{*}-(\mathcal{P}_{*}'B_{*}(U'\hat{U}))^{-1})\|_{F}$. For convenience, set $T=U'\hat{U}, S=\mathcal{P}_{*}'B_{*}T$. We have
\begin{align}	&\|e'_{i}U(U'\hat{U})(\hat{B}^{-1}_{*}-(\mathcal{P}'_{*}B_{*}(U'\hat{U}))^{-1})\|_{F}=\|e'_{i}UTS^{-1}(S-\hat{B}_{*})\hat{B}^{-1}_{*}\|_{F}\notag\\	&\leq\|e'_{i}UTS^{-1}(S-\hat{B}_{*})\|_{F}\|\hat{B}^{-1}_{*}\|_{F}\leq\|e'_{i}UTS^{-1}(S-\hat{B}_{*})\|_{F}\frac{\sqrt{K}}{|\lambda_{K}(\hat{B}_{*})|}\notag\\	&=\|e'_{i}UTS^{-1}(S-\hat{B}_{*})\|_{F}\frac{\sqrt{K}}{\sqrt{\lambda_{K}(\hat{B}_{*}\hat{B}'_{*})}}\leq\|e'_{i}UTS^{-1}(S-\hat{B}_{*})\|_{F}O(\frac{\theta_{\mathrm{max}}\sqrt{K\kappa(\Pi'\Pi)}}{\theta_{\mathrm{min}}})\notag\\	&=\|e'_{i}UTT^{-1}B'_{*}(B_{*}B'_{*})^{-1}\mathcal{P}_{*}(S-\hat{B}_{*})\|_{F}O(\frac{\theta_{\mathrm{max}}\sqrt{K\kappa(\Pi'\Pi)}}{\theta_{\mathrm{min}}})\notag\\	&=\|e'_{i}UB'_{*}(B_{*}B'_{*})^{-1}\mathcal{P}_{*}(S-\hat{B}_{*})\|_{F}O(\frac{\theta_{\mathrm{max}}\sqrt{K\kappa(\Pi'\Pi)}}{\theta_{\mathrm{min}}})\notag\\ &=\|e'_{i}Y_{*}\mathcal{P}_{*}(S-\hat{B}_{*})\|_{F}O(\frac{\theta_{\mathrm{max}}\sqrt{K\lambda_{1}(\Pi'\Pi)}}{\theta_{\mathrm{min}}})\leq \|e'_{i}Y_{*}\|_{F}\|S-\hat{B}_{*}\|_{F}O(\frac{\theta_{\mathrm{max}}\sqrt{K\lambda_{1}(\Pi'\Pi)}}{\theta_{\mathrm{min}}})\notag\\
&\overset{\mathrm{By~Eq~}(\ref{boundYstar})}{\leq}\frac{\theta^{2}_{\mathrm{max}}\sqrt{K\lambda_{1}(\Pi'\Pi)}}{\theta^{2}_{\mathrm{min}}\lambda_{K}(\Pi'\Pi)}\mathrm{max}_{1\leq k\leq K}\|e'_{k}(S-\hat{B}_{*})\|_{F}O(\frac{\theta_{\mathrm{max}}K\sqrt{\kappa(\Pi'\Pi)}}{\theta_{\mathrm{min}}})\notag\\ &=\mathrm{max}_{1\leq k\leq K}\|e'_{k}(\hat{B}_{*}-\mathcal{P}_{*}'B_{*}U'\hat{U})\|_{F}O(\frac{\theta^{3}_{\mathrm{max}}K^{1.5}\kappa(\Pi'\Pi)}{\theta^{3}_{\mathrm{min}}\sqrt{\lambda_{K}(\Pi'\Pi)}})\notag\\
&=\mathrm{max}_{1\leq k\leq K}\|e'_{k}(\hat{B}_{*}\hat{U}'-\mathcal{P}'_{*}B_{*}U')\hat{U}\|_{F}O(\frac{\theta^{3}_{\mathrm{max}}K^{1.5}\kappa(\Pi'\Pi)}{\theta^{3}_{\mathrm{min}}\sqrt{\lambda_{K}(\Pi'\Pi)}})\notag\\ &\leq\mathrm{max}_{1\leq k\leq K}\|e'_{k}(\hat{B}_{*}\hat{U}'-\mathcal{P}'B_{*}U')\|_{F}O(\frac{\theta^{3}_{\mathrm{max}}K^{1.5}\kappa(\Pi'\Pi)}{\theta^{3}_{\mathrm{min}}\sqrt{\lambda_{K}(\Pi'\Pi)}})\notag\\
&=\mathrm{max}_{1\leq k\leq K}\|e'_{k}(\hat{B}_{2*}-\mathcal{P}_{*}'B_{2*})\|_{F}O(\frac{\theta^{3}_{\mathrm{max}}K^{1.5}\kappa(\Pi'\Pi)}{\theta^{3}_{\mathrm{min}}\sqrt{\lambda_{K}(\Pi'\Pi)}})\notag\\
&\overset{\mathrm{By~Lemma~}\ref{boundCDCMM}}{=}O(\frac{K^{4.5}\theta^{14}_{\mathrm{max}}\varpi\kappa^{4.5}(\Pi'\Pi)\lambda_{1}(\Pi'\Pi)}{\theta^{14}_{\mathrm{min}}\pi_{\mathrm{min}}})\notag.
\end{align}
Then, we have
\begin{align*}
\|e'_{i}(\hat{Y}_{*}-Y_{*}\mathcal{P}_{*})\|_{F}&\leq O(\varpi\frac{\theta_{\mathrm{max}}\sqrt{K\kappa(\Pi'\Pi)}}{\theta_{\mathrm{min}}})+\|e'_{i}U(U'\hat{U})(\hat{B}^{-1}_{*}-(\mathcal{P}'_{*}B_{*}U'\hat{U}))^{-1})\|_{F}\\
&\leq O(\varpi\frac{\theta_{\mathrm{max}}\sqrt{K\kappa(\Pi'\Pi)}}{\theta_{\mathrm{min}}})+O(\frac{K^{4.5}\theta^{14}_{\mathrm{max}}\varpi\kappa^{4.5}(\Pi'\Pi)\lambda_{1}(\Pi'\Pi)}{\theta^{14}_{\mathrm{min}}\pi_{\mathrm{min}}})\\
&=O(\frac{K^{4.5}\theta^{14}_{\mathrm{max}}\varpi\kappa^{4.5}(\Pi'\Pi)\lambda_{1}(\Pi'\Pi)}{\theta^{14}_{\mathrm{min}}\pi_{\mathrm{min}}}).
\end{align*}
  \item for $\|e'_{i}Y_{*}\|_{F}$, since $Y_{*}=UU^{-1}_{*}(\mathcal{I},:)$, we have
  \begin{align}\label{boundYstar}
        \|e'_{i}Y_{*}\|_{F}\leq \|U(i,:)\|_{F}\|U^{-1}_{*}(\mathcal{I},:)\|_{F}\leq \frac{\sqrt{K}\|U(i,:)\|_{F}}{\sqrt{\lambda_{K}(U_{*}(\mathcal{I},:)U'_{*}(\mathcal{I},:))}}\leq \frac{\theta^{2}_{\mathrm{max}}\sqrt{K\lambda_{1}(\Pi'\Pi)}}{\theta^{2}_{\mathrm{min}}\lambda_{K}(\Pi'\Pi)}.
      \end{align}
  \item for $\|\hat{J}_{*}\|$, recall that $\hat{J}_{*}=\sqrt{\mathrm{diag}(\hat{U}_{*}(\hat{I}_{*},:)\hat{\Lambda}\hat{U}'_{*}(\hat{\mathcal{I}}_{*},:))}$, we have
      \begin{align*}
      \|\hat{J}_{*}\|^{2}&=\mathrm{max}_{k\in[K]}\hat{J}^{2}_{*}(k,k)=\mathrm{max}_{k\in[K]}e'_{k}\hat{U}_{*}(\hat{I}_{*},:)\hat{\Lambda}\hat{U}'_{*}(\hat{\mathcal{I}}_{*},:)e_{k}=\mathrm{max}_{k\in[K]}\|e'_{k}\hat{U}_{*}(\hat{I}_{*},:)\hat{\Lambda}\hat{U}'_{*}(\hat{\mathcal{I}}_{*},:)e_{k}\|\\
      &\leq \mathrm{max}_{k\in[K]}\|e'_{k}\hat{U}_{*}(\hat{I}_{*},:)\|^{2}\|\hat{\Lambda}\|\leq \mathrm{max}_{k\in[K]}\|e'_{k}\hat{U}_{*}(\hat{I}_{*},:)\|^{2}_{F}\|\hat{\Lambda}\|=\|\hat{\Lambda}\|,
      \end{align*}
where we have used the fact that $\|\hat{U}_{*}(i,:)\|_{F}=1$ for $i\in[n]$ in the last equality. Since we need $\sigma_{K}(\Omega)\geq C\theta_{\mathrm{max}}\sqrt{\tilde{P}_{\mathrm{max}}n\mathrm{log}(n)}\geq C\|A-\Omega\|$ in the proof of Lemma \ref{rowwiseerrorDCMM}, we have $\|\hat{\Lambda}\|=\|A\|=\|A-\Omega+\Omega\|\leq \|A-\Omega\|+\|\Omega\|\leq \sigma_{K}(\Omega)+\|\Omega\|\leq 2\|\Omega\|=2\|\Theta \Pi \tilde{P}\Pi'\Theta\|\leq 2C\theta^{2}_{\mathrm{max}}\tilde{P}_{\mathrm{max}}\lambda_{1}(\Pi'\Pi)=O(\theta^{2}_{\mathrm{max}}\tilde{P}_{\mathrm{max}}\lambda_{1}(\Pi'\Pi))$. Then we have
\begin{align*}
\|\hat{J}_{*}\|=O(\theta_{\mathrm{max}}\sqrt{\tilde{P}_{\mathrm{max}}\lambda_{1}(\Pi'\Pi)}).
\end{align*}
\item for $\|J_{*}-\mathcal{P}_{*}\hat{J}_{*}\mathcal{P}'_{*}\|$, we provide some simple facts first: $\|\hat{\Lambda}\|=\|A\|,\|\Lambda\|=\|\Omega\|,\Omega=U\Lambda U', \tilde{A}=\hat{U}\hat{\Lambda}\hat{U}',\|\hat{U}\|=1, \|U\|=1, \|e'_{k}\mathcal{P}_{*}\hat{B}_{2*}\|=\|\hat{B}_{2*}e_{k}\|=\|e'_{k}\hat{B}_{2*}\|\leq \|e'_{k}\hat{B}_{2*}\|_{F}=1$. Since $\tilde{A}$ is the best rank $K$ approximation to $A$ in spectral norm, therefore $\|\tilde{A}-A\|\leq \|\Omega-A\|$ since $\Omega=U\Lambda U'$ with rank $K$ and $\Omega$ can also be viewed as a rank $K$ approximation to $A$. This leads to $\|\Omega-\tilde{A}\|=\|\Omega-A+A-\tilde{A}\|\leq 2\|A-\Omega\|$. By Lemma H.2 \cite{MaoSVM}, $\|B_{*}\|=\|U_{*}(\mathcal{I},:)\|=\sqrt{\lambda_{1}(U_{*}(\mathcal{I},:)U'_{*}(\mathcal{I},:))}\leq \sqrt{\kappa(\Pi'\Theta^{2}\Pi)}\leq \frac{\theta_{\mathrm{max}}\kappa^{0.5}(\Pi'\Pi)}{\theta_{\mathrm{min}}}$. $\|A\|=\|A-\Omega+\Omega\|\leq \|A-\Omega\|+\|\Omega\|\leq \sigma_{K}(\Omega)+\|\Omega\|\leq 2\|\Omega\|$ by the lower bound requirement of $\sigma_{K}(\Omega)$ in Lemma \ref{rowwiseerrorDCMM}, and we also have $\|A-\Omega\|\leq \sigma_{K}(\Omega)\leq \|\Omega\|$.
    For $k\in[K]$, let $\tau_{k}=J_{*}(k,k), \hat{\tau}_{k}=(\mathcal{P}_{*}\hat{J}_{*}\mathcal{P}'_{*})(k,k)$ for convenience. Based on the above facts and Lemma \ref{boundCDCMM}, we have
\begin{align*}
&\mathrm{max}_{k\in[K]}|\tau^{2}_{k}-\hat{\tau}^{2}_{k}|=\mathrm{max}_{k\in[K]}\|e'_{k}U_{*}(\mathcal{I},:)\Lambda U'_{*}(\mathcal{I},:)e_{k}-e'_{k}\mathcal
P_{*}\hat{U}_{*}(\hat{\mathcal{I}}_{*},:)\hat{\Lambda}\hat{U}'_{*}(\hat{\mathcal{I}}_{*},:)P'_{*}e_{k}\|\\
&=\mathrm{max}_{k\in[K]}\|e'_{k}B_{2*}U\Lambda U'B_{2*}e_{k}-e'_{k}\mathcal
P_{*}\hat{B}_{2*}\hat{U}\hat{\Lambda}\hat{U}'\hat{B}'_{2*}P'_{*}e_{k}\|\\
&\leq\|e'_{k}(B_{2*}-\mathcal{P}_{*}\hat{B}_{2*})U\Lambda U'B'_{2*}e_{k}\|+\|e'_{k}\mathcal{P}_{*}\hat{B}_{2*}(U\Lambda U'-\hat{U}\hat{\Lambda} \hat{U}')B'_{2*}e_{k}\|+\|e'_{k}\mathcal{P}_{*}\hat{B}_{2*}\hat{U}\hat{\Lambda}\hat{U}'(B'_{2*}-\hat{B}'_{2*}\mathcal{P}'_{*})e_{k}\|\\
&\leq\|e'_{k}(B_{2*}-\mathcal{P}_{*}\hat{B}_{2*})\|\|U\|\|\Lambda\|\|U'\|\|B'_{2*}e_{k}\|+\|e'_{k}\mathcal{P}_{*}\hat{B}_{2*}\|\|U\Lambda U'-\hat{U}\hat{\Lambda}\hat{U}'\|\|B'_{2*}e_{k}\|\\ &~~~+\|e'_{k}\mathcal{P}_{*}\hat{B}_{2*}\|\|\hat{U}\|\|\hat{\Lambda}\|\|\hat{U}'\|\|(B'_{2*}-\hat{B}'_{2*}\mathcal{P}'_{*})e_{k}\|\\
&\leq\|e'_{k}(B_{2*}-\mathcal{P}_{*}\hat{B}_{2*})\|\|\Lambda\|\|B'_{2*}e_{k}\|+\|U\Lambda U'-\hat{U}\hat{\Lambda} \hat{U}'\|\|B'_{2*}e_{k}\|+\|\hat{\Lambda}\|\|(B'_{2*}-\hat{B}'_{2*}\mathcal{P}'_{*})e_{k}\|\\
&=\|e'_{k}(B_{2*}-\mathcal{P}_{*}\hat{B}_{2*})\|(\|\Omega\|\|B'_{2*}e_{k}\|+\|A\|)+\|\Omega-\tilde{A}\|\|B'_{2*}e_{k}\|\\
&=\|e'_{k}(\hat{B}_{2*}-\mathcal{P}'_{*}B_{2*})\|(\|\Omega\|\|B'_{2*}e_{k}\|+\|A\|)+\|\Omega-\tilde{A}\|\|B'_{2*}e_{k}\|\\
&\leq\|e'_{k}(\hat{B}_{2*}-\mathcal{P}'_{*}B_{2*})\|(\|\Omega\|\|B'_{2*}e_{k}\|+\|A\|)+2\|A-\Omega\|\|B'_{2*}e_{k}\|\\
&=\|e'_{k}(\hat{B}_{2*}-\mathcal{P}'_{*}B_{2*})\|(\|\Omega\|\|UB'_{*}e_{k}\|+\|A\|)+2\|A-\Omega\|\|UB'_{*}e_{k}\|\\
&\leq\|e'_{k}(\hat{B}_{2*}-\mathcal{P}'_{*}B_{2*})\|(\|\Omega\|\|B'_{*}e_{k}\|+\|A\|)+2\|A-\Omega\|\|B'_{*}e_{k}\|\\
&\leq\|e'_{k}(\hat{B}_{2*}-\mathcal{P}'_{*}B_{2*})\|(\|\Omega\|\|B'_{*}e_{k}\|+2\|\Omega\|)+2\|\Omega\|\|B'_{*}e_{k}\|\\
&=\|e'_{k}(\hat{B}_{2*}-\mathcal{P}'_{*}B_{2*})\|(\|B'_{*}e_{k}\|+1)O(\theta^{2}_{\mathrm{max}}\tilde{P}_{\mathrm{max}}\lambda_{1}(\Pi'\Pi))+\|B'_{*}e_{k}\|O(\theta^{2}_{\mathrm{max}}\tilde{P}_{\mathrm{max}}\lambda_{1}(\Pi'\Pi))\\
&\leq\|e'_{k}(\hat{B}_{2*}-\mathcal{P}'_{*}B_{2*})\|(\|B_{*}\|+1)O(\theta^{2}_{\mathrm{max}}\tilde{P}_{\mathrm{max}}\lambda_{1}(\Pi'\Pi))+\|B_{*}\|O(\theta^{2}_{\mathrm{max}}\tilde{P}_{\mathrm{max}}\lambda_{1}(\Pi'\Pi))\\
&\leq\|e'_{k}(\hat{B}_{2*}-\mathcal{P}'_{*}B_{2*})\|O(\theta^{3}_{\mathrm{max}}\tilde{P}_{\mathrm{max}}\kappa^{0.5}(\Pi'\Pi)\lambda_{1}(\Pi'\Pi)/\theta_{\mathrm{min}})+O(\theta^{3}_{\mathrm{max}}\tilde{P}_{\mathrm{max}}\kappa^{0.5}(\Pi'\Pi)\lambda_{1}(\Pi'\Pi)/\theta_{\mathrm{min}})\\
&\leq\|e'_{k}(\hat{B}_{2*}-\mathcal{P}'_{*}B_{2*})\|_{F}O(\theta^{3}_{\mathrm{max}}\tilde{P}_{\mathrm{max}}\kappa^{0.5}(\Pi'\Pi)\lambda_{1}(\Pi'\Pi)/\theta_{\mathrm{min}})+O(\theta^{3}_{\mathrm{max}}\tilde{P}_{\mathrm{max}}\kappa^{0.5}(\Pi'\Pi)\lambda_{1}(\Pi'\Pi)/\theta_{\mathrm{min}})\\
&=O(\frac{K^{3}\theta^{11}_{\mathrm{max}}\varpi\kappa^{3}(\Pi'\Pi)\lambda^{1.5}_{1}(\Pi'\Pi)}{\theta^{11}_{\mathrm{min}}\pi_{\mathrm{min}}})
O(\theta^{3}_{\mathrm{max}}\tilde{P}_{\mathrm{max}}\kappa^{0.5}(\Pi'\Pi)\lambda_{1}(\Pi'\Pi)/\theta_{\mathrm{min}})\\
&~~~+O(\theta^{3}_{\mathrm{max}}\tilde{P}_{\mathrm{max}}\kappa^{0.5}(\Pi'\Pi)\lambda_{1}(\Pi'\Pi)/\theta_{\mathrm{min}})=O(\frac{K^{3}\theta^{14}_{\mathrm{max}}\tilde{P}_{\mathrm{max}}\varpi\kappa^{3.5}(\Pi'\Pi)\lambda^{2.5}_{1}(\Pi'\Pi)}{\theta^{12}_{\mathrm{min}}\pi_{\mathrm{min}}}).
\end{align*}
Recall that $J_{*}=N_{U}(\mathcal{I},\mathcal{I})\Theta(\mathcal{I},\mathcal{I})$, we have $\|J_{*}\|\leq N_{U,\mathrm{max}}\theta_{\mathrm{max}}\leq \frac{\theta^{2}_{\mathrm{max}}\sqrt{K\lambda_{1}(\Pi'\Pi)}}{\theta_{\mathrm{min}}}$ where the last inequality holds by Lemma \ref{boundUeta}. Similarly, we have $J_{*}(k,k)\geq \theta_{\mathrm{min}}\mathrm{min}_{i\in[n]}\frac{1}{\|U(i,:)\|_{F}}\geq\frac{\theta^{2}_{\mathrm{min}}\sqrt{\lambda_{K}(\Pi'\Pi)}}{\theta_{\mathrm{max}}}$ where the last inequality holds by the proof of Lemma \ref{rowwiseerrorDCMM}. Then we have
\begin{align*}
&\|J_{*}-\mathcal{P}_{*}\hat{J}_{*}\mathcal{P}'_{*}\|=\mathrm{max}_{k\in[K]}|\hat{\tau}_{k}-\tau_{k}|=\mathrm{max}_{k\in[K]}\frac{|\hat{\tau}^{2}_{k}-\tau^{2}_{k}|}{\hat{\tau}_{k}+\tau_{k}}\leq\mathrm{max}_{k\in[K]}\frac{|\hat{\tau}^{2}_{k}-\tau^{2}_{k}|}{\tau_{k}}\\
&\leq\frac{\theta_{\mathrm{max}}}{\theta^{2}_{\mathrm{min}}\sqrt{\lambda_{K}(\Pi'\Pi)}}\mathrm{max}_{k\in[K]}|\hat{\tau}^{2}_{k}-\tau^{2}_{k}|=O(\frac{K^{3}\theta^{15}_{\mathrm{max}}\tilde{P}_{\mathrm{max}}\varpi\kappa^{3.5}(\Pi'\Pi)\lambda^{2.5}_{1}(\Pi'\Pi)}{\theta^{14}_{\mathrm{min}}\pi_{\mathrm{min}}\sqrt{\lambda_{K}(\Pi'\Pi)}}).
\end{align*}
\end{itemize}
Combine the above results, we have
\begin{align*} &\|e'_{i}(\hat{Z}_{*}-Z_{*}\mathcal{P}_{*})\|_{F}\leq\|e'_{i}(\hat{Y}_{*}-Y_{*}\mathcal{P}_{*})\|_{F}\|\hat{J}_{*}\|+\|e'_{i}Y_{*}\|_{F}\|J_{*}-\mathcal{P}_{*}\hat{J}_{*}\mathcal{P}'_{*}\|\\
&\leq O(\frac{K^{4.5}\theta^{14}_{\mathrm{max}}\varpi\kappa^{4.5}(\Pi'\Pi)\lambda_{1}(\Pi'\Pi)}{\theta^{14}_{\mathrm{min}}\pi_{\mathrm{min}}})O(\theta_{\mathrm{max}}\sqrt{\tilde{P}_{\mathrm{max}}\lambda_{1}(\Pi'\Pi)})\\
&~~~+\frac{\theta^{2}_{\mathrm{max}}\sqrt{K\lambda_{1}(\Pi'\Pi)}}{\theta^{2}_{\mathrm{min}}\lambda_{K}(\Pi'\Pi)}O(\frac{K^{3}\theta^{15}_{\mathrm{max}}\tilde{P}_{\mathrm{max}}\varpi\kappa^{3.5}(\Pi'\Pi)\lambda^{2.5}_{1}(\Pi'\Pi)}{\theta^{14}_{\mathrm{min}}\pi_{\mathrm{min}}\sqrt{\lambda_{K}(\Pi'\Pi)}})=O(\frac{\theta^{15}_{\mathrm{max}}K^{4.5}\varpi\kappa^{4.5}(\Pi'\Pi)\lambda^{1.5}_{1}(\Pi'\Pi)}{\theta^{14}_{\mathrm{min}}\pi_{\mathrm{min}}}).
\end{align*}
\end{proof}
\end{document}